\documentclass{article}

\PassOptionsToPackage{round, compress}{natbib}

\usepackage[preprint]{neurips_2026}

\usepackage[utf8]{inputenc} %
\usepackage[T1]{fontenc}    %
\usepackage{hyperref}       %
\usepackage{url}            %
\usepackage{booktabs}       %
\usepackage{amsfonts}       %
\usepackage{nicefrac}       %
\usepackage{microtype}      %
\usepackage{xcolor}         %

\usepackage{xspace}
\usepackage{amsmath,amssymb} %
\usepackage{enumitem}

\usepackage{microtype}      %
\usepackage{multirow}
\usepackage[flushmargin]{footmisc}
\usepackage{pgf}
\usepackage{mathtools}
\usepackage{bbm}
\usepackage{amsthm}

\theoremstyle{plain}
\newtheorem{thm}{Theorem}[section]
\newtheorem{lemma}{Lemma}[section]
\newtheorem{asmp}{Assupmtion}
\newtheorem{definition}[thm]{Definition}
\newtheorem{theorem}[thm]{Theorem}

\newtheorem{corollary}[thm]{Corollary}

\usepackage{algorithm}
\usepackage{algpseudocode}
\usepackage{algorithmicx}

\usepackage{wrapfig}
\usepackage[flushleft]{threeparttable}
\usepackage{multirow}
\usepackage{subcaption}

\usepackage{tikz}
\usetikzlibrary{arrows.meta,calc}
\usetikzlibrary{bending} %
\tikzset{>=latex} %

\usepackage{pgfplots}
\usepgfplotslibrary{colormaps}
\pgfplotsset{compat=1.18}

\usepackage{tabularx}
\newcolumntype{Y}{>{\raggedright\arraybackslash}X}

\title{Accelerating LMO-Based Optimization via \\Implicit Gradient Transport}

\author{%
  Won-Jun Jang\qquad Si-Hyeon Lee \\
  School of Electrical Engineering\\
  Korea Advanced Institute of Science and Technology (KAIST)\\
  Daejeon, South Korea \\
  \{\texttt{wonjun}\_\texttt{jang}, \texttt{sihyeon}\}\texttt{@kaist.ac.kr}
  \\
}

\begin{document}

\maketitle

\begin{abstract}
Recent optimizers such as Lion and Muon have demonstrated strong empirical performance by normalizing gradient momentum via linear minimization oracles (LMOs). While variance reduction has been explored to accelerate LMO-based methods, it typically incurs substantial computational overhead due to additional gradient evaluations. At the same time, the theoretical understanding of LMO-based methods remains fragmented across unconstrained and constrained formulations.
Motivated by these limitations, we propose \emph{LMO-IGT}, a new class of stochastic LMO-based methods leveraging implicit gradient transport (IGT). We further introduce a unified framework for stochastic LMO-based optimization together with a new stationarity measure, the \emph{regularized support function} (RSF), which bridges gradient-norm and Frank--Wolfe-gap notions within a common framework. By evaluating stochastic gradients at transported points, LMO-IGT accelerates convergence while retaining the single-gradient-per-iteration structure of standard stochastic LMO.
Our analysis establishes that stochastic LMO achieves an iteration complexity of $\mathcal{O}(\varepsilon^{-4})$, variance-reduced LMO achieves $\mathcal{O}(\varepsilon^{-3})$ at the cost of additional gradient evaluations, and LMO-IGT achieves $\mathcal{O}(\varepsilon^{-3.5})$ using only a single stochastic gradient per iteration. Empirically, LMO-IGT consistently improves over stochastic LMO counterparts with negligible overhead. Among its instantiations, Muon-IGT achieves the strongest overall performance across evaluated settings, demonstrating that IGT provides an effective and practical acceleration mechanism for modern LMO-based optimization.
\end{abstract}

\section{Introduction}
\label{sec:introduction}
Deep learning has achieved remarkable success in recent years, increasing the need for reliable optimization methods for training large-scale neural networks. Training such models involves high-dimensional and non-convex optimization problems, making efficient and stable algorithms essential. Consequently, efficient stochastic gradient-based optimizers such as Adam~\citep{kingma2017adammethodstochasticoptimization} and AdamW~\citep{loshchilov2019decoupledweightdecayregularization} have long been the standard choice. 

Recently, new optimizers such as Lion~\citep{chen2023symbolic} and Muon~\citep{jordan2024muon} have attracted significant attention due to their strong empirical performance. A common feature of these methods is that they \textit{normalize} the gradient momentum during optimization, a property closely related to the linear minimization oracle (LMO). Following this, a line of work has focused on improving the convergence of LMO-based methods, where techniques such as variance reduction achieve faster theoretical rates~\citep{jiang2025convergence, pmlr-v267-yuan25f, chang2025convergencemuon, sfyraki2026lionsmuonsoptimizationstochastic}. However, these approaches typically require additional gradient evaluations, which can be computationally expensive in large-scale settings. This raises a natural question: can LMO-based methods achieve accelerated convergence {while avoiding the substantial computational overhead associated with variance reduction?}

From a theoretical perspective, LMO originates from the Frank--Wolfe method for constrained optimization~\citep{jiang2025convergence, sfyraki2026lionsmuonsoptimizationstochastic, pmlr-v267-pethick25a}. However, the role of LMO-type updates in modern stochastic optimization is not yet fully understood. In particular, existing analyses often rely on different stationarity measures for unconstrained and constrained problems, preventing a unified treatment across settings. This leads to a second question: can stochastic LMO-based optimization be understood within a unified framework?

We address the above questions through the following contributions.

\textbf{Low-overhead acceleration for LMO-based optimization.} To accelerate LMO-based methods while avoiding the substantial computational overhead of variance reduction, we draw inspiration from implicit gradient transport (IGT), which constructs momentum using gradients evaluated at a lookahead point~\citep{NEURIPS2019_1dba5eed, cutkosky2020momentum}. While IGT has been studied in normalized gradient methods, we extend it to LMO-based optimization. We observe that LMO updates inherently perform a form of momentum normalization, making them well-suited for IGT-style acceleration. Based on this insight, we propose a new class of methods, termed LMO-IGT, that achieves accelerated convergence without additional gradient evaluations.

\textbf{Unified framework and analysis.}
We develop a unified framework for stochastic LMO-based optimization, encompassing three classes: stochastic LMO, variance-reduced LMO (LMO-VR), and the proposed LMO-IGT. To enable a unified analysis across both unconstrained and constrained settings, we introduce a new stationarity measure, the \emph{regularized support function (RSF)}, which bridges gradient-based and Frank--Wolfe-type measures. Under this framework, we establish unified convergence results: stochastic LMO achieves an iteration complexity of $O(\epsilon^{-4})$, LMO-VR achieves $O(\epsilon^{-3})$, and LMO-IGT attains  $O(\epsilon^{-3.5})$, while incurring only modest computational overhead.

\textbf{Empirical validation.} We instantiate LMO-IGT with different LMO sets, yielding practical algorithms such as Lion-IGT and Muon-IGT. In particular, Muon-IGT achieves the strongest empirical performance without requiring extra gradient evaluations, demonstrating the effectiveness of the proposed approach in large-scale settings.

Table~\ref{tab:lmos} summarizes the three classes in our unified framework, together with representative algorithms, the associated LMO sets, and their corresponding iteration complexities.

\begin{table}[!htb]
     \small
    \caption{Unified framework for stochastic LMO-based optimization and the corresponding convergence results. For a gradient momentum matrix $g$, we write its singular value decomposition (SVD) as  $g=U\text{diag}(\sigma)V^\top$. See Section~II for the definition of the LMO and the set \(\mathcal{C}\).
}
    \begin{tabular}{ccccc}
    \toprule
        \multirow{2}{*}{Class} & \multirow{2}{*}{Algorithm} & \multirow{2}{*}{LMO set $\mathcal{C}$} &  \multirow{2}{*}{LMO$_\mathcal{C}(g)$}  & Iteration \\
          & & &  & Complexity \\
         \midrule\midrule

    \multirow{6.5}{*}{$\begin{array}{c}
         \text{Stochastic}\\
         \text{LMO}\\ \text{\footnotesize(1 gradient eval.)}
    \end{array}$}     & Normalized SGD~\citep{hazan2015beyond} & $\|\cdot\|_2$--ball & $-g/||g||$ &  \multirow{6.5}{*}{$O(\epsilon^{-4})$} \\
    \cmidrule{3-4}
           & signSGD~\citep{pmlr-v80-bernstein18a}& \multirow{3}{*}{$\|\cdot\|_\infty$--ball} & \multirow{3}{*}{$-$sign$(g)$}   \\
           & Signum~\citep{pmlr-v80-bernstein18a}&  &     \\
           & Lion~\citep{chen2023symbolic} &  &      \\
    \cmidrule{3-4}
           & Muon~\citep{jordan2024muon} & $\|\cdot\|_\text{op}$--ball$^*$ & $-UV^\top$   \\
         \midrule
    \multirow{4}{*}{$\begin{array}{c}
         \text{LMO-VR}  \\ \text{(2 gradient eval.)}
    \end{array}$}     & Lion-VR~\citep{jiang2025convergence} & $\|\cdot\|_\infty$--ball & $-$sign$(g)$ &  \multirow{4}{*}{$O(\epsilon^{-3})$} \\
    \cmidrule{3-4}
           & MARS-Shampoo~\citep{pmlr-v267-yuan25f} & \multirow{3}{*}{$\|\cdot\|_\text{op}$--ball$^*$ } &\multirow{3}{*}{$-UV^\top$}   \\
           
           & \multirow{2}{*}{Muon-VR~\citep{sfyraki2026lionsmuonsoptimizationstochastic}} & &    \\ \\
         \midrule
   \multirow{3.8}{*}{$\begin{array}{c}
         \text{LMO-IGT} \\ \text{(1 gradient eval.)}
    \end{array}$}    & NIGT~\citep{cutkosky2020momentum} &$\|\cdot\|_2$--ball & $-g/||g||$  &  \multirow{3.8}{*}{$O(\epsilon^{-3.5})$}\\
   \cmidrule{3-4}
        &  \textbf{Lion-IGT (this work)} &  $\|\cdot\|_\infty$--ball & $-$sign$(g)$ &   \\         
   \cmidrule{3-4}
        &  \textbf{Muon-IGT (this work)} &  $\|\cdot\|_\text{op}$--ball$^*$ & $-UV^\top$ &   \\         
         \bottomrule
    \end{tabular}
    \\\\
    \hspace*{\fill}$^*$ operator norm ball for matrices: $\mathcal{C}=\{X\,|\,\sigma_1(X)\le1\}$.\\
\label{tab:lmos}
\end{table}

\section{Problem Setting and Preliminaries}
\label{sec:prelim}

Let $\xi$ denote a random variable capturing the stochasticity of the oracle, and let $f(\cdot;\xi)$ denote the corresponding sample-wise loss function. We consider the stochastic first-order minimization problem
\begin{align}
\min_{w\in\mathcal{P}} F(w),
\qquad
F(w):=\mathbb{E}_{\xi}[f(w;\xi)], \label{eq:problem}
\end{align}
where \(F:\mathbb{R}^d \to \mathbb{R}\) and $f$ are non-convex, differentiable functions. We assume that the feasible set \(\mathcal{P}\subseteq\mathbb{R}^d\) is convex and compact.

We make the following assumptions:
\begin{asmp}[\(L\)-smoothness] \label{asmp:l_smooth}
The function \(F\) is \(L\)-smooth:
\[
\|\nabla F(x)-\nabla F(y)\|
\le
L\|x-y\|,
\qquad
\forall x,y\in\mathbb R^d.
\]
\end{asmp}

\begin{asmp}[Bounded variance]\label{asmp:bound_var}
    The stochastic gradients have bounded variance:
\[
\mathbb E_{\xi}\!\left[\|\nabla f(w;\xi)-\nabla F(w)\|^2\right]
\le
\sigma^2,
\qquad
\forall w\in\mathbb R^d.
\]
\end{asmp}

\begin{asmp}[Averaged smoothness]\label{asmp:avg_sample_smth}
There exists \(L>0\) such that
\[
\mathbb E_{\xi}\!\left[\|\nabla f(x;\xi)-\nabla f(y;\xi)\|^2\right]
\le
L^2\|x-y\|^2, \qquad \forall x, y\in\mathbb{R}^d.
\]
\end{asmp}

\begin{asmp}[Second-order smoothness]\label{asmp:sec_smooth}
The Hessian of \(F\) is $\rho$-Lipschitz:
\[
\|\nabla^2F(x)-\nabla^2F(y)\|_{\mathrm{op}}
\le
\rho\|x-y\|,
\qquad
\forall x,y\in\mathbb R^d.
\]
\end{asmp}

Assumptions~\ref{asmp:l_smooth} and~\ref{asmp:bound_var} are standard in stochastic optimization. Assumption~\ref{asmp:avg_sample_smth} is commonly used in variance-reduced methods, while Assumption~\ref{asmp:sec_smooth} is a standard second-order regularity condition in analyses of IGT \citep{cutkosky2020momentum}.

In the rest of this section, we introduce the necessary preliminaries.
We begin by describing the update rule of LMO-based optimization methods, followed by their Frank--Wolfe interpretation and the associated stationarity measure, the Frank--Wolfe gap. We then summarize the convergence behavior of stochastic LMO methods and discuss variance reduction as an acceleration technique. Finally, we present IGT as an alternative acceleration mechanism. {A more detailed discussion of related work is provided in Appendix~\ref{app:related_work}.}

\subsection{LMO-Based Updates}
\label{subsec:lmo-based-updates}

To solve~\eqref{eq:problem}, both normalized SGD and LMO-based optimization methods (e.g., Lion and Muon) compute, given a gradient or momentum estimate \(g\), the linear minimization oracle (LMO)
\[
\operatorname{LMO}_{\mathcal{C}}(g)
\in
\arg\min_{v\in\mathcal{C}} \langle g, v \rangle.
\]
Here, \(\mathcal{C} \subseteq \mathbb{R}^d\) is a compact  convex set containing the origin.  Equivalently, the oracle returns the point in \(\mathcal{C}\) most aligned with the descent direction \(-g\). The specific form of \(\operatorname{LMO}_{\mathcal{C}}(\cdot)\) depends on \(\mathcal{C}\). For example, if \(\mathcal{C}\) is the \(\ell_2\) norm ball, then \(\operatorname{LMO}_{\mathcal{C}}(g) = -g/\|g\|\), which recovers normalized SGD. For matrix-valued \(g\), choosing \(\mathcal{C}\) as the spectral norm ball, i.e., \(\mathcal{C} = \{X : \|X\|_\text{op} \leq 1\}\), yields updates corresponding to Muon.

At each iteration \(t = 0,1,\dots,T-1\), the parameter \(w_t\) is updated via  
\begin{align}\label{eq:lmo_update}
    w_{t+1}
=
(1 - \lambda \eta_t) w_t
+
\eta_t v_t, \qquad v_t = \operatorname{LMO}_{\mathcal{C}}(g_t),
\end{align}
where \(\lambda \geq 0\) is the weight decay parameter and \(\eta_t > 0\) is the learning rate. When \(\lambda = 0\), the problem reduces to the  unconstrained case with $\mathcal{P}=\mathbb{R}^d$. When \(\lambda > 0\), the update admits a natural constrained optimization interpretation, which we discuss next.

\subsection{Frank--Wolfe Interpretation}
\label{subsec:fw-interpretation}
The Frank--Wolfe (conditional gradient) method is designed for constrained optimization over a convex and compact set \(\mathcal{P}\). At iteration \(t\), given \(w_t\), stochastic gradient \(g_t\), and stepsize \(\gamma_t \in (0,1)\), the update is
\[
w_{t+1}
=
(1-\gamma_t) w_t
+
\gamma_t \operatorname{LMO}_{\mathcal{P}}(g_t),
\]
which preserves feasibility by forming a convex combination in \(\mathcal{P}\).

When \(\lambda > 0\), the LMO-based update in~\eqref{eq:lmo_update} can be interpreted as a Frank--Wolfe update~\citep{chen2024lion,sfyraki2026lionsmuonsoptimizationstochastic}. Since \(\mathcal{C}\) is compact and convex, the scaled set \(\mathcal{P} := \lambda^{-1}\mathcal{C}\) is also compact and convex. Defining \(s_t := \lambda^{-1} v_t\), where \(v_t = \operatorname{LMO}_{\mathcal{C}}(g_t)\), yields \(s_t = \operatorname{LMO}_{\mathcal{P}}(g_t) \in \mathcal{P}\). The update can thus be rewritten as
\[
w_{t+1}
=
(1-\lambda \eta_t) w_t
+
\eta_t v_t
=
(1-\gamma_t) w_t
+
\gamma_t s_t,
\]
with \(\gamma_t = \lambda \eta_t\), which matches the Frank--Wolfe update.

A standard stationarity measure for constrained optimization over \(\mathcal{P}\) is the Frank--Wolfe gap~\citep{jaggi2013revisiting}:
\[
G_{\mathcal{P}}(w)
:=
\sup_{u\in\mathcal{P}}
\langle -\nabla F(w),\, u - w \rangle.
\]
The gap satisfies \(G_{\mathcal{P}}(w) = 0\) if and only if \(w\) is a first-order stationary point over \(\mathcal{P}\)~\citep{lacoste2016convergence}. Moreover, its convergence to zero is equivalent to satisfying the KKT conditions~\citep{pmlr-v235-xie24e,sfyraki2026lionsmuonsoptimizationstochastic}.

\subsection{Convergence Analysis}
\label{subsec:baseline-convergence}
From the perspective of stochastic nonconvex optimization in the unconstrained setting, several recent works have analyzed the convergence of Lion/Muon-type methods under standard smoothness assumptions~\citep{dong2024convergence,jiang2025convergence,shen2025convergenceanalysismuon,chang2025convergencemuon,kim2026convergence,nagashima2026improvedconvergenceratesmuon,sato2025convergenceboundcriticalbatch} and established the rate
\begin{align}
    \frac{1}{T}\sum_{t=0}^{T-1}\|\nabla F(w_t)\|
    \le
    O(T^{-1/4}).
\end{align}
This implies an iteration complexity of \(O(\epsilon^{-4})\) to achieve \(\epsilon\)-stationarity.

An analogous result holds for constrained optimization from the Frank--Wolfe perspective~\citep{chen2024lion,sfyraki2026lionsmuonsoptimizationstochastic,pmlr-v267-pethick25a}. Using the Frank--Wolfe gap as the stationarity measure, one obtains
\begin{align}\label{eq:lmo_vr_bound}
    \frac{1}{T}\sum_{t=0}^{T-1} G_{\mathcal{P}}(w_t)
    \le
    O(T^{-1/4}),
\end{align}
which again yields \(O(\epsilon^{-4})\) iteration complexity. Thus, stochastic LMO-based methods exhibit the same convergence rate under both unconstrained and constrained formulations.

\subsection{Variance Reduction for LMO-Based Methods}
\label{subsec:variance-reduction}

Variance-reduction is a powerful tool for accelerating stochastic optimization~\citep{gower2020variancereducedmethodsmachinelearning,zhou2020stochastic,NEURIPS2019_b8002139,pmlr-v267-yuan25f,NEURIPS2018_1543843a}. When applied to Lion/Muon-type methods, it improves the iteration complexity to \(O(\epsilon^{-3})\)~\citep{jiang2025convergence,chang2025convergencemuon,sfyraki2026lionsmuonsoptimizationstochastic}. The key idea is to incorporate a STORM-like correction term~\citep{NEURIPS2019_b8002139} into the momentum update, 
\[
\nabla f(w_t;\xi_t) - \nabla f(w_{t-1};\xi_t),
\]
which reduces the variance of the stochastic gradient estimator while preserving first-order information.
However, this approach incurs additional overhead. It requires storing the previous iterate \(w_{t-1}\), which increases memory usage. Moreover, the correction term involves computing gradients at both \(w_t\) and \(w_{t-1}\) using the same data batch, effectively doubling the per-iteration gradient cost. As a result, despite improved theoretical convergence rates, variance reduction may degrade practical efficiency in large-scale settings.

\subsection{Implicit Gradient Transport}
\label{subsec:igt}

A separate line of work accelerates normalized methods without variance reduction by evaluating stochastic gradients at an extrapolated point~\citep{NEURIPS2019_1dba5eed,cutkosky2020momentum}. The key idea is to implicitly transport the momentum along the optimization trajectory using a lookahead point instead of the current iterate. For momentum parameter \(\beta_t\), implicit gradient transport (IGT) evaluates the gradient at
\[
x_{t+1}
=
w_{t+1}+\frac{\beta_t}{1-\beta_t}(w_{t+1}-w_t),
\]
and updates the momentum as
\[
m_t=\beta_t m_{t-1}+(1-\beta_t)\nabla f(x_t;\xi_t),
\qquad
w_{t+1}=w_t-\eta_t m_t.
\]
This mechanism preserves single-sample gradient evaluation while implicitly transporting the momentum along the optimization trajectory.

Earlier work analyzed this mechanism under stronger structural assumptions~\citep{NEURIPS2019_1dba5eed}. In particular, under second-order smoothness (Assumption~\ref{asmp:sec_smooth}), \citet{cutkosky2020momentum} established an improved  \(O(\epsilon^{-3.5})\) iteration complexity for \(\ell_2\)-normalized stochastic methods. However, existing IGT analyses are restricted to Euclidean normalized updates and do not extend to  general LMO-based methods or the constrained optimization setting.

\section{Main Results}\label{sec:main_results}
In this section, we develop a unified framework for stochastic LMO-based optimization and present our main theoretical results. We begin by introducing a new class of LMO-based methods, LMO-IGT, which extends IGT to general LMO-based updates. We then present a unified framework equipped with a new stationarity measure, enabling a common treatment of existing LMO-based approaches. Finally, we establish unified convergence guarantees for three classes of methods within this framework.

\subsection{LMO-IGT}
\label{subsec:lmo-igt-intro}
\begin{wrapfigure}{r}{0.55\textwidth}\vspace{-22pt}
\begin{minipage}{0.55\textwidth}
\begin{algorithm}[H]
\caption{LMO-IGT}
\label{alg:igt-lmo}
\begin{algorithmic}[1]
\Require initial point \(x_{0}=w_0\), momentum buffer \(m_{-1} = \nabla f(w_0;\xi_0)\), parameters \(\{\beta_{1,t}\}_{t\ge 0}\), \(\{\beta_{2,t}\}_{t\ge 0}\), \(\lambda\ge 0\), step sizes \(\{\eta_{1,t}\}_{t\ge 0}\), \(\{\eta_{2,t}\}_{t\ge 0}\)
\For{\(t=0,\ldots,T-1\)}
    \State \(g_t \gets \beta_{1,t}m_{t-1}+(1-\beta_{1,t})\nabla f(x_t;\xi_t)\)
    \State \(m_t \gets \beta_{2,t}m_{t-1}+(1-\beta_{2,t})\nabla f(x_t;\xi_t)\)
    \State \(v_t \gets \operatorname{LMO}_{\mathcal C}(g_t)\)
    \State \(x_{t+1} \gets (1-\lambda\eta_{1,t})w_t+\eta_{1,t} v_t\)
    \State \(w_{t+1} \gets (1-\lambda\eta_{2,t})w_t+\eta_{2,t} v_t\)
\EndFor
\end{algorithmic}
\end{algorithm}
\end{minipage}\vspace{-15pt}
\end{wrapfigure}

Algorithm~\ref{alg:igt-lmo} presents LMO-IGT. The key idea is to evaluate the stochastic gradient at a transported point \(x_t\), which serves as a lookahead approximation of the optimization trajectory, while updating the actual iterate \(w_t\) through an LMO-based step.

At each iteration, the method computes a stochastic gradient at \(x_t\) and uses it to update two momentum variables with different time scales, controlled by \(\beta_{1,t}\) and \(\beta_{2,t}\). Throughout the paper, we consider the regime \(0 \le \beta_{1,t} \le \beta_{2,t} < 1\), which corresponds to a Lion-style double-momentum design. Under this condition, the gradient momentum \(g_t\), controlled by \(\beta_{1,t}\), places more weight on the current stochastic gradient and is therefore more responsive to the local geometry, while the momentum buffer \(m_t\), controlled by \(\beta_{2,t}\), evolves more slowly and provides a more stable estimate for subsequent iterations. The gradient momentum \(g_t\) is then used to determine the update direction via the linear minimization oracle. 
The next transported point \(x_{t+1}\) and the iterate \(w_{t+1}\) are updated using the same LMO direction but potentially different step sizes. This design enables implicit gradient transport without increasing the number of gradient evaluations, thereby preserving the single-gradient-per-iteration structure of stochastic LMO methods.

For the parameter choice used in the convergence analysis, we set $\eta_{1,t}=\frac{1}{1-\beta_{2,t}}\eta_{2,t},
\qquad
t=0,1,\ldots,T-1.$ Then the auxiliary point satisfies
\[
x_{t+1}
=
w_{t+1}+\frac{\beta_{2,t}}{1-\beta_{2,t}}(w_{t+1}-w_t)
=
w_t+\frac{1}{1-\beta_{2,t}}(w_{t+1}-w_t),
\]
which is the natural LMO analogue of the IGT extrapolation in Section~\ref{subsec:igt}.

Different choices of \(\mathcal C\) yield concrete instances of the method. In particular, choosing an \(\ell_\infty\)-ball yields Lion-IGT, while choosing an operator-norm ball yields Muon-IGT, as summarized in Table~\ref{tab:lmos}.

{To compare LMO-IGT with existing stochastic LMO methods under a common stationarity measure, we introduce a unified framework in the next subsection. Within this framework, LMO-IGT achieves a faster convergence rate than stochastic LMO. Figure~\ref{fig:mom_comp} provides intuition for this improvement. In stochastic LMO, the momentum buffer $m_t = \beta_{2,t} m_{t-1} + (1 - \beta_{2,t}) \nabla f(w_t; \xi_t)$ is an exponential moving average and can therefore lag behind the true gradient $\nabla F(w_t)$. As a result, the query constructed from $m_t$ may be misaligned with the current optimization trajectory. 
LMO-IGT mitigates this lag by evaluating the stochastic gradient at a transported point $x_t$, chosen along the same LMO direction as the iterate update. Through the same momentum recursion, this leads $m_t$ to better approximate $\nabla F(w_t)$, thereby producing a more informative query $g_t$ for the LMO--without requiring an additional gradient evaluation.}

\begin{figure}[t]
\centering

\begin{subfigure}{0.46\textwidth}
\centering
\includegraphics[width=\linewidth]{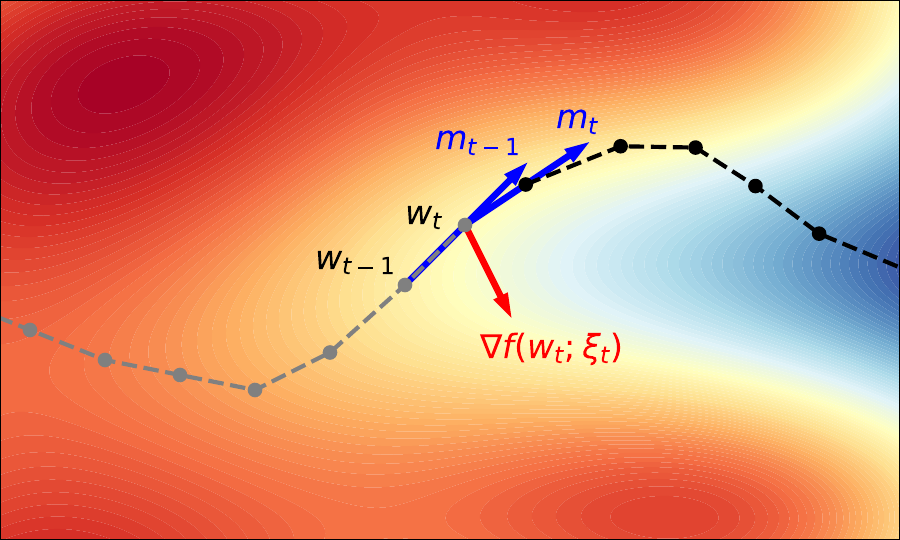}
\caption{Stochastic momentum}
\label{fig:lmo}
\end{subfigure}
\hfill
\begin{subfigure}{0.46\textwidth}
\centering
\includegraphics[width=\linewidth]{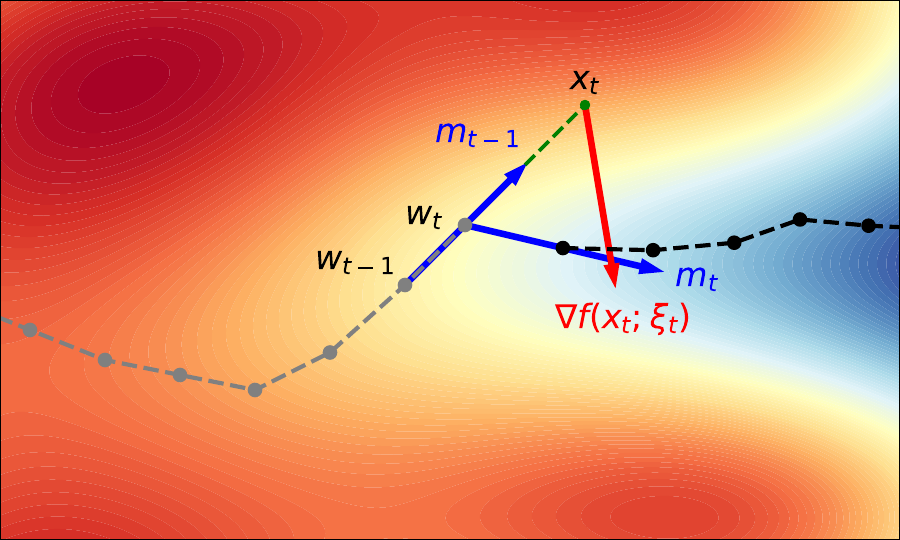}
\caption{IGT (transported gradient)}
\label{fig:lmo-igt}
\end{subfigure}

\caption{Comparison of stochastic momentum and IGT. In stochastic LMO, the momentum buffer is formed from gradients evaluated at the current iterate, which introduces a lagging effect. In LMO-IGT, the transported point \(x_t\) yields a momentum buffer that is better aligned with the optimization trajectory.}
\label{fig:mom_comp}
\end{figure}

\subsection{Unified Framework}
\label{subsec:regularized-support-framework}
To compare LMO-IGT with existing stochastic LMO methods, we introduce a unified framework along with a new stationarity measure.  
Let
\[
R:=\sup_{u,v\in\mathcal C}\|u-v\|
\]
denote the diameter of $\mathcal C$. In the constrained case with $\lambda>0$, the feasible region induced by the LMO update is $\mathcal{P}=\lambda^{-1}\mathcal{C}$, and the Frank--Wolfe gap $G_{\mathcal{P}}(w)$ serves as a natural stationarity measure. However, this quantity scales as $\lambda^{-1}R\|\nabla F(w)\|$ and thus  diverges as $\lambda\to 0$. To treat the constrained and unconstrained regimes in a unified manner, we introduce the following regularized support function.

\begin{definition}[Regularized support function]
\label{def:rsf}
For $w\in\mathbb R^d$ and $\lambda\ge 0$, define
\[
\Psi_{\mathcal C,\lambda}(w)
:=
\sup_{v\in\mathcal C}\langle -\nabla F(w),\,v-\lambda w\rangle.
\]
\end{definition}
When $\lambda>0$ and $\mathcal P=\lambda^{-1}\mathcal C$, this reduces to a scaled Frank--Wolfe gap:
\[
\Psi_{\mathcal C,\lambda}(w)=\lambda G_{\mathcal P}(w).
\]
When $\lambda=0$, it becomes the support function~\citep{fenchel1934theorie, gardner1995geometric, schneider2013convex} of $\mathcal C$ evaluated at $-\nabla F(w)$:
\[
\Psi_{\mathcal C,0}(w)=h_{\mathcal C}(-\nabla F(w))
:=\sup_{v\in\mathcal C}\langle -\nabla F(w),v\rangle.
\]
For centrally symmetric sets $\mathcal C$, including all norm balls considered in this paper, this coincides with the dual norm induced by $\mathcal C$. In particular, if $\mathcal C$ is a Euclidean ball with diameter $R$, then $\Psi_{\mathcal C,0}(w)=\frac{R}{2}\|\nabla F(w)\|.$
More generally, $\Psi_{\mathcal C,\lambda}(w)\le R\|\nabla F(w)\|.$ Thus, RSF interpolates naturally between gradient-norm stationarity and Frank--Wolfe-gap stationarity. Moreover, \(\Psi_{\mathcal C,\lambda}(w) = 0\) characterizes stationarity in both regimes: it is equivalent to the KKT conditions when \(\lambda > 0\), and to \(\nabla F(w) = 0\) when \(\lambda = 0\). %

To analyze existing LMO-based algorithms in a unified manner, we consider the following generalized update framework.

\begin{algorithm}[H]
\caption{Unified Framework for Stochastic LMO-Based Optimization}
\label{alg:unified-lmo-framework}
\begin{algorithmic}[1]
\Require initial point \(x_{-1}=x_{0}=w_0\), initial momentum \(m_{-1}=\nabla f(w_0;\xi_0)\), parameters \(\{\alpha_{1,t}\}_{t\ge 0}\), \(\{\alpha_{2,t}\}_{t\ge 0}\), \(\{\beta_{1,t}\}_{t\ge 0}\), \(\{\beta_{2,t}\}_{t\ge 0}\), \(\lambda\ge 0\), and step sizes \(\{\eta_{1,t}\}_{t\ge 0}\), \(\{\eta_{2,t}\}_{t\ge 0}\)
\For{\(t=0,\ldots,T-1\)}
    \State \(g_t \gets \beta_{1,t}m_{t-1}+(1-\beta_{1,t})\nabla f(x_t;\xi_t)+\alpha_{1,t}\bigl(\nabla f(x_t;\xi_t)-\nabla f(x_{t-1};\xi_t)\bigr)\)
    \State \(m_t \gets \beta_{2,t}m_{t-1}+(1-\beta_{2,t})\nabla f(x_t;\xi_t)+\alpha_{2,t}\bigl(\nabla f(x_t;\xi_t)-\nabla f(x_{t-1};\xi_t)\bigr)\)
    \State \(v_t \gets \operatorname{LMO}_{\mathcal C}(g_t)\)
    \State \(x_{t+1} \gets (1-\lambda\eta_{1,t})w_t+\eta_{1,t} v_t\)
    \State \(w_{t+1} \gets (1-\lambda\eta_{2,t})w_t+\eta_{2,t} v_t\)
\EndFor
\end{algorithmic}
\end{algorithm}

Algorithm~\ref{alg:unified-lmo-framework} presents a unified framework for stochastic LMO-based optimization, written in terms of stochastic gradients evaluated at the current point and explicitly exposing the shared LMO update structure. The parameters \(\alpha_{1,t}\) and \(\alpha_{2,t}\) capture variance-reduction effects through gradient differences, while \(\beta_{1,t}\) and \(\beta_{2,t}\) control the momentum dynamics.

{Stochastic LMO, LMO-VR, and the proposed LMO-IGT arise as special cases of this framework. In particular, setting $\alpha_{1,t}=\alpha_{2,t}=0$ and $\eta_{1,t}=\eta_{2,t}$ recovers stochastic LMO; allowing nonzero $\alpha_{1,t}$ and/or $\alpha_{2,t}$ with $\eta_{1,t}=\eta_{2,t}$ recovers LMO-VR; and setting $\alpha_{1,t}=\alpha_{2,t}=0$ with $\eta_{1,t}\neq \eta_{2,t}$ recovers LMO-IGT.}

\subsection{Unified Convergence Analysis}
\subsubsection{Stochastic LMO}
\label{subsec:vanilla-nesterov-results}

We begin with the stochastic LMO class. In Algorithm~\ref{alg:unified-lmo-framework}, this corresponds to setting \(\alpha_{1,t}=\alpha_{2,t}=0\) and \(\eta_{1,t}=\eta_{2,t}\) for all \(t=0,1,\ldots,T-1\), so that \(x_{t+1}=w_{t+1}\). %
The convergence result for stochastic LMO is given below.

\begin{thm}[Stochastic LMO]
\label{thm:vanilla-rsf}
Suppose Assumptions~\ref{asmp:l_smooth}--\ref{asmp:bound_var} hold. %
Assume that, for each $t=0,1,\ldots,T-1$, we set $\beta_{1,t}=\beta_1$, $\beta_{2,t}=\beta_2$, and $\eta_t=\eta$, where $0\le \beta_1\le \beta_2<1,$ $0\le \lambda\eta\le 1$ and $\Delta_F:=F(w_0)-F^*$ where $F^*:=\inf_wF(w)>-\infty$. Then
\begin{align}
\frac{1}{T}\sum_{t=0}^{T-1}\mathbb E\!\left[\Psi_{\mathcal C,\lambda}(w_t)\right]
&\le
\frac{\Delta_F}{T\eta}
+R\sigma\!\left(
(1-\beta_1)
+\beta_1\sqrt{1-\beta_2}
+\frac{\beta_1}{T(1-\beta_2)}+\frac1T
\right)
\nonumber\\
&\quad
+LR^2\eta\left(
\frac{\beta_1}{1-\beta_2}
+\frac12
\right).
\label{eq:vanilla-rsf}
\end{align}
In particular, choosing $\eta=\frac{1}{RT^{3/4}}, \beta_2=1-\frac{1}{T^{1/2}}, 1-\beta_1\in\left[\frac{1}{T^{1/2}},\frac{1}{T^{1/4}}\right]$ that {balances the convergence rate of each term in the RHS of \eqref{eq:vanilla-rsf}}
yields
$$
\frac{1}{T}\sum_{t=0}^{T-1}\mathbb E\!\left[\Psi_{\mathcal C,\lambda}(w_t)\right]
\le
O\!\left(
\frac{R(\Delta_F+\sigma+L)}{T^{1/4}}
\right)
=
O(T^{-1/4}).
$$
\end{thm}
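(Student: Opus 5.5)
The plan is a one-step descent inequality driven by $L$-smoothness, combined with a decomposition of the momentum error $g_t-\nabla F(w_t)$ into a stochastic-noise part and a drift part. First note that $w_{t+1}-w_t=\eta(v_t-\lambda w_t)$. We will also need $\|v_t-\lambda w_t\|\le R$; this follows by induction, since $0\le\lambda\eta\le1$ keeps $\lambda w_t$ in $\mathcal C$ whenever $\lambda w_0\in\mathcal C$ (or we simply assume it, as is implicit in the stated constants). With this, Assumption~\ref{asmp:l_smooth} gives
\begin{align*}
F(w_{t+1})\le F(w_t)+\eta\langle \nabla F(w_t),v_t-\lambda w_t\rangle+\tfrac{L}{2}\eta^2R^2 .
\end{align*}

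Next we bring in the RSF. Let $u_t\in\arg\max_{v\in\mathcal C}\langle-\nabla F(w_t),v\rangle$. Because $v_t$ minimizes $\langle g_t,\cdot\rangle$ over $\mathcal C$, we have $\langle g_t,v_t\rangle\le\langle g_t,u_t\rangle$. Writing $\nabla F=g_t+(\nabla F-g_t)$ then yields
\begin{align*}
\langle \nabla F(w_t),v_t-\lambda w_t\rangle\le -\Psi_{\mathcal C,\lambda}(w_t)+\langle \nabla F(w_t)-g_t,v_t-u_t\rangle\le -\Psi_{\mathcal C,\lambda}(w_t)+R\|g_t-\nabla F(w_t)\| .
\end{align*}
Telescoping over $t$ and dividing by $T\eta$ gives
\begin{align*}
\frac1T\sum_t\mathbb E\Psi\le \frac{\Delta_F}{T\eta}+\frac{R}{T}\sum_t\mathbb E\|g_t-\nabla F(w_t)\|+\frac{LR^2\eta}{2},
\end{align*}
which already accounts for the first term and the $\tfrac12$ term of \eqref{eq:vanilla-rsf}.

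The main work is bounding $\mathbb E\|g_t-\nabla F(w_t)\|$. Set $\epsilon_t:=\nabla f(w_t;\xi_t)-\nabla F(w_t)$. Then
\begin{align*}
g_t-\nabla F(w_t)=\beta_1\bigl(m_{t-1}-\nabla F(w_{t-1})\bigr)+\beta_1\bigl(\nabla F(w_{t-1})-\nabla F(w_t)\bigr)+(1-\beta_1)\epsilon_t .
\end{align*}
The last term contributes $(1-\beta_1)\sigma$ by Jensen's inequality, and the middle term contributes at most $\beta_1LR\eta$. For the momentum error, unroll
\begin{align*}
m_{t}-\nabla F(w_t)=\beta_2\bigl(m_{t-1}-\nabla F(w_{t-1})\bigr)+\beta_2\bigl(\nabla F(w_{t-1})-\nabla F(w_t)\bigr)+(1-\beta_2)\epsilon_t
\end{align*}
down to $m_{-1}-\nabla F(w_0)=\epsilon_0$; the index bookkeeping at $t=0$ is where the $1/T$-type terms come from. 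This produces three contributions:
\begin{itemize}
\item a martingale sum $(1-\beta_2)\sum_k\beta_2^{t-k}\epsilon_k$, whose second moment is at most $(1-\beta_2)^2\sigma^2/(1-\beta_2^2)\le(1-\beta_2)\sigma^2$ by orthogonality of the $\epsilon_k$, giving $\sigma\sqrt{1-\beta_2}$;
\item a geometrically decaying initial term $\beta_2^{t+1}\epsilon_0$, which after averaging over $t$ gives $\sigma/(T(1-\beta_2))$;
\item a drift sum bounded by $\sum_k\beta_2^{t-k}LR\eta\le LR\eta/(1-\beta_2)$.
\end{itemize}
Multiplying by $\beta_1$ and $R$, and adding the separate $t=0$ term that gives $R\sigma/T$, reproduces every term of \eqref{eq:vanilla-rsf}. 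The $\beta_1 LR\eta$ piece is absorbed into $LR^2\eta\beta_1/(1-\beta_2)$, though this depends on how the drift from $w_{t-1}$ to $w_t$ is indexed; the cleanest route is to compare $m_{t-1}$ directly with $\nabla F(w_t)$.

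The main obstacle is this bookkeeping, specifically the exact constants and the handling of the $t=0$ initialization. The cleanest approach is to define $e_t:=m_t-\nabla F(w_{t+1})$ and run the recursion on $e_t$.

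Finally, plugging in $\eta=1/(RT^{3/4})$, $1-\beta_2=T^{-1/2}$ and $1-\beta_1\le T^{-1/4}$ gives the following orders:
\begin{itemize}
\item $\Delta_F/(T\eta)=R\Delta_F T^{-1/4}$;
\item $\sigma$ terms: $T^{-1/4}$ at most each;
\item $LR^2\eta/(1-\beta_2)=LRT^{-1/4}$.
\end{itemize}
Hence the whole bound is $O(R(\Delta_F+\sigma+L)T^{-1/4})$.
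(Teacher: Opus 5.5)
Your proposal is correct and follows essentially the same route as the paper. The paper also combines the descent inequality from LMO optimality against a maximizer $u_t$ of $\langle -\nabla F(w_t),\cdot\rangle$ with a split of $g_t-\nabla F(w_t)$ into a martingale part (handling $t=0$ separately) and a drift part. Your re-indexing via $e_t=m_t-\nabla F(w_{t+1})$ gives exactly the paper's drift term $\beta_1\sum_{s=1}^{t}\beta_2^{t-s}\bigl(\nabla F(w_{s-1})-\nabla F(w_s)\bigr)$, which absorbs the stray $\beta_1 LR\eta$ into the stated $\beta_1/(1-\beta_2)$ constant. Your convexity induction keeping $\lambda w_t\in\mathcal C$ correctly supplies the hypothesis $w_0\in\mathcal P$ when $\lambda>0$, which the appendix version of the theorem states explicitly.
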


The above result establishes the baseline RSF convergence guarantee for the stochastic-LMO class. {The proof of Theorem~\ref{thm:vanilla-rsf} is deferred to Appendix~\ref{app:vanilla}.}

\subsubsection{LMO-VR}
\label{subsec:vr-results}
We next consider the variance-reduced LMO class. This corresponds to a specialization of Algorithm~\ref{alg:unified-lmo-framework} that sets \(\eta_{1,t}=\eta_{2,t}\) for all \(t\), yielding \(x_{t+1}=w_{t+1}\), while activating the correction parameters \(\alpha_{1,t}\) and/or \(\alpha_{2,t}\). %

The convergence result for this class is given below.
\begin{thm}[LMO-VR]
\label{thm:gvr-main}
Suppose Assumptions~\ref{asmp:l_smooth}--\ref{asmp:avg_sample_smth} hold. %
Assume that, for each $t=0,1,\ldots,T-1$, we set $\alpha_{1,t}=\alpha_1$, $\alpha_{2,t}=\alpha_2$, $\beta_{1,t}=\beta_1$, $\beta_{2,t}=\beta_2$, and $\eta_t=\eta$, where $0\le \beta_1\le \beta_2<1$ and $0\le \lambda\eta\le 1$. Let $F^*$ be any finite lower bound on $F$, and $\Delta_F:=F(w_0)-F^*.$ Then
\begin{align}
\frac{1}{T}\sum_{t=0}^{T-1}\mathbb E\big[\Psi_{\mathcal C,\lambda}(w_t)\big]
&\le
\frac{\Delta_F}{T\eta}
+R\sigma\left(
(1-\beta_1)
+\beta_1\sqrt{1-\beta_2}
+\frac{\beta_1}{T(1-\beta_2)}
+\frac{1}{T}
\right)
\nonumber\\
&\quad
+LR^2\eta\left(
|\beta_1-\alpha_1|
+\frac{\beta_1|\beta_2-\alpha_2|}{1-\beta_2}
+|\alpha_1|
+\frac{\beta_1|\alpha_2|}{\sqrt{1-\beta_2}}
+\frac12
\right).
\label{eq:gvr-main}
\end{align}
\end{thm}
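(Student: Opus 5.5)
We follow the same template as the proof of Theorem~\ref{thm:vanilla-rsf}: a one-step descent inequality for the LMO update, combined with a bound on the error between the LMO query $g_t$ and the true gradient $\nabla F(w_t)$. Since $x_t=w_t$ here, we write every gradient at $w_t$.

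\textbf{Descent step.} By $L$-smoothness, and since $w_{t+1}-w_t=\eta(v_t-\lambda w_t)$ with $v_t,\lambda w_t\in\mathcal C$, we get
\[
F(w_{t+1})\le F(w_t)+\eta\langle\nabla F(w_t),v_t-\lambda w_t\rangle+\tfrac{L}{2}\eta^2R^2 .
\]
Here we use that $\lambda w_t\in\mathcal C$ is preserved by induction from the convex-combination form of the update, assuming $\lambda w_0\in\mathcal C$; for $\lambda=0$ this is trivial. Let $\epsilon_t:=g_t-\nabla F(w_t)$ and let $u_t$ attain the supremum in $\Psi_{\mathcal C,\lambda}(w_t)$. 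Since $v_t$ minimizes $\langle g_t,\cdot\rangle$ over $\mathcal C$,
\[
\langle\nabla F(w_t),v_t-\lambda w_t\rangle\le -\Psi_{\mathcal C,\lambda}(w_t)+\langle\epsilon_t,u_t-v_t\rangle\le -\Psi_{\mathcal C,\lambda}(w_t)+R\|\epsilon_t\| .
\]
Telescoping, dividing by $T\eta$ and taking expectations gives
\[
\frac1T\sum_{t}\mathbb E\,\Psi_{\mathcal C,\lambda}(w_t)\le \frac{\Delta_F}{T\eta}+\frac{R}{T}\sum_t\mathbb E\|\epsilon_t\|+\frac{LR^2\eta}{2}.
\]
This already accounts for the first and last terms of \eqref{eq:gvr-main}.

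\textbf{Error decomposition.} Write $\zeta_t:=\nabla f(w_t;\xi_t)-\nabla F(w_t)$, $D_t:=\nabla f(w_t;\xi_t)-\nabla f(w_{t-1};\xi_t)$, $S_t:=\nabla F(w_{t-1})-\nabla F(w_t)$, and $\mu_t:=m_t-\nabla F(w_t)$. Then
\[
\epsilon_t=\beta_1\mu_{t-1}+\beta_1 S_t+(1-\beta_1)\zeta_t+\alpha_1 D_t ,
\]
\[
\mu_t=\beta_2\mu_{t-1}+\beta_2 S_t+(1-\beta_2)\zeta_t+\alpha_2 D_t .
\]
To isolate a martingale part, rewrite $\beta_1 S_t+\alpha_1 D_t=(\beta_1-\alpha_1)S_t+\alpha_1(D_t+S_t)$, where $D_t+S_t=D_t-\mathbb E_t D_t$ has conditional mean zero. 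Then $\|S_t\|\le LR\eta$, and by Assumption~\ref{asmp:avg_sample_smth}, $\mathbb E\|D_t+S_t\|\le LR\eta$. This yields the terms $|\beta_1-\alpha_1|$ and $|\alpha_1|$, plus $(1-\beta_1)\sigma$ from $\zeta_t$. Handle $\mu_t$ the same way: unroll it as
\[
\mu_t=\beta_2^{t+1}\mu_{-1}+\sum_{s\le t}\beta_2^{t-s}\Bigl[(\beta_2-\alpha_2)S_s+(1-\beta_2)\zeta_s+\alpha_2(D_s+S_s)\Bigr].
\]
The deterministic drift sum is bounded by $|\beta_2-\alpha_2|LR\eta/(1-\beta_2)$. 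The two martingale sums are bounded through Jensen and orthogonality of martingale differences: the $\zeta$ sum gives $\sqrt{(1-\beta_2)^2\sigma^2/(1-\beta_2^2)}\le\sigma\sqrt{1-\beta_2}$, and the $D+S$ sum gives $|\alpha_2|LR\eta/\sqrt{1-\beta_2^2}\le |\alpha_2|LR\eta/\sqrt{1-\beta_2}$. The initial term has $\mathbb E\|\mu_{-1}\|\le\sigma$ (here $S_0=D_0=0$ because $x_{-1}=x_0$), and averaging over $t$ gives $\sum_t\beta_2^{t}\sigma/T\le \sigma/(T(1-\beta_2))$. Multiplying by $\beta_1$ and adding the $\sigma/T$ contribution from $t=0$ reproduces every term of \eqref{eq:gvr-main}.

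\textbf{Main obstacle.} The delicate point is the martingale argument for the $\alpha_2$ term. We need $D_s+S_s$ to be a martingale difference with respect to the filtration generated by $\xi_0,\dots,\xi_{s-1}$, and its cross terms with the $\zeta_s$ sum must not spoil the bound. We avoid the cross terms by applying the triangle inequality between the three sums before using orthogonality within each sum. This also requires checking that $w_s$ and $w_{s-1}$ are measurable before $\xi_s$ is drawn, which holds because $x_s=w_s$ is determined by $\xi_0,\dots,\xi_{s-1}$.
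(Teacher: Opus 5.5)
Your proposal is correct and follows essentially the same route as the paper: the LMO descent inequality (Lemma~\ref{lem:common-descent}), then splitting $g_t-\nabla F(w_t)$ into three pieces, bounded with the triangle inequality across pieces and orthogonality within each — a noise martingale, a variance-reduction martingale built from the centered differences $D_t+S_t$, and a deterministic drift. The only differences are cosmetic: you unroll from $\mu_{-1}$ rather than $\tilde\epsilon_0$, so the extra $\sigma/T$ term is just slack, and you state the requirement $\lambda w_0\in\mathcal C$ explicitly, which the paper's Lemma~\ref{lem:common-step-geometry} also needs but the theorem statement leaves implicit.
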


\begin{corollary}
\label{cor:gvr-stochastic}
Under the assumptions of Theorem~\ref{thm:gvr-main}, suppose $0\le\alpha_1\le\beta_1, \alpha_2=\beta_2.$ Choose $\eta=\frac{1}{RT^{2/3}}, \beta_2=1-\frac{1}{T^{2/3}}, 1-\beta_1\in\left[\frac{1}{T^{2/3}},\frac{1}{T^{1/3}}\right]$ {that balances the convergence rate of each term in the RHS of \eqref{eq:gvr-main}.} Then
\[
\frac{1}{T}\sum_{t=0}^{T-1}\mathbb E\big[\Psi_{\mathcal C,\lambda}(w_t)\big]
\le
O\left(\frac{R(\Delta_F+\sigma+L)}{T^{1/3}}\right)=
O\!\left(T^{-1/3}\right).
\]
\end{corollary}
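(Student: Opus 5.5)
The plan is to substitute the stated parameter choices directly into the bound~\eqref{eq:gvr-main} of Theorem~\ref{thm:gvr-main} and check that every term is $O(T^{-1/3})$. No new analysis is needed. First I simplify the $LR^2\eta$ bracket using the structural assumptions $0\le\alpha_1\le\beta_1$ and $\alpha_2=\beta_2$. The term $\beta_1|\beta_2-\alpha_2|/(1-\beta_2)$ vanishes, and $|\beta_1-\alpha_1|+|\alpha_1|=\beta_1\le 1$. The only remaining term is $\beta_1|\alpha_2|/\sqrt{1-\beta_2}\le 1/\sqrt{1-\beta_2}$. The bracket is therefore at most $\tfrac32+(1-\beta_2)^{-1/2}$.

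Next I evaluate each term with $\eta=1/(RT^{2/3})$ and $1-\beta_2=T^{-2/3}$.
\begin{itemize}
\item The initialization term is $\Delta_F/(T\eta)=R\Delta_F T^{-1/3}$.
\item In the smoothness term, $LR^2\eta=LR\,T^{-2/3}$ and $(1-\beta_2)^{-1/2}=T^{1/3}$, so the product is $O(LR\,T^{-1/3})$.
\item For the variance bracket, the hypothesis $1-\beta_1\le T^{-1/3}$ gives $(1-\beta_1)\le T^{-1/3}$.
\item Also $\beta_1\sqrt{1-\beta_2}\le T^{-1/3}$.
\item Further, $\beta_1/(T(1-\beta_2))\le T^{2/3}/T=T^{-1/3}$.
\item Finally, $1/T\le T^{-1/3}$.
\end{itemize}
Hence the variance contribution is at most $4R\sigma T^{-1/3}$.

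Summing these pieces gives a bound of the form $R(\Delta_F+4\sigma+\tfrac52 L)T^{-1/3}=O(R(\Delta_F+\sigma+L)T^{-1/3})$, which is the claim. I should also check that the hypotheses of Theorem~\ref{thm:gvr-main} are met. The requirement $\beta_1\le\beta_2$ holds because $1-\beta_1\ge T^{-2/3}=1-\beta_2$; this is exactly why the lower end of the allowed range for $1-\beta_1$ is $T^{-2/3}$. The requirement $0\le\lambda\eta\le1$ holds for $T$ large, or by assuming $\lambda\le RT^{2/3}$, and I would state this explicitly.

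The only delicate step is the $\beta_1|\alpha_2|/\sqrt{1-\beta_2}$ term. With $\alpha_2=\beta_2$ close to $1$, this term grows like $T^{1/3}$. It is only the factor $\eta\propto T^{-2/3}$ that brings it down to the target rate. This is what pins $1-\beta_2$ at $T^{-2/3}$: a smaller value would break the $LR^2\eta$ term, and a larger value would break the $\beta_1\sqrt{1-\beta_2}$ variance term. I would verify this balance carefully, and the rest is routine arithmetic.
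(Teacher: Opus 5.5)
Your proposal is correct and follows the paper's proof essentially verbatim. You simplify the $LR^2\eta$ bracket using $\alpha_2=\beta_2$ and $0\le\alpha_1\le\beta_1$, so that it becomes $\beta_1+\beta_1\beta_2/\sqrt{1-\beta_2}+\tfrac12\le\tfrac32+T^{1/3}$, and then substitute $\eta=1/(RT^{2/3})$ and $1-\beta_2=T^{-2/3}$ term by term; your explicit checks that $\beta_1\le\beta_2$ follows from the lower end of the range for $1-\beta_1$, and that $\lambda\eta\le 1$ must hold, are small additions that the paper leaves implicit.
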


 {The proofs of Theorem~\ref{thm:gvr-main} and Corollary~\ref{cor:gvr-stochastic} are deferred to Appendix~\ref{app:gvr}.} The result shows that variance reduction improves the convergence rate from $O(T^{-1/4})$ to $O(T^{-1/3})$. This improvement arises from explicitly controlling the noise in the gradient momentum through the correction terms. However, this approach requires an additional stochastic gradient evaluation per iteration, trading increased per-iteration cost for a faster convergence rate. 

\subsubsection{LMO-IGT}
\label{subsec:igt-results}

{The LMO-IGT method in Algorithm~\ref{alg:igt-lmo} corresponds to a specialization of Algorithm~\ref{alg:unified-lmo-framework}, setting \(\alpha_{1,t}=\alpha_{2,t}=0\) for all $t=0, 1, ..., T-1$.} Compared with LMO-VR, the main advantage of LMO-IGT is that it preserves the single-gradient-per-iteration structure of stochastic LMO, while exploiting the second-order regularity in Assumption~\ref{asmp:sec_smooth} through the transported point \(x_t\). The convergence result for LMO-IGT is given below.

\begin{thm}[LMO-IGT]
\label{thm:igt-rsf}
Suppose Assumptions~\ref{asmp:l_smooth},~\ref{asmp:bound_var}, and~\ref{asmp:sec_smooth} hold. Consider Algorithm~\ref{alg:igt-lmo} and assume that, for each $t=0,1,\ldots,T-1$, we set $\beta_{1,t}=\beta_1$, $\beta_{2,t}=\beta_2$, $\eta_{2,t}=\eta$, and $\eta_{1,t}=\eta/(1-\beta_2)$, where $0\le \beta_1\le \beta_2<1,$ $0\le \lambda\eta\le 1$ and $\Delta_F:=F(w_0)-F^*$. Then
\begin{align}
\frac{1}{T}\sum_{t=0}^{T-1}
\mathbb E\!\left[\Psi_{\mathcal C,\lambda}(w_t)\right]
&\le
\frac{\Delta_F}{T\eta}
+R\sigma\left(
(1-\beta_1)
+\beta_1\sqrt{1-\beta_2}
+\frac{\beta_1}{T(1-\beta_2)}+\frac1T
\right)
\nonumber\\
&\quad
+LR^2\eta\left(
\frac{\beta_2-\beta_1}{1-\beta_2}
+\frac12
\right)
+\rho R^3\eta^2\left(
\frac{\beta_1}{1-\beta_2}
+\frac{\beta_2^2}{(1-\beta_2)^2}
\right).
\label{eq:igt-rsf}
\end{align}
\end{thm}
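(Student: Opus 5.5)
The plan is to follow the template of Theorem~\ref{thm:vanilla-rsf}: a one-step descent inequality that isolates $\Psi_{\mathcal C,\lambda}(w_t)$ and leaves an error term $R\,\mathbb E\|g_t-\nabla F(w_t)\|$. The new ingredient is a sharper bound on this momentum error, which uses the transported point $x_t$ and Assumption~\ref{asmp:sec_smooth}.

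\textbf{Step 1 (descent and reduction to the momentum error).} Write $d_t:=v_t-\lambda w_t$, so that $w_{t+1}-w_t=\eta d_t$. When $\lambda>0$, $w_t\in\lambda^{-1}\mathcal C$ by convexity (I will assume $w_0$ is feasible), so $\lambda w_t\in\mathcal C$. When $\lambda=0$, $0\in\mathcal C$. In both cases $\|d_t\|\le R$. $L$-smoothness then gives
\[
F(w_{t+1})\le F(w_t)+\eta\langle\nabla F(w_t),d_t\rangle+\tfrac{L R^2\eta^2}{2}.
\]
Let $v^\star$ attain the supremum defining $\Psi_{\mathcal C,\lambda}(w_t)$. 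Optimality of $v_t$ for $g_t$ gives $\langle g_t,v_t\rangle\le\langle g_t,v^\star\rangle$, hence
\[
\langle\nabla F(w_t),d_t\rangle\le-\Psi_{\mathcal C,\lambda}(w_t)+R\|g_t-\nabla F(w_t)\|.
\]
Summing over $t$, telescoping and dividing by $T\eta$ produces the terms $\Delta_F/(T\eta)$ and $\tfrac12 LR^2\eta$. It then remains to bound $\frac RT\sum_t\mathbb E\|g_t-\nabla F(w_t)\|$.

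\textbf{Step 2 (error recursion for $m_t$ with cancellation).} Let $\epsilon_t:=\nabla f(x_t;\xi_t)-\nabla F(x_t)$, $\Delta_t:=w_t-w_{t-1}$, $e_t:=m_t-\nabla F(w_t)$, and $H_t:=\nabla^2F(w_t)$. Since $\eta_{1}=\eta/(1-\beta_2)$, we have $x_t-w_t=\frac{\beta_2}{1-\beta_2}\Delta_t$. The error satisfies
\[
e_t=\beta_2e_{t-1}+\beta_2\bigl(\nabla F(w_{t-1})-\nabla F(w_t)\bigr)+(1-\beta_2)\bigl(\nabla F(x_t)-\nabla F(w_t)\bigr)+(1-\beta_2)\epsilon_t.
\]
I will Taylor-expand both gradient differences around $w_t$. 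The linear parts are $-\beta_2H_t\Delta_t$ and $(1-\beta_2)\frac{\beta_2}{1-\beta_2}H_t\Delta_t$, which cancel exactly; this is the IGT mechanism. By Assumption~\ref{asmp:sec_smooth}, the remainders are at most
\[
\tfrac\rho2\Bigl(\beta_2+\tfrac{\beta_2^2}{1-\beta_2}\Bigr)R^2\eta^2\lesssim\frac{\rho R^2\eta^2\beta_2^2}{1-\beta_2}.
\]
Unrolling the recursion then bounds $\mathbb E\|e_t\|$ by three contributions:
\begin{itemize}[leftmargin=*]
\item the initialization term $\beta_2^{t+1}\sigma$, which averages to $\sigma/(T(1-\beta_2))$;
\item the martingale noise sum $(1-\beta_2)\sum_s\beta_2^{t-s}\epsilon_s$, bounded in $L^2$ by $\sigma\sqrt{1-\beta_2}$ using conditional independence and Assumption~\ref{asmp:bound_var};
\item a drift term $O\bigl(\rho R^2\eta^2\beta_2^2/(1-\beta_2)^2\bigr)$.
\end{itemize}

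\textbf{Step 3 (error of $g_t$).} Similarly,
\[
g_t-\nabla F(w_t)=\beta_1e_{t-1}+\beta_1\bigl(\nabla F(w_{t-1})-\nabla F(w_t)\bigr)+(1-\beta_1)\bigl(\nabla F(x_t)-\nabla F(w_t)\bigr)+(1-\beta_1)\epsilon_t.
\]
After linearization, the first-order part is
\[
\Bigl(-\beta_1+\tfrac{(1-\beta_1)\beta_2}{1-\beta_2}\Bigr)H_t\Delta_t=\tfrac{\beta_2-\beta_1}{1-\beta_2}H_t\Delta_t,
\]
whose norm is at most $LR\eta\frac{\beta_2-\beta_1}{1-\beta_2}$ because $\|H_t\|_{\mathrm{op}}\le L$. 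The second-order remainders are again $O\bigl(\rho R^2\eta^2\beta_2^2/(1-\beta_2)^2\bigr)$. The noise term contributes $(1-\beta_1)\sigma$. The term $\beta_1e_{t-1}$ brings in Step 2 multiplied by $\beta_1$, which gives $\beta_1\sigma\sqrt{1-\beta_2}$ and $\beta_1\sigma/(T(1-\beta_2))$. Multiplying everything by $R$ and averaging over $t$ should reproduce \eqref{eq:igt-rsf}. The extra $\sigma/T$ comes from $m_{-1}=\nabla f(w_0;\xi_0)$ at $t=0$.

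\textbf{Main obstacle.} The delicate step is Step 2/3 bookkeeping. The exact first-order cancellation requires expanding every gradient difference around the same Hessian $H_t$, and the Taylor remainders must be tracked so that they collect into the specific coefficients $\rho R^3\eta^2\bigl(\frac{\beta_1}{1-\beta_2}+\frac{\beta_2^2}{(1-\beta_2)^2}\bigr)$. I would first prove a clean lemma of the form $\|\nabla F(y)-\nabla F(w)-\nabla^2F(w)(y-w)\|\le\frac\rho2\|y-w\|^2$, apply it term by term, and only afterwards sum the geometric series.
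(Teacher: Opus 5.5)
Your proposal is correct and is essentially the paper's proof. The descent lemma reduces everything to $\frac{R}{T}\sum_t\mathbb E\|g_t-\nabla F(w_t)\|$. Lemma~\ref{lem:igt-hat} then Taylor-expands around $w_t$, so that under $\eta_1=\eta/(1-\beta_2)$ the first-order drift cancels exactly in the $m_t$-error recursion. This leaves the residual $\frac{\beta_2-\beta_1}{1-\beta_2}\nabla^2F(w_t)(w_t-w_{t-1})$ in the $g_t$-error, and the rest splits into a martingale part plus a second-order drift, bounded with the same geometric sums. Your explicit assumption $w_0\in\lambda^{-1}\mathcal C$ is also silently needed by the paper's proof.

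One bookkeeping slip: $\frac{\rho}{2}\bigl(\beta_2+\frac{\beta_2^2}{1-\beta_2}\bigr)=\frac{\rho}{2}\frac{\beta_2}{1-\beta_2}$ is not $\lesssim\frac{\rho\beta_2^2}{1-\beta_2}$ when $\beta_2$ is small. Carrying the exact constants through the unrolled recursion fixes this and gives the drift bound $\frac{\rho}{2}R^2\eta^2\bigl(\frac{\beta_1}{1-\beta_2}+\frac{\beta_2^2}{(1-\beta_2)^2}\bigr)$, which is half the corresponding term in \eqref{eq:hat-eps-bound}.
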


\begin{corollary}[Stochastic regime]
\label{cor:igt-stochastic}
Under the assumptions of Theorem~\ref{thm:igt-rsf}, suppose $\sigma>0$. By choosing $\eta=\frac{1}{RT^{5/7}}, \beta_2=1-\frac{1}{T^{4/7}}, 1-\beta_1\in\left[\frac{1}{T^{4/7}},\frac{1}{T^{2/7}}\right]$  that balances the convergence rate of each term in the RHS of \eqref{eq:igt-rsf}, Algorithm~\ref{alg:igt-lmo} yields
\[
\frac{1}{T}\sum_{t=0}^{T-1}\mathbb E\!\left[\Psi_{\mathcal C,\lambda}(w_t)\right]
\le
O\!\left(
\frac{R(\Delta_F+\sigma+\rho)}{T^{2/7}}
+\frac{RL}{T^{5/7}}
\right)
=
O(T^{-2/7}).
\]
\end{corollary}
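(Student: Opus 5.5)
The corollary follows by plugging the stated parameters into the bound \eqref{eq:igt-rsf} of Theorem~\ref{thm:igt-rsf} and checking each term separately. I write $\eta=R^{-1}T^{-5/7}$, $1-\beta_2=T^{-4/7}$, and $1-\beta_1=T^{-a}$ with $a\in[2/7,4/7]$. The constraint $\beta_1\le\beta_2$ holds because $a\le 4/7$. I use the crude bounds $\beta_1,\beta_2\le 1$ and $\beta_2-\beta_1\le 1-\beta_1$. I also assume $T$ is large enough that $\lambda\eta\le 1$; since $\eta\to 0$ this only imposes a threshold on $T$, which is absorbed into the $O(\cdot)$.

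I go through the terms in order.

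\textbf{Initialization term.} We have $\Delta_F/(T\eta)=R\Delta_F T^{-2/7}$.

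\textbf{Noise terms.} The first is $R\sigma(1-\beta_1)=R\sigma T^{-a}\le R\sigma T^{-2/7}$. Next, $R\sigma\beta_1\sqrt{1-\beta_2}\le R\sigma T^{-2/7}$. Then $R\sigma\beta_1/(T(1-\beta_2))\le R\sigma T^{-3/7}$, and finally $R\sigma/T$.

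\textbf{Smoothness term.} It is bounded by
\[
LR^2\eta\Bigl(\tfrac{1-\beta_1}{1-\beta_2}+\tfrac12\Bigr)\le LR\,T^{-5/7}\bigl(T^{4/7-a}+\tfrac12\bigr).
\]
At $a=4/7$ this gives $O(RL\,T^{-5/7})$, which is the separate $L$-term in the statement. For $a$ near $2/7$, the same expression is at worst $O(RL\,T^{-3/7})$, still $o(T^{-2/7})$. So I will state the result as $O(RL\,T^{-3/7})$ in general, reducing to $RL\,T^{-5/7}$ when $1-\beta_1\asymp 1-\beta_2$. Either way it is dominated by $T^{-2/7}$.

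\textbf{Hessian term.} With $\eta^2=R^{-2}T^{-10/7}$, we have $\beta_1/(1-\beta_2)\le T^{4/7}$ and $\beta_2^2/(1-\beta_2)^2\le T^{8/7}$. Hence
\[
\rho R^3\eta^2\bigl(T^{4/7}+T^{8/7}\bigr)\le 2\rho R\,T^{-2/7}.
\]
This is exactly where the exponents $5/7$ and $4/7$ come from: the choice balances $1/(T\eta)$, $\sqrt{1-\beta_2}$, and $\eta^2/(1-\beta_2)^2$, which all land at $T^{-2/7}$.

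\textbf{Conclusion.} Summing the terms, the $\Delta_F$, $\sigma$, and $\rho$ contributions are each $O(R\,T^{-2/7})$. The $L$-contribution is of strictly lower order. This yields the claimed $O\bigl(R(\Delta_F+\sigma+\rho)T^{-2/7}+RL\,T^{-5/7}\bigr)=O(T^{-2/7})$. The hypothesis $\sigma>0$ plays no role in the inequality itself; it only marks the regime in which the $\sigma$-terms dominate and this tuning is the right one.

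\textbf{Main obstacle.} The only delicate point is the smoothness term's dependence on the range of $\beta_1$. The clean $RL\,T^{-5/7}$ form holds exactly only when $1-\beta_1\asymp T^{-4/7}$. My plan is to present the bound with $T^{4/7-a}$ kept explicit and then note that it is at most $T^{-3/7}$ uniformly over the allowed interval. This keeps the overall rate $O(T^{-2/7})$ valid for every admissible $\beta_1$.
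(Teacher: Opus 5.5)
Your proposal is correct and follows the paper's proof essentially step for step: you substitute the parameters into \eqref{eq:igt-rsf}, bound $(\beta_2-\beta_1)/(1-\beta_2)\le T^{2/7}$, and bound the Hessian term by $\rho R\,(T^{-6/7}+T^{-2/7})$. Your caveat about the $L$-term is also right: the paper's own proof ends with $RL/T^{3/7}$ rather than the $RL/T^{5/7}$ written in the statement. The latter holds only when $1-\beta_1\asymp T^{-4/7}$, and the overall $O(T^{-2/7})$ rate is unaffected either way.
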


\begin{corollary}[Deterministic regime]
\label{cor:igt-deterministic}
Under the assumptions of Theorem~\ref{thm:igt-rsf}, suppose $\sigma=0$. By choosing $\eta=\frac{1}{RT^{1/2}}, \beta_1=\beta_2=1-\frac{1}{T^{1/4}}$  that balances the convergence rate of each term in the RHS of \eqref{eq:igt-rsf}, Algorithm~\ref{alg:igt-lmo} yields
\[
\frac{1}{T}\sum_{t=0}^{T-1}\mathbb E\!\left[\Psi_{\mathcal C,\lambda}(w_t)\right]
\le
O\!\left(
\frac{R(\Delta_F+L+\rho)}{T^{1/2}}
\right)
=
O(T^{-1/2}).
\]
\end{corollary}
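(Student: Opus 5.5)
The plan is to specialize the bound \eqref{eq:igt-rsf} of Theorem~\ref{thm:igt-rsf} to $\sigma=0$ and then substitute the stated parameters. No new analysis is needed. First I check that the choice is admissible. With $\beta_1=\beta_2=:\beta=1-T^{-1/4}$ we have $0\le\beta_1\le\beta_2<1$ for every $T\ge 1$. The step sizes $\eta_{2,t}=\eta$ and $\eta_{1,t}=\eta/(1-\beta_2)$ are exactly those of the theorem. The requirement $0\le\lambda\eta\le 1$ is inherited from the standing assumptions of the corollary; it holds automatically once $T\ge(\lambda/R)^2$.

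Setting $\sigma=0$ removes the whole second group of terms in \eqref{eq:igt-rsf}, so only three terms remain.
\begin{itemize}
\item \emph{Optimization term.} $\Delta_F/(T\eta)=R\Delta_F\,T^{-1/2}$.
\item \emph{Smoothness term.} Because $\beta_1=\beta_2$, the factor $(\beta_2-\beta_1)/(1-\beta_2)$ vanishes. The term reduces to $\tfrac12 LR^2\eta=\tfrac12 LR\,T^{-1/2}$. This is the step where the deterministic regime gains over the stochastic one: equal momentum parameters cost nothing here, since there is no variance to average out.
\item \emph{Second-order term.} Using $\beta\le 1$ and $1-\beta=T^{-1/4}$, we get $\beta_1/(1-\beta_2)\le T^{1/4}$ and $\beta_2^2/(1-\beta_2)^2\le T^{1/2}$. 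Hence
\[
\rho R^3\eta^2\bigl(T^{1/4}+T^{1/2}\bigr)\le 2\rho R^3\cdot\frac{1}{R^2T}\cdot T^{1/2}=2\rho R\,T^{-1/2}.
\]
\end{itemize}

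Summing the three terms gives
\[
\frac1T\sum_{t=0}^{T-1}\mathbb E\!\left[\Psi_{\mathcal C,\lambda}(w_t)\right]\le \frac{R\,(\Delta_F+\tfrac12 L+2\rho)}{T^{1/2}}=O\!\left(\frac{R(\Delta_F+L+\rho)}{T^{1/2}}\right),
\]
which is the claimed $O(T^{-1/2})$ rate. The only delicate point is the balancing in the second-order term. The dominant factor $(1-\beta_2)^{-2}\eta^2$ must scale like $T^{-1/2}$, and this is precisely what fixes $1-\beta=T^{-1/4}$ given $\eta\propto T^{-1/2}$, where $\eta\propto T^{-1/2}$ is itself forced by the $\Delta_F/(T\eta)$ and $LR^2\eta$ terms. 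Beyond this check the argument is routine arithmetic.
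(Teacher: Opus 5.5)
Your proposal is correct and follows the paper's proof essentially line for line. You set $\sigma=0$ in \eqref{eq:igt-rsf}, use $\beta_1=\beta_2$ to kill the $(\beta_2-\beta_1)/(1-\beta_2)$ factor, and bound the remaining terms by $R\Delta_F T^{-1/2}$, $\tfrac12 RL\,T^{-1/2}$ and $\rho R(T^{-3/4}+T^{-1/2})\le 2\rho R\,T^{-1/2}$; the explicit constant and the check that $\lambda\eta\le 1$ once $T\ge(\lambda/R)^2$ are harmless additions.
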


The proofs of Theorem~\ref{thm:igt-rsf} and Corollaries~\ref{cor:igt-stochastic} and~\ref{cor:igt-deterministic} are deferred to Appendix~\ref{app:igt}.

Taken together, the three classes exhibit the hierarchy
\[
O(T^{-1/4})
\quad\text{(stochastic LMO)},\qquad
O(T^{-1/3})
\quad\text{(LMO-VR)},\qquad
O(T^{-2/7})
\quad\text{(LMO-IGT)}.
\]
Equivalently, their iteration complexities are \(O(\epsilon^{-4})\), \(O(\epsilon^{-3})\), and \(O(\epsilon^{-3.5})\), respectively. The key computational distinction is that LMO-VR achieves its improvement through a correction term with substantial per-iteration cost, whereas LMO-IGT attains faster convergence through a low-overhead extrapolation.

\section{Experiments}
\label{sec:experiments}

We evaluate the proposed IGT method on CIFAR-10~\citep{krizhevsky2009learning} (MIT License) for image classification using ResNet-18~\citep{he2016deep}. We compare eight optimizers: AdamW, NIGT, Lion, Muon, Lion-VR, Muon-VR, Lion-IGT, and Muon-IGT. AdamW serves as a standard adaptive optimization baseline, while NIGT serves as an existing Euclidean normalized IGT baseline. The Lion and Muon methods belong to the stochastic LMO class, Lion-VR and Muon-VR correspond to the LMO-VR class, and Lion-IGT and Muon-IGT instantiate the proposed LMO-IGT class. 
{Experimental details are given in Appendix~\ref{app:experimental-details}.}  %

\begin{figure}[!htb]
    \centering
    \begin{subfigure}[b]{0.49\textwidth}
        \includegraphics[width=\linewidth]{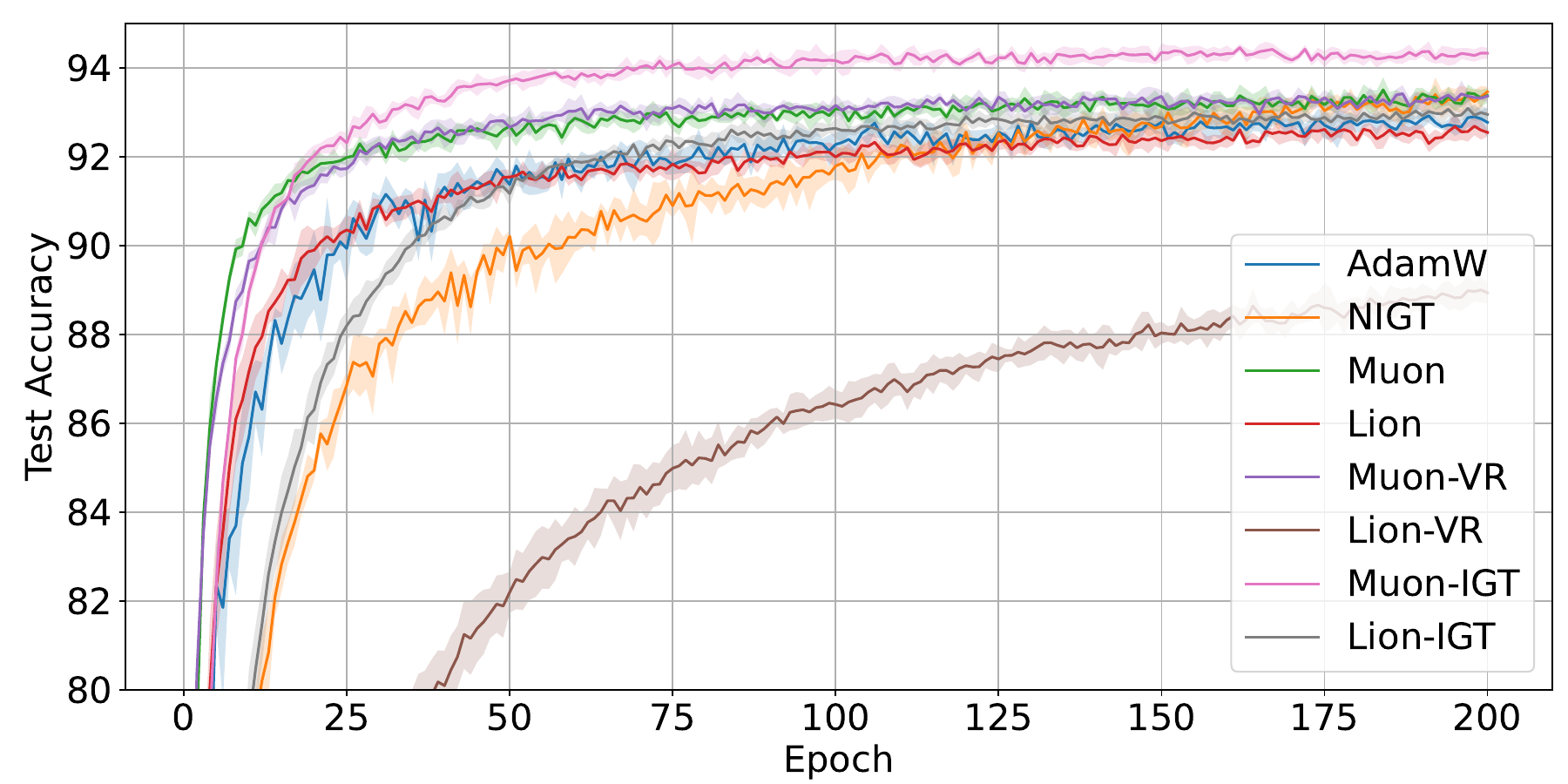}
        \caption{Test accuracy over epochs.}
        \label{fig:main_results_epoch}
    \end{subfigure}
    \begin{subfigure}[b]{0.49\textwidth}
        \includegraphics[width=\linewidth]{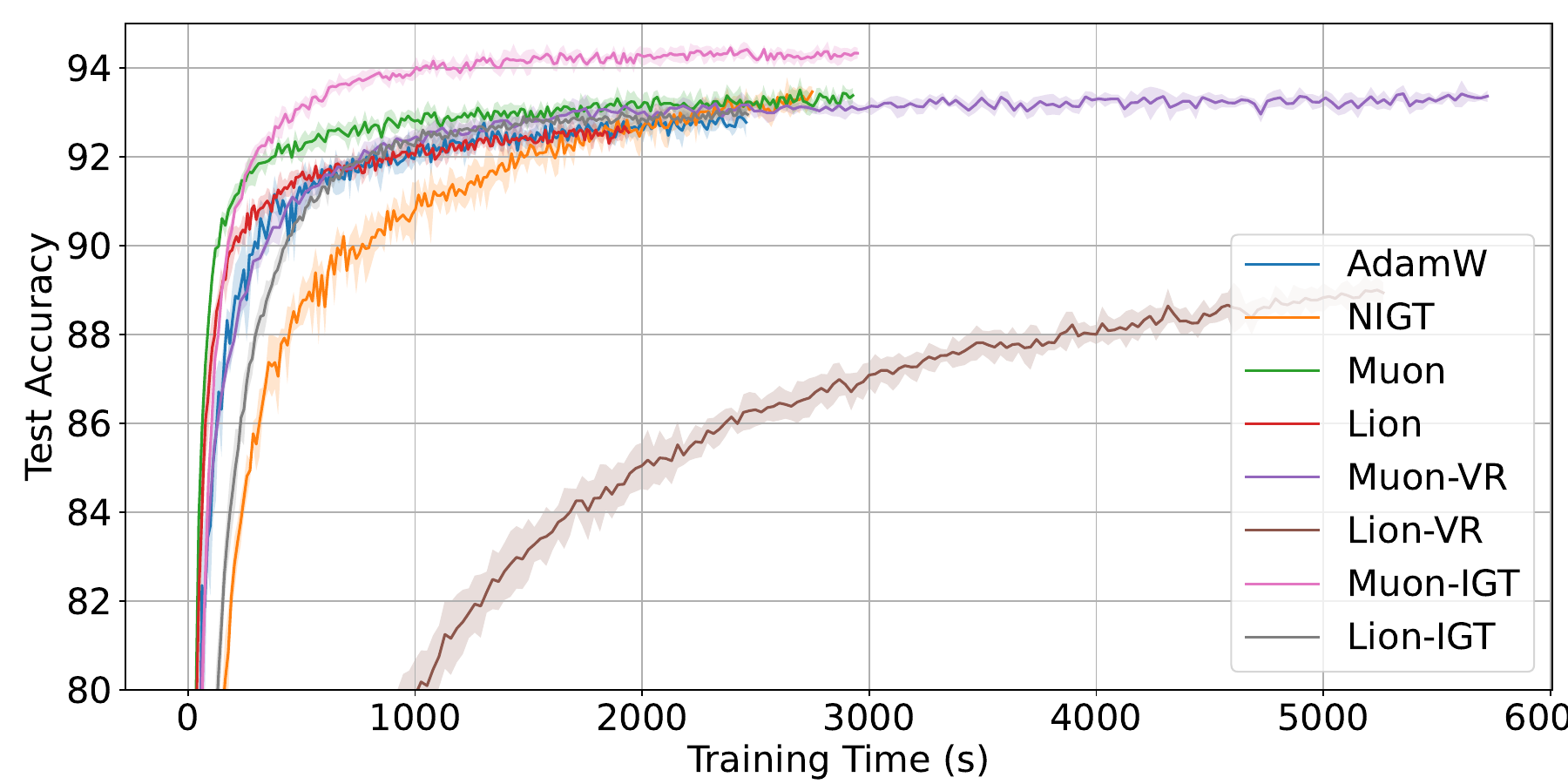}
        \caption{Test accuracy over training time.}
        \label{fig:main_results_time}
    \end{subfigure}
    \caption{Test accuracy on CIFAR-10 with ResNet-18. Each curve is averaged over five independent runs, and the shaded region denotes one standard deviation. All methods are trained for 200 epochs. In Figure~\ref{fig:main_results_time}, curves terminate at different time points as faster methods complete training earlier.}
    \label{fig:main_results} 
\end{figure}

{In Figure~\ref{fig:main_results}, each curve reports the mean test accuracy over five independent runs and the shaded region indicates one standard deviation.} 
Figure~\ref{fig:main_results_epoch} shows that Muon-IGT achieves the best final accuracy among all tested methods. Muon and NIGT form the strongest baselines, indicating that both operator-norm LMO geometry and Euclidean IGT are effective; nevertheless, Muon-IGT improves upon both by combining implicit gradient transport with Muon-style LMO updates. AdamW remains competitive but is outperformed by the strongest LMO-based methods.

The comparison with LMO-VR highlights the practical advantage of IGT. Although variance reduction improves the theoretical convergence rate, its correction term requires an additional gradient evaluation at the previous iterate, substantially increasing wall-clock time as shown in Figure~\ref{fig:main_results_time}. In contrast, IGT uses only one stochastic gradient evaluation per iteration and incurs minimal computation and memory overhead. Consequently, Lion-IGT and Muon-IGT run at nearly the same speed as their plain counterparts while achieving higher accuracy. We also observe that Lion-VR performs poorly in this setting, suggesting that  variance-reduction corrections are not uniformly beneficial for sign-based LMO updates.

Additional results, including ablation studies on the components of Muon-IGT, robustness tests over learning rate and weight decay, and experiments on other tasks such as language modeling and large-scale language modeling, are provided in Appendix~\ref{app:additional-experiments}.  These results demonstrate the robustness of Muon-IGT to hyperparameter choices and its strong performance across different tasks.

\section{Conclusion}
\label{sec:conclusion}
We studied stochastic LMO-based optimization and proposed LMO-IGT, a new class of methods that extends IGT to general LMO-based updates. We also introduced a unified framework equipped with a new stationarity measure, enabling a common analysis of stochastic LMO, LMO-VR, and LMO-IGT methods across both constrained and unconstrained settings.

Our analysis shows that LMO-IGT improves the convergence rate from \(O(\epsilon^{-4})\) to \(O(\epsilon^{-3.5})\) while preserving the single-gradient-per-iteration structure of stochastic LMO, thereby bridging the gap between standard stochastic LMO and variance-reduced methods. Empirically, LMO-IGT consistently outperforms its counterparts while maintaining comparable wall-clock efficiency, highlighting its practical advantage. 
These results demonstrate that IGT provides an effective mechanism for accelerating LMO-based optimization without sacrificing computational efficiency.

{Limitations and broader impacts are provided in Appendix~\ref{sec:limitations} and Appendix~\ref{sec:broader-impact}, respectively.}

\bibliography{main}
\bibliographystyle{plainnat}

\newpage
\appendix
\setcounter{equation}{0}
\renewcommand{\theequation}{A\arabic{equation}}

\section{Related Works}

\label{app:related_work}

Viewed historically, the literature leading to our setting has evolved along three largely independent directions: improved local scaling via adaptive moments, improved geometry via normalization, sign-based, or matrix-aware updates, and improved stochastic efficiency via recursive estimators or extrapolation-based mechanisms, including lookahead and transported gradients. A central challenge in comparing these lines is that they adopt different notions of stationarity. Some works measure convergence using (averaged) gradient norms, others rely on dual norms induced by normalized updates, while projection-free methods naturally employ the Frank--Wolfe gap. As a result, even algorithmically similar methods are often analyzed under incompatible criteria. Given this mismatch, the literature is best understood from a conceptual perspective before drawing comparisons based on formal convergence guarantees.

\paragraph{Adaptive methods and coordinate-wise preconditioning.}
Coordinate-wise adaptive preconditioning has a long history in stochastic optimization. AdaGrad adapts per-coordinate learning rates using accumulated gradient geometry \citep{duchi2011adaptive}, while RMSProp replaces this cumulative scaling with an exponential moving average of recent squared gradients \citep{tieleman2012rmsprop}. The modern baseline in deep learning is Adam, which combines momentum with coordinate-wise second-moment normalization through exponential moving averages of first and second moments \citep{kingma2017adammethodstochasticoptimization}. Subsequent work has clarified that adaptive methods are not merely drop-in improvements over SGD. In particular, \citet{reddi2018convergence} identified non-convergence issues of Adam and proposed AMSGrad, while later studies analyzed broader Adam-type recursions and Yogi-style stabilizations in nonconvex stochastic settings \citep{chen2019convergence, NEURIPS2018_90365351}. AdamW is especially relevant in our context, as decoupled weight decay separates shrinkage from the gradient update \citep{loshchilov2019decoupledweightdecayregularization}. Recent work further shows that this separation alters the effective optimization geometry and admits an implicit constrained interpretation connected to normalized steepest descent and Frank--Wolfe-type methods \citep{pmlr-v235-xie24e}. 
Recent work further shows that this separation alters the effective optimization geometry and admits an implicit constrained interpretation connected to normalized steepest descent and Frank--Wolfe-type methods \citep{pmlr-v235-xie24e}. 
Adan extends this line by incorporating a Nesterov-style momentum mechanism within an adaptive optimizer, while retaining the overall coordinate-wise adaptive structure \citep{xie2024adan}. Overall, this family serves as the primary coordinate-wise baseline against which LMO-based methods are compared.

\paragraph{From normalized gradients to sign-based updates.}
A second line of work replaces magnitude-sensitive steps with normalized or sign-based directions. Stochastic normalized gradient descent already emphasized that the gradient direction, rather than its magnitude, can be the key object of analysis \citep{hazan2015beyond}. signSGD and Signum push this idea further by using sign-based updates, retaining strong optimization performance while significantly simplifying the update rule \citep{pmlr-v80-bernstein18a}. 
Lion continues this direction by combining sign-based updates with momentum while discarding second-moment adaptation \citep{chen2023symbolic}. As a result, its behavior is more closely aligned with normalized or dual-norm steepest descent than with classical adaptive methods. This interpretation has been reinforced by recent analyses: some view Lion with decoupled weight decay as solving an $\ell_\infty$-constrained problem \citep{chen2024lion}, while others establish stochastic convergence rates, showing the standard $\mathcal{O}(\varepsilon^{-4})$ complexity under gradient- or $\ell_1$-type stationarity, with variance-reduced variants improving to $\mathcal{O}(\varepsilon^{-3})$ \citep{dong2024convergence,jiang2025convergence}. 
Overall, this line of work demonstrates that sign-based methods are not merely empirical heuristics, but instances of a broader norm-aware optimization principle.

\paragraph{Matrix-structured and geometry-aware optimization.}
A third line of work emphasizes that the relevant geometry in deep learning is often neither purely Euclidean nor purely coordinate-wise. Early structure-aware preconditioners such as Shampoo exploit tensor structure rather than flattening parameters into vectors \citep{gupta2018shampoo}. More recent work makes this perspective explicit at the level of optimization geometry. For example, \citet{bernstein2024old} reinterpret several methods as steepest descent under non-Euclidean norms, while \citet{bernstein2025modular} develop layerwise duality maps based on operator norms for general neural architectures.  
Muon fits naturally into this line, as its orthogonalized updates operate on matrix-shaped parameters and are closely tied to spectral, nuclear, or operator-norm geometry \citep{jordan2024muon}. Recent analyses establish its convergence behavior, identify regimes where it can outperform gradient descent, and interpret Muon with decoupled weight decay as solving a spectrally constrained problem \citep{shen2025convergenceanalysismuon,chang2025convergencemuon,chen2025muon}. AdaMuon further combines this geometry with element-wise adaptivity, illustrating that adaptive and matrix-aware designs can be unified \citep{si2025adamuonadaptivemuonoptimizer}. 
From a comparative perspective, this line is particularly relevant because the underlying norm geometry becomes part of the algorithm design, leading to stationarity measures that range from gradient-based criteria to constrained Frank--Wolfe gap interpretations.

\paragraph{Frank--Wolfe, conditional gradients, and LMO-based viewpoints.}
The Frank--Wolfe literature provides the natural constrained counterpart to normalized and LMO-based updates. Classical Frank--Wolfe theory relies on duality-gap certificates for projection-free convex optimization \citep{jaggi2013revisiting}, and its nonconvex extension shows that the Frank--Wolfe gap remains a meaningful stationarity measure beyond convexity \citep{lacoste2016convergence}. Building on this foundation, stochastic and variance-reduced conditional-gradient methods have progressively improved the oracle complexity of nonconvex projection-free optimization, including stochastic Frank--Wolfe, one-sample variants, SFW++, and SARAH-based methods \citep{reddi2016stochastic,zhang2020one,hassani2020stochasticconditionalgradient,beznosikov2024sarah}. 
In parallel, recent deep-learning work reinterprets modern optimizers through this lens. Norm-constrained LMO methods unify a range of update rules within a stochastic framework and demonstrate that LMO-type updates can be applied even in unconstrained settings \citep{pmlr-v267-pethick25a}. Practical instantiations often employ layer-wise norm constraints and structured updates, enabling flexible, layer-dependent behavior at the cost of additional tuning. Closely related work further shows that Lion and Muon with decoupled weight decay can be interpreted as instances of stochastic Frank--Wolfe over $\ell_\infty$- or operator-norm-type feasible sets \citep{sfyraki2026lionsmuonsoptimizationstochastic}. This line of work is central to our setting, as it provides the constrained-optimization language that connects normalized updates, modern LMO-based optimizers, and projection-free notions of stationarity within a unified perspective.

\paragraph{Variance reduction and accelerated stochastic first-order methods.}
From a rate perspective, variance reduction remains the canonical approach for improving stochastic nonconvex first-order complexity from the SGD-like $\mathcal{O}(\varepsilon^{-4})$ regime to the widely studied $\mathcal{O}(\varepsilon^{-3})$ regime. SPIDER is a central reference, and its normalized-gradient variant is particularly relevant as it already connects recursive estimators with normalized updates \citep{NEURIPS2018_1543843a}. STORM further shows that momentum-like recursions can achieve similar improvements without the large batch sizes often required by variance-reduced methods \citep{NEURIPS2019_b8002139}. This idea has recently reappeared in large-model optimization through MARS, which unifies preconditioned updates with stochastic recursive momentum and recovers AdamW-, Lion-, and Shampoo-style methods within a common framework \citep{pmlr-v267-yuan25f}. A similar pattern also appears in recent analyses of Lion and Muon, where STORM-type correction terms are incorporated into sign-based or orthogonalized momentum updates to improve the stochastic rate \citep{jiang2025convergence,chang2025convergencemuon}.  
The key takeaway is that improved iteration complexity is already achievable for modern optimizers, but typically comes at the cost of additional estimator state, gradient-difference corrections, or multiple gradient evaluations per iteration.

\paragraph{Extrapolation, lookahead, and transported gradients.}
A complementary route to acceleration is to change where or how first-order information is evaluated along the optimization trajectory. Classical Nesterov acceleration forms updates from an extrapolated sequence rather than from the current iterate alone, thereby introducing a trajectory-aware form of momentum \citep{nesterov2013introductory}. In modern deep learning, this intuition appears in several practical forms. One is approximate Nesterov momentum, where lookahead behavior is encoded algebraically without requiring a separate gradient evaluation at a distinct extrapolated point; adaptive examples in this direction include Adan \citep{xie2024adan}. Another is Lookahead, which maintains fast and slow weights and periodically updates the slow sequence toward the trajectory generated by an inner optimizer \citep{zhang2019lookahead}.  

The transported-gradient line makes this trajectory-aware viewpoint even more explicit. The original IGT work introduced the idea of correcting gradient staleness by transporting previously computed gradient information along the optimization trajectory without explicit Hessian computations \citep{NEURIPS2019_1dba5eed}. Building on this idea, \citet{cutkosky2020momentum} showed that, under bounded second-derivative assumptions, a transported-gradient modification of normalized SGD attains an improved $\mathcal{O}(\varepsilon^{-3.5})$ stochastic complexity while preserving one stochastic gradient evaluation per iteration. Conceptually, these methods are related through the use of extrapolation or synchronization along the optimization path, but they differ in how auxiliary points are used: Nesterov-style methods modify the momentum/query construction, Lookahead explicitly couples fast and slow iterate sequences, whereas IGT evaluates stochastic gradients at a transported point to reduce momentum lag. Existing analyses in this broader extrapolation family, however, remain largely restricted to Euclidean or normalized settings and do not directly address general LMO queries, operator-norm geometries, or constrained gap-based stationarity.

\paragraph{Positioning of our work.}
Against this background, our contribution goes beyond a marginal rate improvement within a fixed optimizer family. Algorithmically, we extend implicit gradient transport from Euclidean normalized updates to general LMO-based methods, thereby unifying normalized, sign-based, and matrix-structured optimizers under a common mechanism. Analytically, we compare plain stochastic LMO, variance-reduced LMO, and LMO-IGT under a single stationarity measure, rather than switching between gradient norms in unconstrained settings and Frank--Wolfe gaps in constrained ones. This distinction is crucial: prior work typically improves either the update geometry or the iteration complexity within a fixed measure, whereas our framework enables comparisons across both geometry and stationarity notions on a common scale. As a result, the $\mathcal{O}(\varepsilon^{-4})$, $\mathcal{O}(\varepsilon^{-3})$, and $\mathcal{O}(\varepsilon^{-3.5})$ guarantees can be placed on a single comparison axis.

\section{Useful Lemmas}
\label{sec:useful-lemmas}

When \(\lambda>0\), we write $\mathcal P:=\lambda^{-1}\mathcal C.$ Recall also that
\[
\Psi_{\mathcal C,\lambda}(w)
:=
\sup_{v\in\mathcal C}\langle -\nabla F(w),\,v-\lambda w\rangle,
\qquad
R:=\operatorname{diam}(\mathcal C).
\]

\begin{lemma}[Geometry of a single LMO step]
\label{lem:common-step-geometry}
Let
\[
w_{t+1}=(1-\lambda\eta_t)w_t+\eta_t v_t,
\qquad
v_t\in\mathcal C,
\]
where either \(\lambda=0\), or \(\lambda>0\) and \(w_t\in\mathcal P=\lambda^{-1}\mathcal C\). Then
\[
\|v_t-\lambda w_t\|\le R,
\qquad
\|w_{t+1}-w_t\|\le \eta_t R.
\]
\end{lemma}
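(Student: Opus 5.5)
We split according to the hypothesis. The second inequality follows from the first. Indeed, the update can be written as
\[
w_{t+1}-w_t=\eta_t\,(v_t-\lambda w_t),
\]
so $\|w_{t+1}-w_t\|=\eta_t\|v_t-\lambda w_t\|\le \eta_t R$ as soon as $\|v_t-\lambda w_t\|\le R$ is known. The whole task is therefore to bound $\|v_t-\lambda w_t\|$ by the diameter $R$ of $\mathcal C$.

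\emph{Case $\lambda=0$.} Here $\|v_t-\lambda w_t\|=\|v_t\|$. Since $\mathcal C$ is assumed to contain the origin, we have
\[
\|v_t\|=\|v_t-0\|\le \sup_{u,v\in\mathcal C}\|u-v\|=R.
\]

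\emph{Case $\lambda>0$ and $w_t\in\mathcal P=\lambda^{-1}\mathcal C$.} By definition of $\mathcal P$, the point $u:=\lambda w_t$ lies in $\mathcal C$. Both $v_t$ and $u$ are then elements of $\mathcal C$, so the definition of the diameter gives $\|v_t-\lambda w_t\|=\|v_t-u\|\le R$.

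No step is a genuine obstacle. The only point needing care is that in the constrained case the bound depends on the feasibility hypothesis $w_t\in\mathcal P$. When the lemma is applied along the iteration, this hypothesis should be maintained inductively. If $w_t\in\mathcal P$ and $0\le\lambda\eta_t\le 1$, then
\[
w_{t+1}=(1-\lambda\eta_t)w_t+\lambda\eta_t\,(\lambda^{-1}v_t)
\]
is a convex combination of two points of the convex set $\mathcal P$, so $w_{t+1}\in\mathcal P$. This is the remark to record for later use, provided the initialization satisfies $w_0\in\mathcal P$.
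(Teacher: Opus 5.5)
Your proof is correct and follows the paper's argument: in the constrained case $\lambda w_t\in\mathcal C$ together with the diameter bound, in the unconstrained case $0\in\mathcal C$, and then the identity $w_{t+1}-w_t=\eta_t(v_t-\lambda w_t)$ for the step bound. Your closing remark is a worthwhile addition: $w_t\in\mathcal P$ propagates inductively when $0\le\lambda\eta_t\le 1$ and $w_0\in\mathcal P$, and the paper's proof leaves this feasibility hypothesis implicit when the lemma is applied along the iterates.
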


\begin{proof}
If \(\lambda>0\), then \(\lambda w_t\in\mathcal C\), and since \(v_t\in\mathcal C\),
\[
\|v_t-\lambda w_t\|\le \operatorname{diam}(\mathcal C)=R.
\]
If \(\lambda=0\), then \(\|v_t-\lambda w_t\|=\|v_t\|\le R\) because \(0\in\mathcal C\).

Finally,
\[
w_{t+1}-w_t=\eta_t(v_t-\lambda w_t),
\]
hence
\[
\|w_{t+1}-w_t\|\le \eta_t R.
\qedhere
\]
\end{proof}

\begin{lemma}[Descent rule]
\label{lem:common-descent}
Suppose Assumption~\ref{asmp:l_smooth} holds. Let \(g_t\in\mathbb R^d\), let
\[
v_t\in\arg\min_{v\in\mathcal C}\langle g_t,v\rangle,
\qquad
w_{t+1}=(1-\lambda\eta_t)w_t+\eta_t v_t,
\]
and define
\[
\hat\epsilon_t:=g_t-\nabla F(w_t).
\]
Then
\begin{align}
F(w_{t+1})
&\le
F(w_t)
-\eta_t\Psi_{\mathcal C,\lambda}(w_t)
+\eta_t R\|\hat\epsilon_t\|
+\frac{L}{2}\eta_t^2R^2.
\label{eq:common-descent}
\end{align}
Consequently, if \(\eta_t=\eta\) for every \(t=0,1,\ldots,T-1\), then
\begin{align}
\frac{1}{T}\sum_{t=0}^{T-1}\mathbb E\big[\Psi_{\mathcal C,\lambda}(w_t)\big]
\le
\frac{\Delta_F}{T\eta}
+\frac{R}{T}\sum_{t=0}^{T-1}\mathbb E\|\hat\epsilon_t\|
+\frac{L}{2}R^2\eta,
\label{eq:common-avg-descent}
\end{align}
where \(\Delta_F:=F(w_0)-F^*\).
\end{lemma}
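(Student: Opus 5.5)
We prove the one-step inequality \eqref{eq:common-descent} from $L$-smoothness and the LMO optimality of $v_t$, and then telescope it to get \eqref{eq:common-avg-descent}. Throughout we use Lemma~\ref{lem:common-step-geometry}. In the case $\lambda>0$ this requires $w_t\in\mathcal P$. We will justify this by induction, assuming $w_0\in\mathcal P$: since $0\le\lambda\eta_t\le1$, the update $w_{t+1}=(1-\lambda\eta_t)w_t+\lambda\eta_t(\lambda^{-1}v_t)$ is a convex combination of two points of $\mathcal P$, so feasibility is preserved.

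\textbf{Step 1 (smoothness).} By Assumption~\ref{asmp:l_smooth}, with $w_{t+1}-w_t=\eta_t(v_t-\lambda w_t)$,
\[
F(w_{t+1})\le F(w_t)+\eta_t\langle\nabla F(w_t),v_t-\lambda w_t\rangle+\frac L2\eta_t^2\|v_t-\lambda w_t\|^2 .
\]
The last term is at most $\frac L2\eta_t^2R^2$ by Lemma~\ref{lem:common-step-geometry}.

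\textbf{Step 2 (linear term via the LMO).} Write $\nabla F(w_t)=g_t-\hat\epsilon_t$. Then
\[
\langle\nabla F(w_t),v_t-\lambda w_t\rangle=\langle g_t,v_t-\lambda w_t\rangle-\langle\hat\epsilon_t,v_t-\lambda w_t\rangle .
\]
Let $u^\star\in\mathcal C$ attain the supremum defining $\Psi_{\mathcal C,\lambda}(w_t)$; it exists because $\mathcal C$ is compact. LMO optimality gives $\langle g_t,v_t\rangle\le\langle g_t,u^\star\rangle$, so
\[
\langle g_t,v_t-\lambda w_t\rangle\le\langle g_t,u^\star-\lambda w_t\rangle=-\Psi_{\mathcal C,\lambda}(w_t)+\langle\hat\epsilon_t,u^\star-\lambda w_t\rangle .
\]
Combining, the linear term is bounded by
\[
-\Psi_{\mathcal C,\lambda}(w_t)+\langle\hat\epsilon_t,u^\star-v_t\rangle .
\]
Since $u^\star,v_t\in\mathcal C$, we have $\|u^\star-v_t\|\le R$, and Cauchy--Schwarz gives $\langle\hat\epsilon_t,u^\star-v_t\rangle\le R\|\hat\epsilon_t\|$. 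Substituting into Step 1 yields \eqref{eq:common-descent}.

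\textbf{Step 3 (telescoping).} Take $\eta_t=\eta$, rearrange \eqref{eq:common-descent} to isolate $\eta\Psi_{\mathcal C,\lambda}(w_t)$, and sum over $t=0,\dots,T-1$. Taking expectations and using $F(w_T)\ge F^*$ gives
\[
\eta\sum_{t}\mathbb E\Psi_{\mathcal C,\lambda}(w_t)\le\Delta_F+\eta R\sum_t\mathbb E\|\hat\epsilon_t\|+\frac{TL}{2}\eta^2R^2 .
\]
Dividing by $T\eta$ gives \eqref{eq:common-avg-descent}.

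\textbf{Main obstacle.} The only subtle point is Step 2. A naive split would bound $\langle\hat\epsilon_t,v_t-\lambda w_t\rangle$ and $\langle\hat\epsilon_t,u^\star-\lambda w_t\rangle$ separately and produce $2R\|\hat\epsilon_t\|$. To get the constant $R$, the two error terms must be combined before applying Cauchy--Schwarz, so that the $\lambda w_t$ terms cancel and only $u^\star-v_t$ remains. Keeping $\|v_t-\lambda w_t\|\le R$ in Step 1 also needs care, since it relies on the feasibility induction in the constrained case.
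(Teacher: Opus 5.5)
Your proposal is correct and follows essentially the same route as the paper: compare $v_t$ against a maximizer of $\langle -\nabla F(w_t),v\rangle$ over $\mathcal C$ via LMO optimality, merge the two error terms into $\langle \hat\epsilon_t,u^\star-v_t\rangle$ before applying Cauchy--Schwarz to get the constant $R$, and then telescope. Your explicit feasibility induction ($w_0\in\mathcal P$ and $0\le\lambda\eta_t\le 1$ when $\lambda>0$) is a useful addition, since the paper invokes Lemma~\ref{lem:common-step-geometry} without stating that hypothesis in the lemma itself.
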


\begin{proof}
Let
\[
\hat v_t\in\arg\max_{v\in\mathcal C}\langle -\nabla F(w_t),v\rangle.
\]
By \(L\)-smoothness of \(F\),
\begin{align*}
F(w_{t+1})
&\le
F(w_t)
+\eta_t\langle \nabla F(w_t),\,v_t-\lambda w_t\rangle
+\frac{L}{2}\eta_t^2\|v_t-\lambda w_t\|^2 \\
&\le
F(w_t)
+\eta_t\langle g_t,\,v_t-\lambda w_t\rangle
+\eta_t\langle \nabla F(w_t)-g_t,\,v_t-\lambda w_t\rangle
+\frac{L}{2}\eta_t^2R^2 \\
&\le
F(w_t)
+\eta_t\langle g_t,\,\hat v_t-\lambda w_t\rangle
+\eta_t\langle \nabla F(w_t)-g_t,\,v_t-\lambda w_t\rangle
+\frac{L}{2}\eta_t^2R^2 \\
&=
F(w_t)
+\eta_t\langle \nabla F(w_t),\,\hat v_t-\lambda w_t\rangle
+\eta_t\langle \nabla F(w_t)-g_t,\,v_t-\hat v_t\rangle
+\frac{L}{2}\eta_t^2R^2 \\
&=
F(w_t)
-\eta_t\Psi_{\mathcal C,\lambda}(w_t)
+\eta_t\langle \nabla F(w_t)-g_t,\,v_t-\hat v_t\rangle
+\frac{L}{2}\eta_t^2R^2 \\
&\le
F(w_t)
-\eta_t\Psi_{\mathcal C,\lambda}(w_t)
+\eta_t R\|\hat\epsilon_t\|
+\frac{L}{2}\eta_t^2R^2,
\end{align*}
where we used Lemma~\ref{lem:common-step-geometry} for \(\|v_t-\lambda w_t\|\le R\), the LMO optimality
\(\langle g_t,v_t\rangle\le \langle g_t,\hat v_t\rangle\), and \(\|v_t-\hat v_t\|\le R\) since both points lie in \(\mathcal C\).

Summing over \(t=0,\dots,T-1\), taking expectations, and dividing by \(T\eta\) gives \eqref{eq:common-avg-descent}.
\end{proof}

\begin{lemma}[IGT extrapolation geometry]
\label{lem:common-igt-geometry}
Suppose
\[
x_{t+1}=(1-\lambda\eta_{1,t})w_t+\eta_{1,t}v_t,
\qquad
w_{t+1}=(1-\lambda\eta_{2,t})w_t+\eta_{2,t}v_t.
\]
Then
\[
x_{t+1}-w_{t+1}
=
(\eta_{1,t}-\eta_{2,t})(v_t-\lambda w_t).
\]
If \(\eta_{1,t}=\eta_1\) and \(\eta_{2,t}=\eta_2\) for every \(t=0,1,\ldots,T-1\), then for every \(t\ge 1\),
\[
w_t-w_{t-1}=\eta_2(v_{t-1}-\lambda w_{t-1}),
\qquad
x_t-w_t=(\eta_1-\eta_2)(v_{t-1}-\lambda w_{t-1}),
\]
and hence
\[
x_t
=
w_t+C(w_t-w_{t-1}),
\qquad
C:=\frac{\eta_1-\eta_2}{\eta_2}.
\]
Moreover,
\[
\|w_t-w_{t-1}\|\le \eta_2 R,
\qquad
\|x_t-w_t\|\le |\eta_1-\eta_2|R.
\]
\end{lemma}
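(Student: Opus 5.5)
The argument is direct algebra on the two update rules, followed by an application of Lemma~\ref{lem:common-step-geometry}. First, I would subtract the two update rules. This gives
\[
x_{t+1}-w_{t+1}=-\lambda(\eta_{1,t}-\eta_{2,t})w_t+(\eta_{1,t}-\eta_{2,t})v_t=(\eta_{1,t}-\eta_{2,t})(v_t-\lambda w_t),
\]
which is the first identity. Rearranging the $w$-update in the same way yields $w_{t+1}-w_t=\eta_{2,t}(v_t-\lambda w_t)$.

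Next, I would specialize to constant step sizes $\eta_{1,t}=\eta_1$ and $\eta_{2,t}=\eta_2$ and shift the index $t\mapsto t-1$ for $t\ge1$. This gives $w_t-w_{t-1}=\eta_2(v_{t-1}-\lambda w_{t-1})$ and $x_t-w_t=(\eta_1-\eta_2)(v_{t-1}-\lambda w_{t-1})$. Dividing the second relation by the first, which is legitimate since $\eta_2>0$, gives $x_t-w_t=\frac{\eta_1-\eta_2}{\eta_2}(w_t-w_{t-1})$. That is, $x_t=w_t+C(w_t-w_{t-1})$.

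For the norm bounds, I would invoke Lemma~\ref{lem:common-step-geometry}, which gives $\|v_{t-1}-\lambda w_{t-1}\|\le R$. Multiplying by $\eta_2$ and by $|\eta_1-\eta_2|$ respectively yields $\|w_t-w_{t-1}\|\le\eta_2R$ and $\|x_t-w_t\|\le|\eta_1-\eta_2|R$.

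The only point needing care is the hypothesis of Lemma~\ref{lem:common-step-geometry} when $\lambda>0$, namely $w_{t-1}\in\mathcal P$. If $w_0\in\mathcal P$, I would check it by induction. Because $0\le\lambda\eta_2\le1$, the update $w_t=(1-\lambda\eta_2)w_{t-1}+\lambda\eta_2(\lambda^{-1}v_{t-1})$ is a convex combination of two points of $\mathcal P$, so it stays in the convex set $\mathcal P$. When $\lambda=0$, the bound follows from $0\in\mathcal C$ and needs no feasibility assumption.

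Thus the algebraic identities hold unconditionally, and the norm bounds hold under the same feasibility standing assumption as Lemma~\ref{lem:common-step-geometry}. I would state that assumption explicitly in the proof.
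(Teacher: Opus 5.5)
Your proposal is correct and matches the paper's proof. The paper likewise obtains the identities directly from the two update rules and derives the norm bounds from Lemma~\ref{lem:common-step-geometry}. Your inductive check that $w_{t-1}\in\mathcal P$ when $\lambda>0$ (from $w_0\in\mathcal P$ and $0\le\lambda\eta_2\le 1$) is a useful addition: it makes explicit a hypothesis that the paper's one-line proof uses only tacitly and that the norm bounds genuinely require.
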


\begin{proof}
The identities follow directly from the definitions of \(x_{t+1}\) and \(w_{t+1}\). The norm bounds follow from Lemma~\ref{lem:common-step-geometry}.
\end{proof}

\begin{lemma}[Second-order remainder]
\label{lem:common-taylor}
Assume Assumption~\ref{asmp:sec_smooth}. Define
\[
Z(x,y)
:=
\int_0^1
\bigl(\nabla^2F(y+\tau(x-y))-\nabla^2F(y)\bigr)(x-y)\,d\tau.
\]
Then
\[
\nabla F(x)=\nabla F(y)+\nabla^2F(y)(x-y)+Z(x,y),
\]
and
\[
\|Z(x,y)\|\le \rho\|x-y\|^2.
\]
\end{lemma}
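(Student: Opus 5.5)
The identity follows from the fundamental theorem of calculus applied to \(\nabla F\) along the segment from \(y\) to \(x\). The bound then comes from integrating the Hessian-Lipschitz estimate of Assumption~\ref{asmp:sec_smooth} along that segment.

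\textbf{Identity.} Set \(\gamma(\tau):=y+\tau(x-y)\) for \(\tau\in[0,1]\). Since \(F\) is twice differentiable with continuous (indeed Lipschitz) Hessian, the map \(\tau\mapsto\nabla F(\gamma(\tau))\) is continuously differentiable with derivative \(\nabla^2F(\gamma(\tau))(x-y)\). Hence
\[
\nabla F(x)-\nabla F(y)=\int_0^1\nabla^2F(y+\tau(x-y))(x-y)\,d\tau .
\]
Now add and subtract \(\nabla^2F(y)(x-y)\) inside the integral. The constant part integrates to exactly \(\nabla^2F(y)(x-y)\), and the remainder is precisely \(Z(x,y)\) as defined in the statement. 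This gives the claimed expansion.

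\textbf{Bound.} Move the norm inside the integral (triangle inequality for Bochner/Riemann integrals) and use the operator-norm bound \(\|Av\|\le\|A\|_{\mathrm{op}}\|v\|\):
\[
\|Z(x,y)\|\le\int_0^1\|\nabla^2F(y+\tau(x-y))-\nabla^2F(y)\|_{\mathrm{op}}\,\|x-y\|\,d\tau .
\]
Assumption~\ref{asmp:sec_smooth} bounds the operator-norm factor by \(\rho\tau\|x-y\|\). Integrating then gives
\[
\|Z(x,y)\|\le\rho\|x-y\|^2\int_0^1\tau\,d\tau=\frac{\rho}{2}\|x-y\|^2\le\rho\|x-y\|^2 .
\]
So the stated bound holds, and in fact with the sharper constant \(\rho/2\).

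\textbf{Main obstacle.} There is no real obstacle. The only point to watch is justifying the differentiation under the path, i.e.\ that \(\nabla F\) is \(C^1\) along the segment, which follows from the Hessian being Lipschitz.
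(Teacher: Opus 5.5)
Your proposal is correct and matches the paper's proof. The identity is the integral form of Taylor's theorem along the segment, and the bound comes from pulling the norm inside the integral and applying the Hessian-Lipschitz estimate $\rho\tau\|x-y\|$, giving $\frac{\rho}{2}\|x-y\|^2\le\rho\|x-y\|^2$ exactly as the paper does (your added justification that $\tau\mapsto\nabla F(y+\tau(x-y))$ is $C^1$ is a detail the paper leaves implicit).
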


\begin{proof}
The identity is the integral form of Taylor's theorem. By Assumption~\ref{asmp:sec_smooth},
\begin{align*}
\|Z(x,y)\|
&\le
\int_0^1
\|\nabla^2F(y+\tau(x-y))-\nabla^2F(y)\|_{\mathrm{op}}\,\|x-y\|\,d\tau \\
&\le
\int_0^1 \rho\tau \|x-y\|^2\,d\tau
\le
\rho\|x-y\|^2.
\qedhere
\end{align*}
\end{proof}

\begin{lemma}[Momentum-weighted martingale bound]
\label{lem:common-martingale}
Let \(\mathcal F_t:=\sigma(\xi_0,\dots,\xi_t)\) be the natural filtration. Suppose \(\{\epsilon_t\}_{t\ge 0}\) is adapted to \(\{\mathcal F_t\}_{t\ge 0}\) and satisfies
\[
\mathbb E[\epsilon_t\mid \mathcal F_{t-1}]=0,
\qquad
\mathbb E[\|\epsilon_t\|^2\mid \mathcal F_{t-1}]\le \sigma^2.
\]
Fix \(0\le \beta_1\le \beta_2<1\). For \(t\ge 1\), define
\[
N_t
:=
\beta_1\beta_2^{t-1}\epsilon_0
+\beta_1(1-\beta_2)\sum_{s=1}^{t-1}\beta_2^{t-1-s}\epsilon_s
+(1-\beta_1)\epsilon_t.
\]
Then
\begin{align}
\mathbb E\|N_t\|^2
&\le
\beta_1^2\beta_2^{2t-2}\sigma^2
+\beta_1^2(1-\beta_2)\sigma^2
+(1-\beta_1)^2\sigma^2,
\label{eq:common-martingale-second}
\\
\mathbb E\|N_t\|
&\le
\beta_1\beta_2^{t-1}\sigma
+\beta_1\sqrt{1-\beta_2}\,\sigma
+(1-\beta_1)\sigma.
\label{eq:common-martingale-first}
\end{align}
If additionally \(N_0:=\epsilon_0\), then
\begin{align}
\frac{1}{T}\sum_{t=0}^{T-1}\mathbb E\|N_t\|
&\le
\frac{\sigma}{T}
+\sigma\left(
(1-\beta_1)
+\beta_1\sqrt{1-\beta_2}
+\frac{\beta_1}{T(1-\beta_2)}
\right).
\label{eq:common-martingale-avg}
\end{align}
\end{lemma}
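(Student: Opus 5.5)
The plan is to expand $\|N_t\|^2$ and use the martingale-difference structure to kill all cross terms, then pass to first moments via Jensen, and finally average over $t$.

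\textbf{Second moment.} Write $N_t=\sum_{s=0}^{t} c_{t,s}\epsilon_s$ with deterministic coefficients
\[
c_{t,0}=\beta_1\beta_2^{t-1},\qquad c_{t,s}=\beta_1(1-\beta_2)\beta_2^{t-1-s}\ (1\le s\le t-1),\qquad c_{t,t}=1-\beta_1 .
\]
For $s<r$, condition on $\mathcal F_{r-1}$. Since $\epsilon_s$ is $\mathcal F_{r-1}$-measurable and $\mathbb E[\epsilon_r\mid\mathcal F_{r-1}]=0$, the tower property gives $\mathbb E\langle\epsilon_s,\epsilon_r\rangle=0$. Here one should note that integrability holds because the conditional second moments are bounded by $\sigma^2$. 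Hence
\[
\mathbb E\|N_t\|^2=\sum_s c_{t,s}^2\,\mathbb E\|\epsilon_s\|^2\le\sigma^2\sum_s c_{t,s}^2 .
\]
The endpoint terms give exactly $\beta_1^2\beta_2^{2t-2}\sigma^2$ and $(1-\beta_1)^2\sigma^2$. The middle terms are bounded by the geometric series
\[
\beta_1^2(1-\beta_2)^2\sum_{k\ge0}\beta_2^{2k}=\frac{\beta_1^2(1-\beta_2)^2}{1-\beta_2^2}\le\beta_1^2(1-\beta_2),
\]
using $1-\beta_2^2\ge 1-\beta_2$. This yields \eqref{eq:common-martingale-second}.

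\textbf{First moment.} Apply Jensen, $\mathbb E\|N_t\|\le\sqrt{\mathbb E\|N_t\|^2}$, followed by $\sqrt{a+b+c}\le\sqrt a+\sqrt b+\sqrt c$. This gives \eqref{eq:common-martingale-first}.

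\textbf{Average.} Handle $t=0$ separately: $\mathbb E\|N_0\|=\mathbb E\|\epsilon_0\|\le\sigma$, which contributes $\sigma/T$. For $t\ge1$, sum \eqref{eq:common-martingale-first} over $t=1,\dots,T-1$:
\begin{itemize}
\item the constant terms contribute at most $T\sigma\bigl((1-\beta_1)+\beta_1\sqrt{1-\beta_2}\bigr)$;
\item the geometric term contributes $\beta_1\sigma\sum_{t\ge1}\beta_2^{t-1}\le\beta_1\sigma/(1-\beta_2)$.
\end{itemize}
Dividing by $T$ gives \eqref{eq:common-martingale-avg}.

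The main obstacle is the bookkeeping in the middle geometric sum, specifically confirming that $(1-\beta_2)^2/(1-\beta_2^2)\le 1-\beta_2$ so the clean bound $\beta_1^2(1-\beta_2)$ appears. Everything else is a standard orthogonality-of-martingale-increments argument. The hypothesis $\beta_1\le\beta_2$ is not even needed here.
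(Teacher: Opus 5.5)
Your proposal is correct and follows the paper's proof step for step. It kills the cross terms by martingale orthogonality, bounds the geometric sum via $(1-\beta_2)^2\sum_{k\ge 0}\beta_2^{2k}=(1-\beta_2)/(1+\beta_2)\le 1-\beta_2$, applies Jensen with $\sqrt{a+b+c}\le\sqrt a+\sqrt b+\sqrt c$, and averages with $\mathbb E\|N_0\|\le\sigma$ handled separately. Your side remarks, that the cross terms are integrable and that $\beta_1\le\beta_2$ is never used, are accurate and a bit more careful than the paper's write-up.
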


\begin{proof}
Because \(\{\epsilon_t\}\) is a martingale difference sequence, the cross terms vanish:
\[
\mathbb E\langle \epsilon_r,\epsilon_s\rangle=0,
\qquad r\neq s.
\]
Hence
\begin{align*}
\mathbb E\|N_t\|^2
&\le
\beta_1^2\beta_2^{2t-2}\sigma^2
+\beta_1^2(1-\beta_2)^2
\sum_{s=1}^{t-1}\beta_2^{2t-2-2s}\sigma^2
+(1-\beta_1)^2\sigma^2 \\
&\le
\beta_1^2\beta_2^{2t-2}\sigma^2
+\beta_1^2(1-\beta_2)\sigma^2
+(1-\beta_1)^2\sigma^2,
\end{align*}
where we used
\[
(1-\beta_2)^2\sum_{s=1}^{t-1}\beta_2^{2t-2-2s}
\le
(1-\beta_2)^2\sum_{k=0}^{\infty}\beta_2^{2k}
=
\frac{1-\beta_2}{1+\beta_2}
\le
1-\beta_2.
\]
This proves \eqref{eq:common-martingale-second}.

Applying Jensen's inequality and \(\sqrt{a+b+c}\le \sqrt a+\sqrt b+\sqrt c\) gives
\eqref{eq:common-martingale-first}. Finally, averaging over \(t=1,\dots,T-1\), using
\[
\sum_{t=1}^{T-1}\beta_2^{t-1}\le \frac{1}{1-\beta_2},
\]
and \(\mathbb E\|N_0\|\le \sigma\), yields \eqref{eq:common-martingale-avg}.
\end{proof}

\begin{lemma}[Centered gradient-difference bound]
\label{lem:common-centered}
Under Assumption~\ref{asmp:avg_sample_smth}, for all \(x,y\) in the region of interest,
\[
\mathbb E_{\xi}\!\left[
\left\|
\bigl(\nabla f(x;\xi)-\nabla f(y;\xi)\bigr)
-
\bigl(\nabla F(x)-\nabla F(y)\bigr)
\right\|^2
\right]
\le
L^2\|x-y\|^2.
\]
\end{lemma}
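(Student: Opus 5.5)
The plan is to expand the squared norm and use that the centered stochastic gradient difference has mean zero. Fix $x,y$ and write
\[
D(\xi):=\nabla f(x;\xi)-\nabla f(y;\xi),
\qquad
\mathbb E_\xi[D(\xi)]=\nabla F(x)-\nabla F(y).
\]
The second identity holds because $F(w)=\mathbb E_\xi[f(w;\xi)]$. We take the interchange of gradient and expectation as implicit in the setup, since it is already needed for the bounded-variance assumption (Assumption~\ref{asmp:bound_var}) to make sense as a variance statement. The quantity to bound is then exactly the variance of the random vector $D(\xi)$.

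The key step is the elementary bias--variance identity. For any square-integrable random vector $D$ with mean $\mu$,
\[
\mathbb E\|D-\mu\|^2=\mathbb E\|D\|^2-\|\mu\|^2\le \mathbb E\|D\|^2 .
\]
To verify it, expand $\|D-\mu\|^2=\|D\|^2-2\langle D,\mu\rangle+\|\mu\|^2$ and take expectations, using $\mathbb E\langle D,\mu\rangle=\|\mu\|^2$. Applying this with $D=D(\xi)$ and $\mu=\nabla F(x)-\nabla F(y)$ gives
\[
\mathbb E_\xi\bigl\|D(\xi)-(\nabla F(x)-\nabla F(y))\bigr\|^2\le \mathbb E_\xi\|\nabla f(x;\xi)-\nabla f(y;\xi)\|^2 .
\]
Assumption~\ref{asmp:avg_sample_smth} bounds the right-hand side by $L^2\|x-y\|^2$, which is the claim.

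The only potential obstacle is integrability and the justification of $\mathbb E_\xi\nabla f(\cdot;\xi)=\nabla F$. Integrability of $D(\xi)$ follows from Assumption~\ref{asmp:avg_sample_smth} itself, which gives a finite second moment. Unbiasedness of the stochastic gradient is the standing convention behind Assumption~\ref{asmp:bound_var}, and we invoke it directly. The restriction to ``the region of interest'' plays no role, since the averaged-smoothness assumption holds for all $x,y\in\mathbb R^d$.
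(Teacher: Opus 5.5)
Your proposal is correct and matches the paper's proof essentially line for line. The paper also sets $Z:=\nabla f(x;\xi)-\nabla f(y;\xi)$, uses $\mathbb E[Z]=\nabla F(x)-\nabla F(y)$, and concludes via $\mathbb E\|Z-\mathbb E Z\|^2=\mathbb E\|Z\|^2-\|\mathbb E Z\|^2\le \mathbb E\|Z\|^2\le L^2\|x-y\|^2$; your remarks on integrability and unbiasedness just make explicit what the paper leaves implicit.
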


\begin{proof}
Let
\[
Z:=\nabla f(x;\xi)-\nabla f(y;\xi).
\]
Then \(\mathbb E[Z]=\nabla F(x)-\nabla F(y)\), and therefore
\[
\mathbb E\|Z-\mathbb EZ\|^2
=
\mathbb E\|Z\|^2-\|\mathbb EZ\|^2
\le
\mathbb E\|Z\|^2
\le
L^2\|x-y\|^2.
\qedhere
\]
\end{proof}

\paragraph{Approximate Nesterov momentum.}
\label{app:nesterov-reparam}

Before using the term Nesterov momentum, we briefly specify the convention adopted in this appendix. In the classical exact Nesterov scheme, the gradient is evaluated at a lookahead point rather than at the current iterate. A generic form is
\[
y_t=w_t+\gamma_t(w_t-w_{t-1}),
\qquad
m_t=\beta_t m_{t-1}+(1-\beta_t)\nabla f(y_t;\xi_t).
\]

In practice, many deep-learning implementations use an approximate Nesterov form that does not require a separate gradient evaluation at a distinct lookahead point. Let \(z_t\) denote the point at which the stochastic gradient is evaluated. We then define
\[
m_t=\beta_{2,t}m_{t-1}+(1-\beta_{2,t})\nabla f(z_t;\xi_t),
\]
and
\[
g_t=\bar\beta_{1,t}m_t+(1-\bar\beta_{1,t})\nabla f(z_t;\xi_t).
\]

Substituting the recursion for \(m_t\) into the definition of \(g_t\) gives
\[
g_t
=
\bar\beta_{1,t}\beta_{2,t}m_{t-1}
+
\bigl(1-\bar\beta_{1,t}\beta_{2,t}\bigr)\nabla f(z_t;\xi_t).
\]
Therefore the approximate Nesterov query is exactly the same as the two-momentum query
\[
g_t
=
\beta_{1,t}m_{t-1}
+
(1-\beta_{1,t})\nabla f(z_t;\xi_t)
\]
with
\[
\beta_{1,t}=\bar\beta_{1,t}\beta_{2,t}.
\]

In particular, if \(0\le \bar\beta_{1,t}\le 1\) and \(0\le \beta_{2,t}<1\), then
\[
0\le \beta_{1,t}\le \beta_{2,t}<1.
\]
When \(z_t=w_t\), this recovers the current-point stochastic-LMO specialization. When \(z_t=x_t\), it recovers the transported-point LMO-IGT specialization. For this reason, we do not state separate Nesterov-specific convergence theorems in the appendix.

\section{Theoretical Analysis of the LMO-IGT Class}
\label{app:igt}

Consider the Algorithm~\ref{alg:igt-lmo}. For every \(t=0,1,\ldots,T-1\), let
\[
\beta_{1,t}=\beta_1,
\qquad
\beta_{2,t}=\beta_2,
\qquad
\eta_{2,t}=\eta,
\qquad
\eta_{1,t}=\frac{\eta}{1-\beta_2}.
\]

Applying Lemma~\ref{lem:common-descent} with stepsize \(\eta\) gives
\begin{align}
\frac{1}{T}\sum_{t=0}^{T-1}\mathbb E\big[\Psi_{\mathcal C,\lambda}(w_t)\big]
\le
\frac{\Delta_F}{T\eta}
+\frac{R}{T}\sum_{t=0}^{T-1}\mathbb E\|\hat\epsilon_t\|
+\frac{L}{2}R^2\eta,
\label{eq:avg-descent}
\end{align}
where
\[
\hat\epsilon_t=g_t-\nabla F(w_t).
\]

\begin{lemma}
\label{lem:igt-hat}
Suppose Assumptions~\ref{asmp:l_smooth},~\ref{asmp:bound_var}, and~\ref{asmp:sec_smooth} hold. For every \(t=0,1,\ldots,T-1\), let
\[
\beta_{1,t}=\beta_1,
\qquad
\beta_{2,t}=\beta_2,
\qquad
\eta_{2,t}=\eta,
\qquad
\eta_{1,t}=\frac{\eta}{1-\beta_2}.
\]
Then
\begin{align}
\frac{1}{T}\sum_{t=0}^{T-1}\mathbb E\|\hat\epsilon_t\|
&\le
\sigma\left(
(1-\beta_1)
+\beta_1\sqrt{1-\beta_2}
+\frac{\beta_1}{T(1-\beta_2)}
+\frac{1}{T}
\right)
\nonumber\\
&\quad
+\frac{\beta_2-\beta_1}{1-\beta_2}LR\eta
+\left(
\frac{\beta_1}{1-\beta_2}
+\frac{\beta_2^2}{(1-\beta_2)^2}
\right)\rho R^2\eta^2.
\label{eq:hat-eps-bound}
\end{align}
\end{lemma}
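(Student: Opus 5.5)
We decompose $\hat\epsilon_t=g_t-\nabla F(w_t)$ into a pure noise part, handled by Lemma~\ref{lem:common-martingale}, and a deterministic transport-bias part, handled through the extrapolation identity of Lemma~\ref{lem:common-igt-geometry} and the Taylor remainder of Lemma~\ref{lem:common-taylor}.

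\textbf{Unrolling the recursion.} Write $\epsilon_s:=\nabla f(x_s;\xi_s)-\nabla F(x_s)$, a martingale difference with conditional variance at most $\sigma^2$. Unrolling $m_t$ with $m_{-1}=\nabla f(w_0;\xi_0)$ and $x_0=w_0$ gives
\[
g_t=\beta_1\beta_2^{t}\nabla f(x_0;\xi_0)+\beta_1(1-\beta_2)\sum_{s=0}^{t-1}\beta_2^{t-1-s}\nabla f(x_s;\xi_s)+(1-\beta_1)\nabla f(x_t;\xi_t).
\]
The coefficients sum to one, and the weight on index $0$ is $\beta_1\beta_2^{t-1}$. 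The noise contributions therefore form exactly $N_t$ of Lemma~\ref{lem:common-martingale}, with $N_0=\epsilon_0$ for $t=0$, where $\hat\epsilon_0=\epsilon_0$. Bound \eqref{eq:common-martingale-avg} then yields the $\sigma(\cdots)$ term.

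\textbf{Bias term.} The remaining bias is $B_t=\sum_s c_{t,s}\bigl(\nabla F(x_s)-\nabla F(w_t)\bigr)$. We split it as
\[
B_t=(1-\beta_1)\bigl(\nabla F(x_t)-\nabla F(w_t)\bigr)+\beta_1 D_t,
\]
where $D_t:=\bar m_{t-1}-\nabla F(w_t)$ and $\bar m_{t-1}$ is the noiseless momentum built from $\nabla F(x_s)$.

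\emph{First piece.} By Lemma~\ref{lem:common-igt-geometry} with $C=\beta_2/(1-\beta_2)$, we have $\|x_t-w_t\|\le \frac{\beta_2}{1-\beta_2}\eta R$. $L$-smoothness then bounds this piece by $(1-\beta_1)\frac{\beta_2}{1-\beta_2}LR\eta$.

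\emph{Second piece.} Define the exact transport error $e_t:=\bar m_t-\nabla F(w_{t+1})$. From the recursion,
\[
e_t=\beta_2 e_{t-1}+\beta_2\bigl(\nabla F(w_t)-\nabla F(w_{t+1})\bigr)+(1-\beta_2)\bigl(\nabla F(x_t)-\nabla F(w_{t+1})\bigr).
\]
Taylor-expand both differences around $w_{t+1}$ using Lemma~\ref{lem:common-taylor}. The linear terms are
\[
\nabla^2F(w_{t+1})\bigl[\beta_2(w_t-w_{t+1})+(1-\beta_2)(x_t-w_{t+1})\bigr].
\]
They cancel exactly, because the IGT choice gives $x_t-w_t=\frac{\beta_2}{1-\beta_2}(w_t-w_{t-1})$. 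The difficulty is that this relation involves $w_t-w_{t-1}$, not $w_{t+1}-w_t$. It is cleaner to define $e_t$ relative to $w_t$ and use
\[
\beta_2(w_{t-1}-w_t)+(1-\beta_2)(x_t-w_t)=0,
\]
which is exactly the Lemma~\ref{lem:common-igt-geometry} identity. With that definition,
\[
\|e_t\|\le \beta_2\|e_{t-1}\|+\rho\bigl[\beta_2\eta^2R^2+(1-\beta_2)\tfrac{\beta_2^2}{(1-\beta_2)^2}\eta^2R^2\bigr].
\]
Summing the geometric series gives
\[
\|e_t\|\lesssim \rho R^2\eta^2\Bigl(\tfrac{\beta_2}{1-\beta_2}+\tfrac{\beta_2^2}{(1-\beta_2)^2}\Bigr).
\]
The initial error vanishes since $x_0=w_0$ gives $\bar m_{-1}=\nabla F(w_0)$. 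Finally, $D_t=e_{t-1}+\bigl(\nabla F(w_{t-1})-\nabla F(w_t)\bigr)$, and the last difference contributes at most $LR\eta$.

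\textbf{Collecting terms.} Adding up, the $L$ coefficient is
\[
(1-\beta_1)\frac{\beta_2}{1-\beta_2}+\beta_1=\frac{\beta_2-\beta_1}{1-\beta_2}+\text{(lower order)}.
\]
Matching the stated constant $\frac{\beta_2-\beta_1}{1-\beta_2}$ exactly requires care. The natural fix is to absorb the $\beta_1(w_{t-1}\to w_t)$ shift into the Taylor expansion of $D_t$ rather than bounding it by smoothness, so that the $L$-terms combine into $\frac{\beta_2-\beta_1}{1-\beta_2}$. The leftover quadratic contributions then give the $\frac{\beta_1}{1-\beta_2}\rho R^2\eta^2$ term.

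\textbf{Main obstacle.} The hardest step is getting the exact cancellation of the Hessian-linear terms and the precise constants. This hinges on choosing the right reference point ($w_t$ versus $w_{t+1}$) for the transport error; I would first try expanding everything around $w_t$ using $x_t=w_t+\frac{\beta_2}{1-\beta_2}(w_t-w_{t-1})$.
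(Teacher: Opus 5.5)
Your argument, once you apply the fix from your last paragraph, is essentially the paper's proof. Expanding both $(1-\beta_1)\bigl(\nabla F(x_t)-\nabla F(w_t)\bigr)$ and $\beta_1\bigl(\nabla F(w_{t-1})-\nabla F(w_t)\bigr)$ around $w_t$ combines their linear parts into $\frac{\beta_2-\beta_1}{1-\beta_2}\nabla^2F(w_t)(w_t-w_{t-1})$. Then $\beta_1 e_{t-1}$ (unrolled from $e_0=0$) plus $\beta_1 Z(w_{t-1},w_t)+(1-\beta_1)Z(x_t,w_t)$ is exactly the paper's bias term $D_t$, and its quadratic part sums to precisely $\bigl(\frac{\beta_1}{1-\beta_2}+\frac{\beta_2^2}{(1-\beta_2)^2}\bigr)\rho R^2\eta^2$.

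The fix is necessary rather than cosmetic. Bounding the two pieces separately by $L$-smoothness overshoots the $L$-coefficient by exactly $2\beta_1$. That excess is not lower order: the stated coefficient vanishes when $\beta_1=\beta_2$, as in Corollary~\ref{cor:igt-deterministic}.
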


\begin{proof}
Let
\[
\mathcal F_t=\sigma(\xi_0,\dots,\xi_t)
\]
be the natural filtration, and define
\[
\epsilon_t=\nabla f(x_t;\xi_t)-\nabla F(x_t).
\]
Since \(x_t\) is \(\mathcal F_{t-1}\)-measurable, Assumption~\ref{asmp:bound_var} implies
\[
\mathbb E[\epsilon_t\mid \mathcal F_{t-1}]=0,
\qquad
\mathbb E[\|\epsilon_t\|^2\mid \mathcal F_{t-1}]\le \sigma^2.
\]

We track
\[
\tilde\epsilon_t=m_t-\nabla F(w_t),
\qquad
\hat\epsilon_t=g_t-\nabla F(w_t).
\]
For \(t\ge 1\), define
\[
A_t=\nabla^2F(w_t)(w_t-w_{t-1}),
\qquad
Z_t^w=Z(w_{t-1},w_t),
\qquad
Z_t^x=Z(x_t,w_t),
\]
where \(Z(\cdot,\cdot)\) is from Lemma~\ref{lem:common-taylor}. By Lemma~\ref{lem:common-igt-geometry},
\[
x_t=w_t+C(w_t-w_{t-1}),
\qquad
C=\frac{\eta_1-\eta}{\eta}.
\]
Hence Lemma~\ref{lem:common-taylor} gives
\[
\nabla F(x_t)=\nabla F(w_t)+CA_t+Z_t^x,
\qquad
\nabla F(w_{t-1})=\nabla F(w_t)-A_t+Z_t^w.
\]

Substituting into
\[
m_t=\beta_2 m_{t-1}+(1-\beta_2)\bigl(\nabla F(x_t)+\epsilon_t\bigr)
\]
yields
\begin{align}
\tilde\epsilon_t
&=
\beta_2\tilde\epsilon_{t-1}
+\bigl(C(1-\beta_2)-\beta_2\bigr)A_t
+\beta_2 Z_t^w
+(1-\beta_2)(\epsilon_t+Z_t^x).
\label{eq:tilde-rec-app}
\end{align}
Similarly,
\[
g_t=\beta_1 m_{t-1}+(1-\beta_1)\bigl(\nabla F(x_t)+\epsilon_t\bigr)
\]
gives
\begin{align}
\hat\epsilon_t
&=
\beta_1\tilde\epsilon_{t-1}
+\bigl(C(1-\beta_1)-\beta_1\bigr)A_t
+\beta_1 Z_t^w
+(1-\beta_1)(\epsilon_t+Z_t^x).
\label{eq:hat-rec-app}
\end{align}

Now choose
\[
\eta_1=\frac{\eta}{1-\beta_2},
\qquad
C=\frac{\eta_1-\eta}{\eta}
=\frac{\beta_2}{1-\beta_2}.
\]
Then
\[
C(1-\beta_2)-\beta_2=0,
\]
so the first-order drift disappears from \eqref{eq:tilde-rec-app}:
\begin{align}
\tilde\epsilon_t
&=
\beta_2\tilde\epsilon_{t-1}
+\beta_2 Z_t^w
+(1-\beta_2)(\epsilon_t+Z_t^x).
\label{eq:tilde-rec-cancel-app}
\end{align}
Also
\[
C(1-\beta_1)-\beta_1
=
\frac{\beta_2-\beta_1}{1-\beta_2},
\]
hence
\begin{align}
\hat\epsilon_t
&=
\beta_1\tilde\epsilon_{t-1}
+\frac{\beta_2-\beta_1}{1-\beta_2}A_t
+\beta_1 Z_t^w
+(1-\beta_1)(\epsilon_t+Z_t^x).
\label{eq:hat-rec-cancel-app}
\end{align}

Because \(x_0=w_0\) and \(m_{-1}=\nabla f(w_0;\xi_0)\), we have
\[
\tilde\epsilon_0=\epsilon_0.
\]
Unrolling \eqref{eq:tilde-rec-cancel-app}, for \(t\ge 1\),
\[
\tilde\epsilon_{t-1}
=
\beta_2^{t-1}\epsilon_0
+
(1-\beta_2)\sum_{s=1}^{t-1}\beta_2^{t-1-s}\epsilon_s
+
\sum_{s=1}^{t-1}\beta_2^{t-s}Z_s^w
+
(1-\beta_2)\sum_{s=1}^{t-1}\beta_2^{t-1-s}Z_s^x.
\]
Substituting into \eqref{eq:hat-rec-cancel-app}, we decompose
\[
\hat\epsilon_t=N_t+B_t,
\]
where
\[
N_t
=
\beta_1\beta_2^{t-1}\epsilon_0
+
\beta_1(1-\beta_2)\sum_{s=1}^{t-1}\beta_2^{t-1-s}\epsilon_s
+
(1-\beta_1)\epsilon_t,
\]
and
\[
D_t
=
\frac{\beta_2-\beta_1}{1-\beta_2}A_t
+
\beta_1\sum_{s=1}^{t}\beta_2^{t-s}Z_s^w
+
\beta_1(1-\beta_2)\sum_{s=1}^{t-1}\beta_2^{t-1-s}Z_s^x
+
(1-\beta_1)Z_t^x.
\]

For \(t=0\), we set
\[
N_0=\hat\epsilon_0=\epsilon_0.
\]

By Assumption~\ref{asmp:l_smooth} and Lemma~\ref{lem:common-igt-geometry},
\[
\|A_t\|
\le
\|\nabla^2F(w_t)\|_{\mathrm{op}}\|w_t-w_{t-1}\|
\le
L\eta R.
\]
By Lemmas~\ref{lem:common-igt-geometry} and~\ref{lem:common-taylor},
\[
\|Z_t^w\|
\le
\rho\|w_t-w_{t-1}\|^2
\le
\rho\eta^2R^2,
\]
and
\[
\|Z_t^x\|
\le
\rho\|x_t-w_t\|^2
\le
\rho(\eta_1-\eta)^2R^2
=
\rho\left(\frac{\beta_2}{1-\beta_2}\right)^2\eta^2R^2.
\]

Using these bounds together with
\[
\sum_{s=1}^{t}\beta_2^{t-s}\le \frac{1}{1-\beta_2},
\qquad
\beta_1(1-\beta_2)\sum_{s=1}^{t-1}\beta_2^{t-1-s}+(1-\beta_1)\le 1,
\]
we obtain
\begin{align}
\|D_t\|
&\le
\frac{\beta_2-\beta_1}{1-\beta_2}L\eta R
+\frac{\beta_1}{1-\beta_2}\rho\eta^2R^2
+\rho(\eta_1-\eta)^2R^2
\notag\\
&=
\frac{\beta_2-\beta_1}{1-\beta_2}L\eta R
+\left(
\frac{\beta_1}{1-\beta_2}
+\frac{\beta_2^2}{(1-\beta_2)^2}
\right)\rho\eta^2R^2.
\label{eq:B-bound-app}
\end{align}

Lemma~\ref{lem:common-martingale} gives
\begin{align}
\frac{1}{T}\sum_{t=0}^{T-1}\mathbb E\|N_t\|
\le
\sigma\left(
(1-\beta_1)
+\beta_1\sqrt{1-\beta_2}
+\frac{\beta_1}{T(1-\beta_2)}
+\frac{1}{T}
\right).
\label{eq:igt-N-avg}
\end{align}

Finally,
\[
\frac{1}{T}\sum_{t=0}^{T-1}\mathbb E\|\hat\epsilon_t\|
\le
\frac{1}{T}\sum_{t=0}^{T-1}\mathbb E\|N_t\|
+\sup_{0\le t\le T-1}\|D_t\|.
\]
Combining \eqref{eq:B-bound-app} and \eqref{eq:igt-N-avg} proves \eqref{eq:hat-eps-bound}.
\end{proof}

Substituting Lemma~\ref{lem:igt-hat} into \eqref{eq:avg-descent} yields
\begin{align}
\frac{1}{T}\sum_{t=0}^{T-1}\mathbb E\big[\Psi_{\mathcal C,\lambda}(w_t)\big]
&\le
\frac{\Delta_F}{T\eta}
+R\sigma\left(
(1-\beta_1)
+\beta_1\sqrt{1-\beta_2}
+\frac{\beta_1}{T(1-\beta_2)}
+\frac{1}{T}
\right)
\nonumber\\
&\quad
+LR^2\eta\left(
\frac{\beta_2-\beta_1}{1-\beta_2}
+\frac12
\right)
+\rho R^3\eta^2\left(
\frac{\beta_1}{1-\beta_2}
+\frac{\beta_2^2}{(1-\beta_2)^2}
\right),
\label{eq:main-bound-app}
\end{align}
which proves Theorem~\ref{thm:igt-rsf}.

\subsection{Proof of Corollary~\ref{cor:igt-stochastic}}

\begin{proof}
Choose
\[
\eta=\frac{1}{RT^{5/7}},
\qquad
\beta_2=1-\frac{1}{T^{4/7}},
\qquad
1-\beta_1\in\left[\frac{1}{T^{4/7}},\frac{1}{T^{2/7}}\right].
\]
Then
\[
\frac{\Delta_F}{T\eta}
=
\frac{R\Delta_F}{T^{2/7}},
\qquad
1-\beta_1\le \frac{1}{T^{2/7}},
\qquad
\sqrt{1-\beta_2}=\frac{1}{T^{2/7}},
\qquad
\frac{1}{T(1-\beta_2)}=\frac{1}{T^{3/7}},
\qquad
\frac{1}{T}\le \frac{1}{T^{2/7}}.
\]
Also,
\[
\frac{\beta_2-\beta_1}{1-\beta_2}
=
\frac{(1-\beta_1)-(1-\beta_2)}{1-\beta_2}
\le
\frac{1/T^{2/7}}{1/T^{4/7}}
=
T^{2/7},
\]
so
\[
LR^2\eta\left(
\frac{\beta_2-\beta_1}{1-\beta_2}
+\frac12
\right)
=
O\!\left(\frac{RL}{T^{3/7}}\right).
\]
Finally,
\[
\frac{\beta_1}{1-\beta_2}\le T^{4/7},
\qquad
\frac{\beta_2^2}{(1-\beta_2)^2}\le T^{8/7},
\qquad
R^3\eta^2=\frac{R}{T^{10/7}},
\]
and hence
\[
\rho R^3\eta^2\left(
\frac{\beta_1}{1-\beta_2}
+\frac{\beta_2^2}{(1-\beta_2)^2}
\right)
\le
\rho R\left(\frac{1}{T^{6/7}}+\frac{1}{T^{2/7}}\right).
\]
Combining the above bounds yields
\[
\frac{1}{T}\sum_{t=0}^{T-1}\mathbb E\big[\Psi_{\mathcal C,\lambda}(w_t)\big]
\le
O\!\left(
\frac{R(\Delta_F+\sigma+\rho)}{T^{2/7}}
+\frac{RL}{T^{3/7}}
\right)
=
O(T^{-2/7}).
\]
\end{proof}

\subsection{Proof of Corollary~\ref{cor:igt-deterministic}}

\begin{proof}
When \(\sigma=0\), applying \eqref{eq:main-bound-app} with
\[
\eta=\frac{1}{RT^{1/2}},
\qquad
\beta_1=\beta_2=1-\frac{1}{T^{1/4}}
\]
gives
\begin{align*}
\frac{1}{T}\sum_{t=0}^{T-1}\mathbb E\big[\Psi_{\mathcal C,\lambda}(w_t)\big]
&\le
\frac{\Delta_F}{T\eta}
+\frac{L}{2}R^2\eta
+\rho R^3\eta^2\left(
\frac{\beta_1}{1-\beta_2}
+\frac{\beta_2^2}{(1-\beta_2)^2}
\right).
\end{align*}
Now
\[
\frac{\Delta_F}{T\eta}
=
\frac{R\Delta_F}{T^{1/2}},
\qquad
\frac{L}{2}R^2\eta
=
\frac{RL}{2T^{1/2}}.
\]
Also,
\[
\frac{\beta_1}{1-\beta_2}\le T^{1/4},
\qquad
\frac{\beta_2^2}{(1-\beta_2)^2}\le T^{1/2},
\qquad
R^3\eta^2=\frac{R}{T},
\]
hence
\[
\rho R^3\eta^2\left(
\frac{\beta_1}{1-\beta_2}
+\frac{\beta_2^2}{(1-\beta_2)^2}
\right)
\le
\rho R\left(\frac{1}{T^{3/4}}+\frac{1}{T^{1/2}}\right).
\]
Therefore
\[
\frac{1}{T}\sum_{t=0}^{T-1}\mathbb E\big[\Psi_{\mathcal C,\lambda}(w_t)\big]
\le
O\!\left(
\frac{R(\Delta_F+L+\rho)}{T^{1/2}}
\right)
=
O(T^{-1/2}),
\]
which proves Corollary~\ref{cor:igt-deterministic}.
\end{proof}

\section{Theoretical Analysis of the Stochastic LMO Class}
\label{app:vanilla}

Consider the following algorithm.

\begin{algorithm}[!htb]
\caption{Stochastic LMO Optimization}
\label{alg:vanilla-lmo_app}
\begin{algorithmic}[1]
\Require initial point \(w_0\), momentum buffer \(m_{-1}=\nabla f(w_0;\xi_0)\), parameters \(\{\beta_{1,t}\}_{t\ge 0}\), \(\{\beta_{2,t}\}_{t\ge 0}\), \(\lambda\ge 0\), and step sizes \(\{\eta_t\}_{t\ge 0}\)
\For{\(t=0,\ldots,T-1\)}
    \State \(g_t \gets \beta_{1,t}m_{t-1}+(1-\beta_{1,t})\nabla f(w_t;\xi_t)\)
    \State \(m_t \gets \beta_{2,t}m_{t-1}+(1-\beta_{2,t})\nabla f(w_t;\xi_t)\)
    \State \(v_t \gets \operatorname{LMO}_{\mathcal C}(g_t)\)
    \State \(w_{t+1} \gets (1-\lambda\eta_t)w_t+\eta_t v_t\)
\EndFor
\end{algorithmic}
\end{algorithm}

Algorithm~\ref{alg:vanilla-lmo_app} is the stochastic-LMO specialization of Algorithm~\ref{alg:unified-lmo-framework}. In particular, it corresponds to setting
\[
\alpha_{1,t}=0,
\qquad
\alpha_{2,t}=0,
\qquad
\eta_{1,t}=\eta_{2,t}=\eta_t
\]
for every \(t=0,1,\ldots,T-1\).

We analyze Algorithm~\ref{alg:vanilla-lmo_app} under the parameter choice
\[
\beta_{1,t}=\beta_1,
\qquad
\beta_{2,t}=\beta_2,
\qquad
\eta_t=\eta,
\qquad
t=0,1,\ldots,T-1.
\]

\begin{theorem}[Stochastic LMO method]
\label{thm:vanilla}
Suppose Assumptions~\ref{asmp:l_smooth}--\ref{asmp:bound_var} hold. Consider Algorithm~\ref{alg:vanilla-lmo_app} and assume that, for every \(t=0,1,\ldots,T-1\),
\[
\beta_{1,t}=\beta_1,
\qquad
\beta_{2,t}=\beta_2,
\qquad
\eta_t=\eta,
\]
where \(0\le \beta_1\le \beta_2<1\) and \(0\le \lambda\eta\le 1\). When \(\lambda>0\), assume \(w_0\in\mathcal P=\lambda^{-1}\mathcal C\). Let \(F^*\) be any finite lower bound on \(F\) over a region containing the iterates, and define
\[
\Delta_F=F(w_0)-F^*.
\]
Then
\begin{align}
\frac{1}{T}\sum_{t=0}^{T-1}
\mathbb E\big[\Psi_{\mathcal C,\lambda}(w_t)\big]
&\le
\frac{\Delta_F}{T\eta}
+R\sigma\left(
(1-\beta_1)
+\beta_1\sqrt{1-\beta_2}
+\frac{\beta_1}{T(1-\beta_2)}
+\frac{1}{T}
\right)
\nonumber\\
&\quad
+LR^2\eta\left(
\frac{\beta_1}{1-\beta_2}
+\frac12
\right).
\label{eq:vanilla-main}
\end{align}
\end{theorem}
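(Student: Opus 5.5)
The plan mirrors the LMO-IGT analysis, now with $x_t=w_t$. First, I apply Lemma~\ref{lem:common-descent} with constant stepsize $\eta$. When $\lambda>0$, the hypothesis of Lemma~\ref{lem:common-step-geometry} requires $w_t\in\mathcal P$ for all $t$. This follows by induction from $w_0\in\mathcal P$ and $0\le\lambda\eta\le1$: the update $w_{t+1}=(1-\lambda\eta)w_t+\lambda\eta(\lambda^{-1}v_t)$ is a convex combination of points of $\mathcal P$. This yields \eqref{eq:common-avg-descent}, so it remains to bound $\frac1T\sum_t\mathbb E\|\hat\epsilon_t\|$ with $\hat\epsilon_t=g_t-\nabla F(w_t)$ by $\sigma(\cdots)+\frac{\beta_1}{1-\beta_2}LR\eta$.

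Second, I set $\epsilon_t=\nabla f(w_t;\xi_t)-\nabla F(w_t)$, which is a martingale difference sequence since $w_t$ is $\mathcal F_{t-1}$-measurable, and $\tilde\epsilon_t=m_t-\nabla F(w_t)$. I also define the drift $S_t:=\nabla F(w_{t-1})-\nabla F(w_t)$, which satisfies $\|S_t\|\le L\|w_t-w_{t-1}\|\le L\eta R$ by Assumption~\ref{asmp:l_smooth} and Lemma~\ref{lem:common-step-geometry}. The recursions are
\[
\tilde\epsilon_t=\beta_2\tilde\epsilon_{t-1}+\beta_2S_t+(1-\beta_2)\epsilon_t,\qquad
\hat\epsilon_t=\beta_1\tilde\epsilon_{t-1}+\beta_1S_t+(1-\beta_1)\epsilon_t,
\]
with $\tilde\epsilon_0=\hat\epsilon_0=\epsilon_0$, since $m_{-1}=\nabla f(w_0;\xi_0)$. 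Unrolling the first recursion and substituting into the second gives $\hat\epsilon_t=N_t+D_t$. Here $N_t$ is exactly the noise term of Lemma~\ref{lem:common-martingale}, and
\[
D_t=\beta_1\sum_{s=1}^{t}\beta_2^{t-s}S_s.
\]

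Third, I bound the two pieces. The drift satisfies $\|D_t\|\le\beta_1 L\eta R\sum_{k\ge0}\beta_2^k=\frac{\beta_1}{1-\beta_2}LR\eta$. For the noise, \eqref{eq:common-martingale-avg} gives the $\sigma$-terms directly. Combining these with \eqref{eq:common-avg-descent} produces \eqref{eq:vanilla-main}: the $\frac{L}{2}R^2\eta$ term from the descent lemma accounts for the $\frac12$ inside the last parenthesis.

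The main obstacle is the bookkeeping of the unrolled recursion, specifically checking that the coefficient of $S_s$ inside $\tilde\epsilon_{t-1}$ combines with the fresh $\beta_1S_t$ into a single geometric sum with factor $\beta_1$ and no larger constant. The noise coefficients must also match the form of $N_t$ in Lemma~\ref{lem:common-martingale} exactly, including the $\beta_2^{t-1}\epsilon_0$ initialization term. Everything else is direct reuse of the earlier lemmas.
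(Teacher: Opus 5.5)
Your proposal is correct and matches the paper's proof essentially step for step: the same drift term (the paper's $\delta_t$, your $S_t$), the same two recursions, and the same unrolled split $\hat\epsilon_t=N_t+B_t$ with $B_t=\beta_1\sum_{s=1}^{t}\beta_2^{t-s}\delta_s$, bounded via Lemma~\ref{lem:common-martingale} and plugged into Lemma~\ref{lem:common-descent}. Your explicit induction that $w_t\in\mathcal P$ when $\lambda>0$ (a convex combination with weight $\lambda\eta\in[0,1]$) is a detail the paper uses implicitly when it invokes Lemma~\ref{lem:common-step-geometry}, so including it is a good addition.
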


\begin{corollary}[Equal momentum parameters]
\label{cor:vanilla-beta-equal}
Under the assumptions of Theorem~\ref{thm:vanilla}, suppose
\[
\beta_1=\beta_2=\beta.
\]
Then
\begin{align}
\frac{1}{T}\sum_{t=0}^{T-1}
\mathbb E\big[\Psi_{\mathcal C,\lambda}(w_t)\big]
&\le
\frac{\Delta_F}{T\eta}
+R\sigma\left(
(1+\beta)\sqrt{1-\beta}
+\frac{\beta}{T(1-\beta)}
+\frac{1}{T}
\right)
\nonumber\\
&\quad
+LR^2\eta\left(
\frac{\beta}{1-\beta}
+\frac12
\right),
\label{eq:vanilla-beta-equal}
\end{align}
where we used \(1-\beta\le \sqrt{1-\beta}\).
\end{corollary}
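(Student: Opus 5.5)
The corollary follows by substituting $\beta_1=\beta_2=\beta$ directly into the bound \eqref{eq:vanilla-main} of Theorem~\ref{thm:vanilla}. All hypotheses of the theorem are met, since $0\le\beta\le\beta<1$ is an admissible momentum pair. No new probabilistic or descent argument is needed; only the noise term needs algebraic simplification.

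\textbf{Optimization-error term.} With $\beta_1=\beta_2=\beta$, the $L$-dependent term $LR^2\eta\bigl(\frac{\beta_1}{1-\beta_2}+\frac12\bigr)$ becomes $LR^2\eta\bigl(\frac{\beta}{1-\beta}+\frac12\bigr)$. This is exactly the last term of \eqref{eq:vanilla-beta-equal}. The terms $\frac{\Delta_F}{T\eta}$, $\frac{\beta_1}{T(1-\beta_2)}=\frac{\beta}{T(1-\beta)}$, and $\frac1T$ carry over unchanged.

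\textbf{Noise term.} The remaining part of the noise term is $(1-\beta_1)+\beta_1\sqrt{1-\beta_2}=(1-\beta)+\beta\sqrt{1-\beta}$. Since $0\le 1-\beta\le 1$, we have $1-\beta\le\sqrt{1-\beta}$. Hence this quantity is at most $\sqrt{1-\beta}+\beta\sqrt{1-\beta}=(1+\beta)\sqrt{1-\beta}$. Multiplying by $R\sigma\ge0$ preserves the inequality, which yields the $\sigma$-term of \eqref{eq:vanilla-beta-equal}.

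There is no real obstacle. The only point to check is that each replaced quantity appears with a nonnegative coefficient, so that the upper bound is only weakened. This holds because $R,\sigma,L,\eta\ge0$ and $\beta\in[0,1)$.
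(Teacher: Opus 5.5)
Your proposal is correct and matches the paper's argument: substitute $\beta_1=\beta_2=\beta$ into the bound of Theorem~\ref{thm:vanilla}, and bound $(1-\beta)+\beta\sqrt{1-\beta}\le(1+\beta)\sqrt{1-\beta}$ using $1-\beta\le\sqrt{1-\beta}$, which holds for $\beta\in[0,1)$. All other terms carry over unchanged.
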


\begin{corollary}[Stochastic regime for stochastic LMO]
\label{cor:vanilla-stochastic}
Under the assumptions of Corollary~\ref{cor:vanilla-beta-equal}, suppose \(\sigma>0\). For sufficiently large \(T\), choose
\[
\eta=\frac{1}{RT^{3/4}},
\qquad
\beta=1-\frac{1}{T^{1/2}}.
\]
Then
\[
\frac{1}{T}\sum_{t=0}^{T-1}
\mathbb E\big[\Psi_{\mathcal C,\lambda}(w_t)\big]
\le
O\!\left(
\frac{R(\Delta_F+\sigma+L)}{T^{1/4}}
\right)
=
O(T^{-1/4}).
\]
\end{corollary}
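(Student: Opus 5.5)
Corollary~\ref{cor:vanilla-stochastic} follows by substituting the stated parameters into the bound \eqref{eq:vanilla-beta-equal} of Corollary~\ref{cor:vanilla-beta-equal}. We then check term by term that each summand is $O(T^{-1/4})$, with constants collected into $R(\Delta_F+\sigma+L)$. No new probabilistic argument is needed. All stochastic content is already contained in Theorem~\ref{thm:vanilla}, via Lemma~\ref{lem:common-descent} and Lemma~\ref{lem:common-martingale}. The remaining work is bookkeeping of exponents.

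Concretely, with $\eta=1/(RT^{3/4})$ the optimization term is $\Delta_F/(T\eta)=R\Delta_F/T^{1/4}$. For the noise term, $1-\beta=T^{-1/2}$ gives $\sqrt{1-\beta}=T^{-1/4}$, so $(1+\beta)\sqrt{1-\beta}\le 2T^{-1/4}$. Using $\beta\le 1$, we also get $\beta/(T(1-\beta))\le T^{-1/2}$ and $1/T\le T^{-1/4}$. The noise contribution is therefore at most $R\sigma\cdot 4T^{-1/4}$. For the smoothness term, $\beta/(1-\beta)\le T^{1/2}$, so
\[
LR^2\eta\left(\frac{\beta}{1-\beta}+\frac12\right)\le \frac{LR}{T^{3/4}}\left(T^{1/2}+\frac12\right)\le \frac{2LR}{T^{1/4}}.
\]
Summing the three contributions gives the claimed $O(R(\Delta_F+\sigma+L)T^{-1/4})$.

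The only real obstacle is verifying that the hypotheses of Theorem~\ref{thm:vanilla} hold for these parameters, which is where ``sufficiently large $T$'' enters. We need $0\le\beta<1$, which holds for $T\ge 1$ (and $\beta\ge 0$ requires $T\ge1$). We also need $0\le\lambda\eta\le 1$, i.e.\ $\lambda\le RT^{3/4}$, which holds once $T\ge(\lambda/R)^{4/3}$; this is the main place the largeness of $T$ is used. When $\lambda>0$, we additionally assume $w_0\in\mathcal P$, as in Theorem~\ref{thm:vanilla}. Once these hypotheses are confirmed, the corollary follows immediately.
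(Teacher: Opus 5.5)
Your proposal is correct and is essentially the paper's proof: substitute $\eta=1/(RT^{3/4})$ and $\beta=1-T^{-1/2}$ into the bound of Corollary~\ref{cor:vanilla-beta-equal} and check that each term is $O(T^{-1/4})$, with the same term-by-term estimates. Your check that $0\le\lambda\eta\le 1$ (i.e.\ $T\ge(\lambda/R)^{4/3}$) is where ``sufficiently large $T$'' is actually needed is a useful addition that the paper leaves implicit.
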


\begin{corollary}[Deterministic regime for stochastic LMO]
\label{cor:vanilla-deterministic}
Under the assumptions of Corollary~\ref{cor:vanilla-beta-equal}, suppose \(\sigma=0\). Let \(\beta\in[0,1)\) be any fixed constant independent of \(T\), and choose
\[
\eta=\frac{1}{RT^{1/2}}.
\]
Then
\[
\frac{1}{T}\sum_{t=0}^{T-1}
\mathbb E\big[\Psi_{\mathcal C,\lambda}(w_t)\big]
\le
O\!\left(
\frac{R\bigl(\Delta_F+L/(1-\beta)\bigr)}{T^{1/2}}
\right)
=
O(T^{-1/2}).
\]
\end{corollary}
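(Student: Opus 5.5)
The plan is to specialize the bound \eqref{eq:vanilla-beta-equal} of Corollary~\ref{cor:vanilla-beta-equal}, which already covers the case $\beta_1=\beta_2=\beta$ under the hypotheses of Theorem~\ref{thm:vanilla}. The first step is to check that those hypotheses still hold for the new choice of parameters. With $\eta=\frac{1}{RT^{1/2}}$ we have $\lambda\eta=\frac{\lambda}{RT^{1/2}}$, which is at most $1$ once $T\ge \lambda^2/R^2$. This is the only place where a ``sufficiently large $T$'' caveat is needed, and when $\lambda=0$ it is vacuous. The condition $0\le\beta_1\le\beta_2<1$ holds trivially because $\beta\in[0,1)$. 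The requirement $w_0\in\mathcal P$ (when $\lambda>0$) is inherited from the standing assumptions.

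Next, set $\sigma=0$ in \eqref{eq:vanilla-beta-equal}. Every term carrying the factor $R\sigma$ then vanishes, and we are left with
\[
\frac{1}{T}\sum_{t=0}^{T-1}\mathbb E\big[\Psi_{\mathcal C,\lambda}(w_t)\big]
\le
\frac{\Delta_F}{T\eta}
+LR^2\eta\left(\frac{\beta}{1-\beta}+\frac12\right).
\]
Substituting $\eta=\frac{1}{RT^{1/2}}$ gives $\frac{\Delta_F}{T\eta}=\frac{R\Delta_F}{T^{1/2}}$ and $LR^2\eta=\frac{RL}{T^{1/2}}$. For the momentum factor we will use the elementary identity
\[
\frac{\beta}{1-\beta}+\frac12=\frac{1+\beta}{2(1-\beta)}\le\frac{1}{1-\beta}.
\]
Combining these yields the bound $\frac{R(\Delta_F+L/(1-\beta))}{T^{1/2}}$. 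Since $\beta$ is a constant independent of $T$, this quantity is $O(T^{-1/2})$.

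There is no genuine obstacle here; the argument is pure bookkeeping. The one point to be careful about is that the $\beta$-dependence has to be kept explicit as $L/(1-\beta)$ and must not be absorbed into the $O(\cdot)$. This matters because, unlike the stochastic regime, no noise term forces $\beta\to1$, and so a fixed $\beta$ already gives the optimal deterministic rate.
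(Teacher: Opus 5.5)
Your proposal is correct and matches the paper's proof: set $\sigma=0$ in \eqref{eq:vanilla-beta-equal}, substitute $\eta=1/(RT^{1/2})$, and read off the rate $R(\Delta_F+L/(1-\beta))/T^{1/2}$. Two details in your version are left implicit in the paper: the explicit bound $\frac{\beta}{1-\beta}+\frac12=\frac{1+\beta}{2(1-\beta)}\le\frac{1}{1-\beta}$, and the check that $\lambda\eta\le 1$ holds for large $T$ (trivially when $\lambda=0$).
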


Applying Lemma~\ref{lem:common-descent} with \(\eta_t=\eta\) for every \(t=0,1,\ldots,T-1\) yields
\begin{align}
\frac{1}{T}\sum_{t=0}^{T-1}\mathbb E\big[\Psi_{\mathcal C,\lambda}(w_t)\big]
\le
\frac{\Delta_F}{T\eta}
+\frac{R}{T}\sum_{t=0}^{T-1}\mathbb E\|\hat\epsilon_t\|
+\frac{L}{2}R^2\eta,
\label{eq:vanilla-avg-descent}
\end{align}
where
\[
\hat\epsilon_t=g_t-\nabla F(w_t).
\]

\begin{lemma}
\label{lem:vanilla-hat}
Suppose Assumptions~\ref{asmp:l_smooth}--\ref{asmp:bound_var} hold. Assume that, for every \(t=0,1,\ldots,T-1\),
\[
\beta_{1,t}=\beta_1,
\qquad
\beta_{2,t}=\beta_2,
\qquad
\eta_t=\eta.
\]
Then
\begin{align}
\frac{1}{T}\sum_{t=0}^{T-1}\mathbb E\|\hat\epsilon_t\|
&\le
\frac{\sigma}{T}
+\sigma\left(
(1-\beta_1)
+\beta_1\sqrt{1-\beta_2}
+\frac{\beta_1}{T(1-\beta_2)}
\right)
+\frac{\beta_1}{1-\beta_2}LR\eta.
\label{eq:vanilla-hat-bound}
\end{align}
\end{lemma}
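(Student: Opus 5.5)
This is the IGT proof of Lemma~\ref{lem:igt-hat} with $x_t=w_t$, so there is no transport and only first-order Taylor information is needed. Set $\epsilon_t:=\nabla f(w_t;\xi_t)-\nabla F(w_t)$. Since $w_t$ is $\mathcal F_{t-1}$-measurable, Assumption~\ref{asmp:bound_var} makes $\{\epsilon_t\}$ a martingale difference sequence with conditional second moment at most $\sigma^2$. I will track $\tilde\epsilon_t:=m_t-\nabla F(w_t)$ and $\hat\epsilon_t=g_t-\nabla F(w_t)$. For $t\ge1$, write $\delta_t:=\nabla F(w_{t-1})-\nabla F(w_t)$. By Assumption~\ref{asmp:l_smooth} and Lemma~\ref{lem:common-step-geometry}, $\|\delta_t\|\le L\|w_t-w_{t-1}\|\le LR\eta$; this needs $w_t\in\mathcal P$ when $\lambda>0$, which holds inductively because $w_{t+1}$ is a convex combination of $w_t$ and $\lambda^{-1}v_t$ when $\lambda\eta\le1$.

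Substituting $m_{t-1}=\nabla F(w_{t-1})+\tilde\epsilon_{t-1}$ into the two momentum recursions gives
\[
\tilde\epsilon_t=\beta_2\tilde\epsilon_{t-1}+\beta_2\delta_t+(1-\beta_2)\epsilon_t,\qquad
\hat\epsilon_t=\beta_1\tilde\epsilon_{t-1}+\beta_1\delta_t+(1-\beta_1)\epsilon_t .
\]
Since $m_{-1}=\nabla f(w_0;\xi_0)$, we have $\tilde\epsilon_0=\hat\epsilon_0=\epsilon_0$. Unroll the first recursion from $\tilde\epsilon_0=\epsilon_0$ and substitute into the second. This gives $\hat\epsilon_t=N_t+D_t$, where $N_t$ is exactly the martingale combination of Lemma~\ref{lem:common-martingale}, namely $\beta_1\beta_2^{t-1}\epsilon_0+\beta_1(1-\beta_2)\sum_{s=1}^{t-1}\beta_2^{t-1-s}\epsilon_s+(1-\beta_1)\epsilon_t$. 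The drift term is
\[
D_t=\beta_1\sum_{s=1}^{t-1}\beta_2^{t-1-s}\beta_2\delta_s+\beta_1\delta_t=\beta_1\sum_{s=1}^{t}\beta_2^{t-s}\delta_s .
\]
The main point needing care is this bookkeeping of indices, so that the noise part matches Lemma~\ref{lem:common-martingale} exactly, including the $t=0$ convention $N_0=\epsilon_0$.

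Then $\|D_t\|\le\beta_1 LR\eta\sum_{k\ge0}\beta_2^k=\frac{\beta_1}{1-\beta_2}LR\eta$ deterministically, and $D_0=0$. Averaging $\mathbb E\|\hat\epsilon_t\|\le\mathbb E\|N_t\|+\|D_t\|$ over $t$ and applying \eqref{eq:common-martingale-avg} yields \eqref{eq:vanilla-hat-bound}. Theorem~\ref{thm:vanilla} then follows by plugging this into \eqref{eq:vanilla-avg-descent} and multiplying by $R$.
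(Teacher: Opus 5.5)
Your proposal is correct and follows the paper's proof essentially line for line. It uses the same error recursions for $\tilde\epsilon_t$ and $\hat\epsilon_t$, the same split $\hat\epsilon_t=N_t+\beta_1\sum_{s=1}^{t}\beta_2^{t-s}\delta_s$ (the paper calls the drift $B_t$), the same uniform drift bound $\frac{\beta_1}{1-\beta_2}LR\eta$, and Lemma~\ref{lem:common-martingale} for the noise. Your inductive check that $w_t\in\mathcal P$ when $\lambda>0$ (needed for Lemma~\ref{lem:common-step-geometry}) is a point the paper leaves implicit; its base case is the hypothesis $w_0\in\mathcal P$ from Theorem~\ref{thm:vanilla}, which the lemma itself does not state.
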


Substituting Lemma~\ref{lem:vanilla-hat} into \eqref{eq:vanilla-avg-descent} yields
\begin{align}
\frac{1}{T}\sum_{t=0}^{T-1}\mathbb E\big[\Psi_{\mathcal C,\lambda}(w_t)\big]
&\le
\frac{\Delta_F}{T\eta}
+R\sigma\left(
(1-\beta_1)
+\beta_1\sqrt{1-\beta_2}
+\frac{\beta_1}{T(1-\beta_2)}
+\frac1T
\right)
\nonumber\\
&\quad
+LR^2\eta\left(
\frac{\beta_1}{1-\beta_2}
+\frac12
\right).
\label{eq:vanilla-main-app}
\end{align}
Since \eqref{eq:vanilla-main-app} is exactly \eqref{eq:vanilla-main}, this proves Theorem~\ref{thm:vanilla}.

\subsection{Proof of Lemma~\ref{lem:vanilla-hat}}

\begin{proof}
Let
\[
\mathcal F_t=\sigma(\xi_0,\dots,\xi_t)
\]
be the natural filtration, and define
\[
\epsilon_t=\nabla f(w_t;\xi_t)-\nabla F(w_t).
\]
Since \(w_t\) is \(\mathcal F_{t-1}\)-measurable, Assumption~\ref{asmp:bound_var} implies
\[
\mathbb E[\epsilon_t\mid \mathcal F_{t-1}]=0,
\qquad
\mathbb E[\|\epsilon_t\|^2\mid \mathcal F_{t-1}]\le \sigma^2.
\]

Define
\[
\tilde\epsilon_t=m_t-\nabla F(w_t),
\qquad
\hat\epsilon_t=g_t-\nabla F(w_t).
\]
For \(t\ge 1\), let
\[
\delta_t=\nabla F(w_{t-1})-\nabla F(w_t).
\]
By Assumption~\ref{asmp:l_smooth} and Lemma~\ref{lem:common-step-geometry},
\[
\|\delta_t\|
\le
L\|w_t-w_{t-1}\|
\le
L\eta R.
\]

From
\[
m_t=\beta_2 m_{t-1}+(1-\beta_2)(\nabla F(w_t)+\epsilon_t),
\]
we obtain
\begin{align}
\tilde\epsilon_t
&=
\beta_2\tilde\epsilon_{t-1}
+\beta_2\delta_t
+(1-\beta_2)\epsilon_t.
\label{eq:vanilla-tilde-rec}
\end{align}
Similarly, from
\[
g_t=\beta_1 m_{t-1}+(1-\beta_1)(\nabla F(w_t)+\epsilon_t),
\]
we get
\begin{align}
\hat\epsilon_t
&=
\beta_1\tilde\epsilon_{t-1}
+\beta_1\delta_t
+(1-\beta_1)\epsilon_t.
\label{eq:vanilla-hat-rec}
\end{align}

Because \(m_{-1}=\nabla f(w_0;\xi_0)\), the first update gives
\[
m_0=\nabla f(w_0;\xi_0),
\qquad
\tilde\epsilon_0=\epsilon_0.
\]
Unrolling \eqref{eq:vanilla-tilde-rec}, for \(t\ge 1\),
\[
\tilde\epsilon_{t-1}
=
\beta_2^{t-1}\epsilon_0
+
\sum_{s=1}^{t-1}\beta_2^{t-s}\delta_s
+
(1-\beta_2)\sum_{s=1}^{t-1}\beta_2^{t-1-s}\epsilon_s.
\]
Substituting this into \eqref{eq:vanilla-hat-rec}, we decompose
\[
\hat\epsilon_t=N_t+B_t,
\]
where
\[
N_t
=
\beta_1\beta_2^{t-1}\epsilon_0
+
\beta_1(1-\beta_2)\sum_{s=1}^{t-1}\beta_2^{t-1-s}\epsilon_s
+
(1-\beta_1)\epsilon_t,
\]
and
\[
B_t
=
\beta_1\sum_{s=1}^{t}\beta_2^{t-s}\delta_s.
\]

For \(t=0\), we set
\[
N_0=\hat\epsilon_0=\epsilon_0,
\qquad
B_0=0.
\]

Using \(\|\delta_s\|\le L\eta R\),
\[
\|B_t\|
\le
\beta_1\sum_{s=1}^{t}\beta_2^{t-s}\|\delta_s\|
\le
\frac{\beta_1}{1-\beta_2}L\eta R.
\]
Lemma~\ref{lem:common-martingale} gives
\[
\frac{1}{T}\sum_{t=0}^{T-1}\mathbb E\|N_t\|
\le
\frac{\sigma}{T}
+
\sigma\left(
(1-\beta_1)
+\beta_1\sqrt{1-\beta_2}
+\frac{\beta_1}{T(1-\beta_2)}
\right).
\]
Therefore
\[
\frac{1}{T}\sum_{t=0}^{T-1}\mathbb E\|\hat\epsilon_t\|
\le
\frac{1}{T}\sum_{t=0}^{T-1}\mathbb E\|N_t\|
+\sup_{0\le t\le T-1}\|B_t\|,
\]
which proves \eqref{eq:vanilla-hat-bound}.
\end{proof}

\subsection{Proof of Corollary~\ref{cor:vanilla-stochastic}}

\begin{proof}
Applying Corollary~\ref{cor:vanilla-beta-equal} with
\[
\eta=\frac{1}{RT^{3/4}},
\qquad
\beta=1-\frac{1}{T^{1/2}},
\]
gives
\begin{align*}
\frac{1}{T}\sum_{t=0}^{T-1}\mathbb E\big[\Psi_{\mathcal C,\lambda}(w_t)\big]
&\le
\frac{\Delta_F}{T\eta}
+R\sigma\left(
(1+\beta)\sqrt{1-\beta}
+\frac{\beta}{T(1-\beta)}
+\frac{1}{T}
\right)
+LR^2\eta\left(
\frac{\beta}{1-\beta}
+\frac12
\right).
\end{align*}
Now
\[
\frac{\Delta_F}{T\eta}
=
\frac{R\Delta_F}{T^{1/4}},
\qquad
(1+\beta)\sqrt{1-\beta}
\le
\frac{2}{T^{1/4}},
\qquad
\frac{\beta}{T(1-\beta)}
\le
\frac{1}{T^{1/2}},
\qquad
\frac{1}{T}\le \frac{1}{T^{1/4}}.
\]
Also,
\[
LR^2\eta\left(
\frac{\beta}{1-\beta}
+\frac12
\right)
\le
LR^2\cdot \frac{1}{RT^{3/4}}
\left(
T^{1/2}+\frac12
\right)
=
O\!\left(\frac{RL}{T^{1/4}}\right).
\]
Combining the bounds yields
\[
\frac{1}{T}\sum_{t=0}^{T-1}\mathbb E\big[\Psi_{\mathcal C,\lambda}(w_t)\big]
\le
O\!\left(
\frac{R(\Delta_F+\sigma+L)}{T^{1/4}}
\right),
\]
which proves Corollary~\ref{cor:vanilla-stochastic}.
\end{proof}

\subsection{Proof of Corollary~\ref{cor:vanilla-deterministic}}

\begin{proof}
When \(\sigma=0\), Corollary~\ref{cor:vanilla-beta-equal} becomes
\[
\frac{1}{T}\sum_{t=0}^{T-1}
\mathbb E\big[\Psi_{\mathcal C,\lambda}(w_t)\big]
\le
\frac{\Delta_F}{T\eta}
+LR^2\eta\left(
\frac{\beta}{1-\beta}
+\frac12
\right).
\]
Choosing
\[
\eta=\frac{1}{RT^{1/2}},
\]
gives
\[
\frac{\Delta_F}{T\eta}
=
\frac{R\Delta_F}{T^{1/2}},
\qquad
LR^2\eta\left(
\frac{\beta}{1-\beta}
+\frac12
\right)
=
\frac{RL}{T^{1/2}}
\left(
\frac{\beta}{1-\beta}
+\frac12
\right).
\]
Since \(\beta\in[0,1)\) is fixed independently of \(T\), this yields
\[
\frac{1}{T}\sum_{t=0}^{T-1}
\mathbb E\big[\Psi_{\mathcal C,\lambda}(w_t)\big]
\le
O\!\left(
\frac{R\bigl(\Delta_F+L/(1-\beta)\bigr)}{T^{1/2}}
\right),
\]
which proves Corollary~\ref{cor:vanilla-deterministic}.
\end{proof}

\section{Theoretical Analysis of the Variance-Reduced Stochastic LMO Class}
\label{app:gvr}

We analyze the LMO-VR specialization of Algorithm~\ref{alg:unified-lmo-framework}, obtained by setting
\[
\eta_{1,t}=\eta_{2,t}=\eta_t
\]
for every \(t=0,1,\ldots,T-1\). Since \(x_0=w_0\), this implies \(x_t=w_t\) for every \(t=0,1,\ldots,T\). For notational convenience, we write \(w_{-1}=w_0\).

For every \(t=0,1,\ldots,T-1\), let
\[
\alpha_{1,t}=\alpha_1,
\qquad
\alpha_{2,t}=\alpha_2,
\qquad
\beta_{1,t}=\beta_1,
\qquad
\beta_{2,t}=\beta_2,
\qquad
\eta_t=\eta.
\]

In this specialization, the query and momentum recursions are
\[
g_t=
\beta_1m_{t-1}
+(1-\beta_1)\nabla f(w_t;\xi_t)
+\alpha_1\bigl(\nabla f(w_t;\xi_t)-\nabla f(w_{t-1};\xi_t)\bigr),
\]
and
\[
m_t=
\beta_2m_{t-1}
+(1-\beta_2)\nabla f(w_t;\xi_t)
+\alpha_2\bigl(\nabla f(w_t;\xi_t)-\nabla f(w_{t-1};\xi_t)\bigr).
\]

Applying Lemma~\ref{lem:common-descent} gives
\begin{align}
\frac{1}{T}\sum_{t=0}^{T-1}\mathbb E\big[\Psi_{\mathcal C,\lambda}(w_t)\big]
\le
\frac{\Delta_F}{T\eta}
+\frac{R}{T}\sum_{t=0}^{T-1}\mathbb E\|\hat\epsilon_t\|
+\frac{L}{2}R^2\eta,
\label{eq:gvr-avg-descent}
\end{align}
where
\[
\hat\epsilon_t=g_t-\nabla F(w_t).
\]

\begin{lemma}[Tracking-error bound]
\label{lem:gvr-hat}
Suppose Assumptions~\ref{asmp:l_smooth}--\ref{asmp:avg_sample_smth} hold. Then
\begin{align}
\frac{1}{T}\sum_{t=0}^{T-1}\mathbb E\|\hat\epsilon_t\|
&\le
\frac{\sigma}{T}
+\sigma\left(
(1-\beta_1)
+\beta_1\sqrt{1-\beta_2}
+\frac{\beta_1}{T(1-\beta_2)}
\right)
\nonumber\\
&\quad
+LR\eta\left(
|\beta_1-\alpha_1|
+\frac{\beta_1|\beta_2-\alpha_2|}{1-\beta_2}
+|\alpha_1|
+\frac{\beta_1|\alpha_2|}{\sqrt{1-\beta_2}}
\right).
\label{eq:gvr-hat}
\end{align}
\end{lemma}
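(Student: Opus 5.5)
We follow the same template as Lemma~\ref{lem:vanilla-hat}. Define the martingale noise $\epsilon_t=\nabla f(w_t;\xi_t)-\nabla F(w_t)$, the centered correction noise
\[
\zeta_t:=\bigl(\nabla f(w_t;\xi_t)-\nabla f(w_{t-1};\xi_t)\bigr)-\bigl(\nabla F(w_t)-\nabla F(w_{t-1})\bigr),
\]
and the drift $\delta_t=\nabla F(w_{t-1})-\nabla F(w_t)$, with $\zeta_0=\delta_0=0$ since $w_{-1}=w_0$. Since $w_t,w_{t-1}$ are $\mathcal F_{t-1}$-measurable, $\zeta_t$ is a martingale difference. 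By Lemma~\ref{lem:common-centered} and Lemma~\ref{lem:common-step-geometry} it satisfies $\mathbb E[\|\zeta_t\|^2\mid\mathcal F_{t-1}]\le L^2\eta^2R^2$. Also $\|\delta_t\|\le L\eta R$.

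Substituting $\nabla f(w_t;\xi_t)-\nabla f(w_{t-1};\xi_t)=-\delta_t+\zeta_t$ into the recursions gives
\[
\tilde\epsilon_t=\beta_2\tilde\epsilon_{t-1}+(\beta_2-\alpha_2)\delta_t+(1-\beta_2)\epsilon_t+\alpha_2\zeta_t,
\]
\[
\hat\epsilon_t=\beta_1\tilde\epsilon_{t-1}+(\beta_1-\alpha_1)\delta_t+(1-\beta_1)\epsilon_t+\alpha_1\zeta_t.
\]
Here $\tilde\epsilon_0=\epsilon_0$. Unrolling $\tilde\epsilon_{t-1}$ and substituting, we split $\hat\epsilon_t=N_t+B_t+M_t$:
\begin{itemize}
\item $N_t$ is exactly the $\epsilon$-combination from Lemma~\ref{lem:common-martingale}. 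That lemma gives the $\sigma$-terms verbatim, including $\sigma/T$ from $t=0$.
\item $B_t=(\beta_1-\alpha_1)\delta_t+\beta_1(\beta_2-\alpha_2)\sum_{s=1}^{t-1}\beta_2^{t-1-s}\delta_s$ is deterministically bounded by $LR\eta\bigl(|\beta_1-\alpha_1|+\beta_1|\beta_2-\alpha_2|/(1-\beta_2)\bigr)$ via the geometric sum.
\item $M_t=\alpha_1\zeta_t+\beta_1\alpha_2\sum_{s=1}^{t-1}\beta_2^{t-1-s}\zeta_s$ carries the correction noise.
\end{itemize}

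The main step is bounding $M_t$. We would not bound it termwise, since that would give $\beta_1|\alpha_2|/(1-\beta_2)$ rather than the claimed $1/\sqrt{1-\beta_2}$. Instead we use orthogonality of martingale differences, as in the proof of Lemma~\ref{lem:common-martingale}. Write $\mathbb E\|M_t\|\le|\alpha_1|\,\mathbb E\|\zeta_t\|+\beta_1|\alpha_2|\,\mathbb E\|\sum_s\beta_2^{t-1-s}\zeta_s\|$. The first piece is at most $|\alpha_1|L\eta R$ by Jensen. For the second, cross terms vanish, so
\[
\mathbb E\Bigl\|\sum_s\beta_2^{t-1-s}\zeta_s\Bigr\|^2\le L^2\eta^2R^2\sum_{k\ge0}\beta_2^{2k}\le \frac{L^2\eta^2R^2}{1-\beta_2}.
\]
Jensen then yields $\beta_1|\alpha_2|L\eta R/\sqrt{1-\beta_2}$.

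Finally, average over $t$: use $\frac1T\sum_t\mathbb E\|\hat\epsilon_t\|\le\frac1T\sum_t\mathbb E\|N_t\|+\sup_t\|B_t\|+\sup_t\mathbb E\|M_t\|$. Collecting the three bounds gives \eqref{eq:gvr-hat}.

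The only delicate point is the measurability bookkeeping showing that $\zeta_t$ is a martingale difference under the natural filtration. The conditional variance bound on $\zeta_t$ also needs care: Lemma~\ref{lem:common-centered} is applied conditionally on $\mathcal F_{t-1}$ with the fixed points $x=w_t$, $y=w_{t-1}$. Everything else is routine geometric-series arithmetic.
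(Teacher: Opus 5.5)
Your proposal is correct and matches the paper's proof essentially line by line. It uses the same $\epsilon_t,\zeta_t,\delta_t$, the same recursions for $\tilde\epsilon_t$ and $\hat\epsilon_t$, and the same split into a Lemma~\ref{lem:common-martingale} noise term, a geometrically summed drift term, and a correction-noise term controlled by martingale orthogonality plus Jensen. The only cosmetic difference is that you separate $\alpha_1\zeta_t$ from the weighted sum by the triangle inequality before using orthogonality, whereas the paper bounds $\mathbb E\|Z_t\|^2$ jointly and then takes square roots; both give the same factor $|\alpha_1|+\beta_1|\alpha_2|/\sqrt{1-\beta_2}$.
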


\begin{proof}
Let
\[
\mathcal F_t=\sigma(\xi_0,\dots,\xi_t)
\]
be the natural filtration, and define
\[
h_t=\nabla f(w_t;\xi_t),
\qquad
\epsilon_t=h_t-\nabla F(w_t).
\]
For \(t\ge 1\), define
\[
h_t^-=\nabla f(w_{t-1};\xi_t),
\qquad
\delta_t=\nabla F(w_{t-1})-\nabla F(w_t),
\]
and
\[
\zeta_t
=
\bigl(h_t-h_t^-\bigr)
-
\bigl(\nabla F(w_t)-\nabla F(w_{t-1})\bigr).
\]

By Assumption~\ref{asmp:bound_var},
\[
\mathbb E[\epsilon_t\mid \mathcal F_{t-1}]=0,
\qquad
\mathbb E[\|\epsilon_t\|^2\mid \mathcal F_{t-1}]\le \sigma^2.
\]
By Lemma~\ref{lem:common-centered},
\[
\mathbb E[\zeta_t\mid \mathcal F_{t-1}]=0,
\qquad
\mathbb E[\|\zeta_t\|^2\mid \mathcal F_{t-1}]
\le
L^2\|w_t-w_{t-1}\|^2.
\]
Moreover, by Assumption~\ref{asmp:l_smooth} and Lemma~\ref{lem:common-step-geometry},
\[
\|\delta_t\|
\le
L\|w_t-w_{t-1}\|
\le
L\eta R,
\]
and
\[
\mathbb E[\|\zeta_t\|^2\mid \mathcal F_{t-1}]
\le
L^2\eta^2R^2.
\]

Since \(w_{-1}=w_0\), \(m_{-1}=\nabla f(w_0;\xi_0)\), and
\[
\nabla f(w_0;\xi_0)-\nabla f(w_{-1};\xi_0)=0,
\]
the first step satisfies
\[
m_0=\nabla f(w_0;\xi_0),
\qquad
g_0=\nabla f(w_0;\xi_0).
\]
Therefore
\[
\tilde\epsilon_0=\hat\epsilon_0=\epsilon_0,
\qquad
\tilde\epsilon_t=m_t-\nabla F(w_t).
\]

For \(t\ge 1\), using
\[
h_t-h_t^-=\zeta_t-\delta_t,
\]
the recursion for \(m_t\) gives
\begin{align}
\tilde\epsilon_t
&=
\beta_2\tilde\epsilon_{t-1}
+
(\beta_2-\alpha_2)\delta_t
+
(1-\beta_2)\epsilon_t
+
\alpha_2\zeta_t.
\label{eq:gvr-buffer-rec}
\end{align}
Similarly, the recursion for \(g_t\) gives
\begin{align}
\hat\epsilon_t
&=
\beta_1\tilde\epsilon_{t-1}
+
(\beta_1-\alpha_1)\delta_t
+
(1-\beta_1)\epsilon_t
+
\alpha_1\zeta_t.
\label{eq:gvr-query-rec}
\end{align}

Unrolling \eqref{eq:gvr-buffer-rec} and substituting into \eqref{eq:gvr-query-rec}, we decompose
\[
\hat\epsilon_t=N_t+Z_t+B_t,
\]
where
\begin{align*}
N_t
&=
\beta_1\beta_2^{t-1}\epsilon_0
+
\beta_1(1-\beta_2)\sum_{s=1}^{t-1}\beta_2^{t-1-s}\epsilon_s
+
(1-\beta_1)\epsilon_t,\\
Z_t
&=
\beta_1\alpha_2\sum_{s=1}^{t-1}\beta_2^{t-1-s}\zeta_s
+
\alpha_1\zeta_t,\\
B_t
&=
\beta_1(\beta_2-\alpha_2)\sum_{s=1}^{t-1}\beta_2^{t-1-s}\delta_s
+
(\beta_1-\alpha_1)\delta_t.
\end{align*}

For \(t=0\), we set
\[
N_0=\hat\epsilon_0=\epsilon_0,
\qquad
Z_0=0,
\qquad
B_0=0.
\]

Lemma~\ref{lem:common-martingale} gives
\begin{align}
\frac{1}{T}\sum_{t=0}^{T-1}\mathbb E\|N_t\|
&\le
\frac{\sigma}{T}
+\sigma\left(
(1-\beta_1)
+\beta_1\sqrt{1-\beta_2}
+\frac{\beta_1}{T(1-\beta_2)}
\right).
\label{eq:gvr-N}
\end{align}

Using \(\|\delta_s\|\le L\eta R\),
\begin{align}
\|B_t\|
&\le
\beta_1|\beta_2-\alpha_2|
\sum_{s=1}^{t-1}\beta_2^{t-1-s}\|\delta_s\|
+
|\beta_1-\alpha_1|\,\|\delta_t\|
\notag\\
&\le
LR\eta\left(
\frac{\beta_1|\beta_2-\alpha_2|}{1-\beta_2}
+
|\beta_1-\alpha_1|
\right).
\label{eq:gvr-B}
\end{align}

Since \(\{\zeta_t\}\) is a martingale-difference sequence and
\[
\mathbb E[\|\zeta_t\|^2\mid \mathcal F_{t-1}]
\le
L^2\eta^2R^2,
\]
the cross terms vanish and
\begin{align*}
\mathbb E\|Z_t\|^2
&\le
\beta_1^2\alpha_2^2
\sum_{s=1}^{t-1}\beta_2^{2t-2-2s}L^2\eta^2R^2
+
\alpha_1^2L^2\eta^2R^2 \\
&\le
\left(
\frac{\beta_1^2\alpha_2^2}{1-\beta_2}
+
\alpha_1^2
\right)L^2\eta^2R^2.
\end{align*}
Hence, by Jensen's inequality,
\begin{align}
\mathbb E\|Z_t\|
&\le
LR\eta\left(
\frac{\beta_1|\alpha_2|}{\sqrt{1-\beta_2}}
+
|\alpha_1|
\right).
\label{eq:gvr-Z}
\end{align}

Combining \eqref{eq:gvr-N}, \eqref{eq:gvr-B}, and \eqref{eq:gvr-Z}, and using
\[
\|\hat\epsilon_t\|\le \|N_t\|+\|Z_t\|+\|B_t\|,
\]
proves \eqref{eq:gvr-hat}.
\end{proof}

Substituting Lemma~\ref{lem:gvr-hat} into \eqref{eq:gvr-avg-descent} yields
\begin{align}
\frac{1}{T}\sum_{t=0}^{T-1}\mathbb E\big[\Psi_{\mathcal C,\lambda}(w_t)\big]
&\le
\frac{\Delta_F}{T\eta}
+R\sigma\left(
(1-\beta_1)
+\beta_1\sqrt{1-\beta_2}
+\frac{\beta_1}{T(1-\beta_2)}
+\frac{1}{T}
\right)
\nonumber\\
&\quad
+LR^2\eta\left(
|\beta_1-\alpha_1|
+\frac{\beta_1|\beta_2-\alpha_2|}{1-\beta_2}
+|\alpha_1|
+\frac{\beta_1|\alpha_2|}{\sqrt{1-\beta_2}}
+\frac12
\right),
\end{align}
which proves Theorem~\ref{thm:gvr-main}.
\subsection{Proof of Corollary~\ref{cor:gvr-stochastic}}

\begin{proof}
Under the assumptions of Theorem~\ref{thm:gvr-main}, suppose
\[
0\le \alpha_1\le \beta_1,
\qquad
\alpha_2=\beta_2.
\]
Then the term involving \(|\beta_2-\alpha_2|\) vanishes, and
\[
|\beta_1-\alpha_1|+|\alpha_1|
=
(\beta_1-\alpha_1)+\alpha_1
=
\beta_1.
\]
Therefore Theorem~\ref{thm:gvr-main} reduces to
\begin{align}
\frac{1}{T}\sum_{t=0}^{T-1}\mathbb E\big[\Psi_{\mathcal C,\lambda}(w_t)\big]
&\le
\frac{\Delta_F}{T\eta}
+R\sigma\left(
(1-\beta_1)
+\beta_1\sqrt{1-\beta_2}
+\frac{\beta_1}{T(1-\beta_2)}
+\frac{1}{T}
\right)
\nonumber\\
&\quad
+LR^2\eta\left(
\beta_1
+\frac{\beta_1\beta_2}{\sqrt{1-\beta_2}}
+\frac12
\right).
\label{eq:gvr-stochastic-reduced}
\end{align}

Now choose
\[
\eta=\frac{1}{RT^{2/3}},
\qquad
\beta_2=1-\frac{1}{T^{2/3}},
\qquad
1-\beta_1\in\left[\frac{1}{T^{2/3}},\frac{1}{T^{1/3}}\right].
\]
Then
\[
\frac{\Delta_F}{T\eta}
=
\frac{R\Delta_F}{T^{1/3}},
\quad
1-\beta_1=O(T^{-1/3}),
\quad
\sqrt{1-\beta_2}=\frac{1}{T^{1/3}},
\quad
\frac{1}{T(1-\beta_2)}=\frac{1}{T^{1/3}},
\quad
\frac{1}{T}\le \frac{1}{T^{1/3}}.
\]
Moreover,
\[
LR^2\eta\left(
\beta_1
+\frac{\beta_1\beta_2}{\sqrt{1-\beta_2}}
+\frac12
\right)
\le
LR^2\cdot \frac{1}{RT^{2/3}}
\left(
1+T^{1/3}+\frac12
\right)
=
O\!\left(\frac{RL}{T^{1/3}}\right).
\]
Substituting these bounds into \eqref{eq:gvr-stochastic-reduced} yields
\[
\frac{1}{T}\sum_{t=0}^{T-1}\mathbb E\big[\Psi_{\mathcal C,\lambda}(w_t)\big]
\le
O\!\left(\frac{R(\Delta_F+\sigma+L)}{T^{1/3}}\right)
=
O\!\left(T^{-1/3}\right),
\]
which proves Corollary~\ref{cor:gvr-stochastic}.
\end{proof}

\section{Detailed Experimental Settings}
\label{app:experimental-details}

{All experiments were conducted in a Python 3.11.9 environment on a machine with a 64-core Intel Xeon Gold 6226R CPU at 2.90GHz, 512GB of memory, and an RTX 3090 GPU. All algorithms were implemented in PyTorch 2.4.1. Hyperparameters are tuned separately for each optimizer. We do not use learning-rate decay to isolate the effect of the optimizer update. }

\paragraph{Implementation of Muon.}

The momentum update rule of Muon proposed by \citet{jordan2024muon} is given by
\[
m_t = \beta_2 m_{t-1} + \nabla f(w_t; \xi_t).
\]
Compared to Algorithm~\ref{alg:unified-lmo-framework}, this is equivalent to scaling the loss function by a factor of $1/(1 - \beta_2)$. For a fair comparison with baselines, we adopt the momentum formulation used in Algorithm~\ref{alg:unified-lmo-framework}. Empirically, we observe no significant difference in performance between the two implementations.

\subsection{Multiclass Image Classification}
{For the image classification experiments in Section \ref{sec:experiments} and Appendix \ref{subsec:addCIFAR},} we follow the experimental setup of \citet{sfyraki2026lionsmuonsoptimizationstochastic} and train ResNet-18 on CIFAR-10 for 200 epochs. A single training run takes 1 to 2 hours on one RTX 3090 GPU. 

We first search over the learning-rate and weight-decay grids in Table~\ref{tab:img-cls-search}.

\begin{table}[!htb]
    \centering
    \small
    \caption{Hyperparameter search space for multiclass image classification.}
    \label{tab:img-cls-search}
    \begin{tabular}{ccc}
    \toprule
       & learning rate & weight decay  \\
        \midrule
    Muon  & \multirow{2}{*}{1e-1, 5e-2, 1e-2, 5e-3, 1e-3, 5e-4, 1e-4} & \multirow{10}{*}{0, 1e-4, 5e-4, 1e-3, 5e-3, 1e-2, 5e-2, 1e-1, 5e-1, 1.0}\\
    Muon-VR & &\\    \cmidrule{1-2}
    Muon-IGT    &  \multirow{5}{*}{1e-3, 5e-4, 1e-4, 5e-5, 1e-5, 5e-6, 1e-6}\\
    NIGT & &\\
    Lion & &\\
    Lion-VR & &\\
    AdamW & &\\ \cmidrule{1-2}
    Lion-IGT &1e-5, 5e-6, 1e-6, 5e-7, 1e-7, 5e-8 &\\
    \bottomrule
    \end{tabular}
    \label{tab:placeholder}
\end{table}

For the VR methods, we fix $\alpha_2$ equal to $\beta_2$ and search $\alpha_1$ over \{0, 0.1, 0.5, 0.9\}, subject to $\alpha_1$ being no larger than $\beta_1$.
The final hyperparameter choices used in the experiments are summarized in Table~\ref{tab:img-cls-settings}.

\begin{table*}[!htb]
    \centering
    \small
    \caption{Final hyperparameter settings for multiclass image classification.}
    \label{tab:img-cls-settings}
    \begin{tabular}{lcccccccc}
        \toprule
        & AdamW & NIGT & Lion & Lion-VR & Lion-IGT & Muon & Muon-VR & Muon-IGT \\
        \midrule
        Learning rate & 5e-4 & 5e-4 & 1e-4 & 1e-5 & 1e-5 & 5e-2 & 5e-2 & 5e-4 \\
        Weight decay & 5e-3 & 1e-1 & 1e-4 & 5e-2 & 1.0 & 5e-4 & 1e-3 & 5e-1 \\
        $\beta_1$ & 0.9 & 0.99 & 0.9 & 0.9 & 0.9 & 0.99 & 0.99 & 0.9 \\
        $\beta_2$ & 0.99 & 0.99 & 0.99 & 0.99 & 0.99 & 0.99 & 0.99 & 0.99 \\
        \midrule
        Model & \multicolumn{8}{c}{ResNet-18} \\
        Dataset & \multicolumn{8}{c}{CIFAR-10} \\
        Batch size & \multicolumn{8}{c}{128} \\
        Dropout & \multicolumn{8}{c}{0} \\
        \bottomrule
    \end{tabular}
\end{table*}

\subsection{Language Modeling}

{For the language modeling experiments in Appendix \ref{app:lang_model},} we follow the setup of \citet{sfyraki2026lionsmuonsoptimizationstochastic} and train nanoGPT~\citep{Karpathy2022} (MIT License) on the Shakespeare dataset~\citep{Karpathy2015} from scratch, for 5000 optimization steps. A single Shakespeare/nanoGPT run takes approximately 1 hour on one RTX 3090 GPU.

 We search over the hyperparameter grid in Table~\ref{tab:lm-search} and select the final hyperparameter settings in Table~\ref{tab:lm-settings}.

\begin{table}[!htb]
    \centering
    \small
    \caption{Hyperparameter search space for language modeling on Shakespeare.}
    \label{tab:lm-search}
    \begin{tabular}{ccc}
    \toprule
       & learning rate & weight decay  \\
        \midrule
    Muon  & \multirow{1}{*}{1e-1, 5e-2, 1e-2, 5e-3, 1e-3, 5e-4, 1e-4} & \multirow{6.2}{*}{0, 1e-4, 5e-4, 1e-3, 5e-3, 1e-2, 5e-2, 1e-1, 5e-1, 1.0}\\   \cmidrule{1-2}
    Muon-IGT    &  \multirow{3}{*}{1e-3, 5e-4, 1e-4, 5e-5, 1e-5, 5e-6, 1e-6}\\
    Lion & &\\
    AdamW & &\\ \cmidrule{1-2}
    Lion-IGT &1e-5, 5e-6, 1e-6, 5e-7, 1e-7, 5e-8 &\\
    \bottomrule
    \end{tabular}
\end{table}

\begin{table*}[!htb]
    \centering
    \small
    \caption{Final hyperparameter settings for language modeling on Shakespeare.}
    \label{tab:lm-settings}
    \begin{tabular}{lcccc}
        \toprule
        & AdamW & Lion & Muon & Muon-IGT \\
        \midrule
        Learning rate & 5e-4 & 1e-5 & 5e-2 & 5e-4 \\
        Weight decay & 5e-3 & 5e-2 & 5e-4 & 5e-1 \\
        Nesterov & - & - & Yes & - \\
       $\beta_1$ & 0.9 & 0.9 & 0.98$^2$ (Nesterov) & 0.9 \\
        \midrule
       $\beta_2$ & \multicolumn{4}{c}{0.98} \\
        Model & \multicolumn{4}{c}{nanoGPT (10M)} \\
        Dataset & \multicolumn{4}{c}{Shakespeare} \\
        Batch size & \multicolumn{4}{c}{64} \\
        Gradient accumulation steps & \multicolumn{4}{c}{1} \\
        Block size & \multicolumn{4}{c}{256} \\
        Dropout & \multicolumn{4}{c}{0.2} \\
        \bottomrule
    \end{tabular}
\end{table*}

\subsection{Large-scale Language Modeling}

{For the large-scale language modeling experiments in Appendix \ref{subsec:larger},} we adopt the model architecture of \citet{pmlr-v267-pethick25a} and train a 124M-parameter nanoGPT model on OpenWebText dataset~\citep{Gokaslan2019OpenWeb} from scratch. Concretely, we follow the modded-nanoGPT implementation, which incorporates several modern architectural choices: rotary positional embeddings instead of learned positional embeddings, RMSNorm~\citep{NEURIPS2019_1e8a1942} in place of LayerNorm, and the scaled ReLU  activation function instead of GELU. A single OpenWebText/124M nanoGPT run takes approximately 8 hours on one RTX 3090 GPU. Since the model does not exhibit clear signs of overfitting in this setting, we omit dropout and weight decay. %

{We search over the learning-rate grid in Table~\ref{tab:llm-search} and select the final hyperparameter settings reported in Table~\ref{tab:llm-settings}.}

\begin{table}[!htb]
    \centering
    \small
    \caption{Learning-rate search space for large-scale language modeling.}
    \label{tab:llm-search}
    \begin{tabular}{cc}
    \toprule
       & learning rate \\
        \midrule
    Muon  & \multirow{1}{*}{1e-1, 5e-2, 1e-2, 5e-3, 1e-3}\\   
    \cmidrule{1-2}
    Muon-IGT    &  \multirow{3}{*}{1e-3, 5e-4, 1e-4, 5e-5, 1e-5, 5e-6, 1e-6}\\
    Lion & \\
    AdamW & \\ 
    \bottomrule
    \end{tabular}
\end{table}

\begin{table}[!htb]
    \centering
    \small
    \caption{Final hyperparameter settings for large-scale language modeling on OpenWebText.}
    \label{tab:llm-settings}
    \begin{tabular}{lcccc}
        \toprule
        & AdamW & Lion & Muon & Muon-IGT \\
        \midrule
        Learning rate & 5e-4 & 1e-5 & 5e-2 & 5e-4 \\
        Nesterov & - & - & Yes & - \\
        $\beta_1$ & 0.95 & 0.95 & 0.99$^2$ (Nesterov) & 0.5 \\
        $\beta_2$ & 0.99 & 0.98 & 0.99 & 0.5 \\
        \midrule
        Weight decay & \multicolumn{4}{c}{0} \\
        Dropout & \multicolumn{4}{c}{0} \\
        Model & \multicolumn{4}{c}{nanoGPT (124M)} \\
        Dataset & \multicolumn{4}{c}{OpenWebText} \\
        Vocabulary size & \multicolumn{4}{c}{50304} \\
        Batch size & \multicolumn{4}{c}{16} \\
        Gradient accumulation steps & \multicolumn{4}{c}{16} \\
        Block size & \multicolumn{4}{c}{1024} \\
        \bottomrule
    \end{tabular}
\end{table}

\section{Additional Experimental Results}
\label{app:additional-experiments}
{We report additional results on multiclass image classification, language modeling, and large-scale language modeling. Hyperparameter choices and implementation details are provided in Appendix~\ref{app:experimental-details}.}

\subsection{Additional Results on Multiclass Image Classification}
\label{subsec:addCIFAR}
{For the CIFAR-10  task considered in the main text, we conduct additional experiments. For each optimizer, hyperparameters are selected based on the final test accuracy.}

\subsubsection{Ablation Study}
To understand the gains of Muon-IGT, we compare the three methods in Table~\ref{tab:ablation-components}, which differ in the use of double momentum and IGT. Figure~\ref{fig:appendix-ablation} illustrates the contribution of each component. Double momentum improves test accuracy over Muon, while further incorporating IGT yields additional gains in both test accuracy and test loss, resulting in the best overall performance.

\begin{table}[!htb]
    \centering
    \small
    \caption{Ablation variants for Muon-IGT.}
    \label{tab:ablation-components}
    \begin{tabular}{ccc}
        \toprule
        Method & Double momentum & IGT \\
        \midrule
        Muon & No & No \\
        Muon* & Yes & No \\
        Muon-IGT & Yes & Yes \\
        \bottomrule
    \end{tabular}
\end{table}

\begin{figure}[!htb]
    \centering
    \begin{subfigure}[b]{0.49\textwidth}
        \centering
        \includegraphics[width=\linewidth]{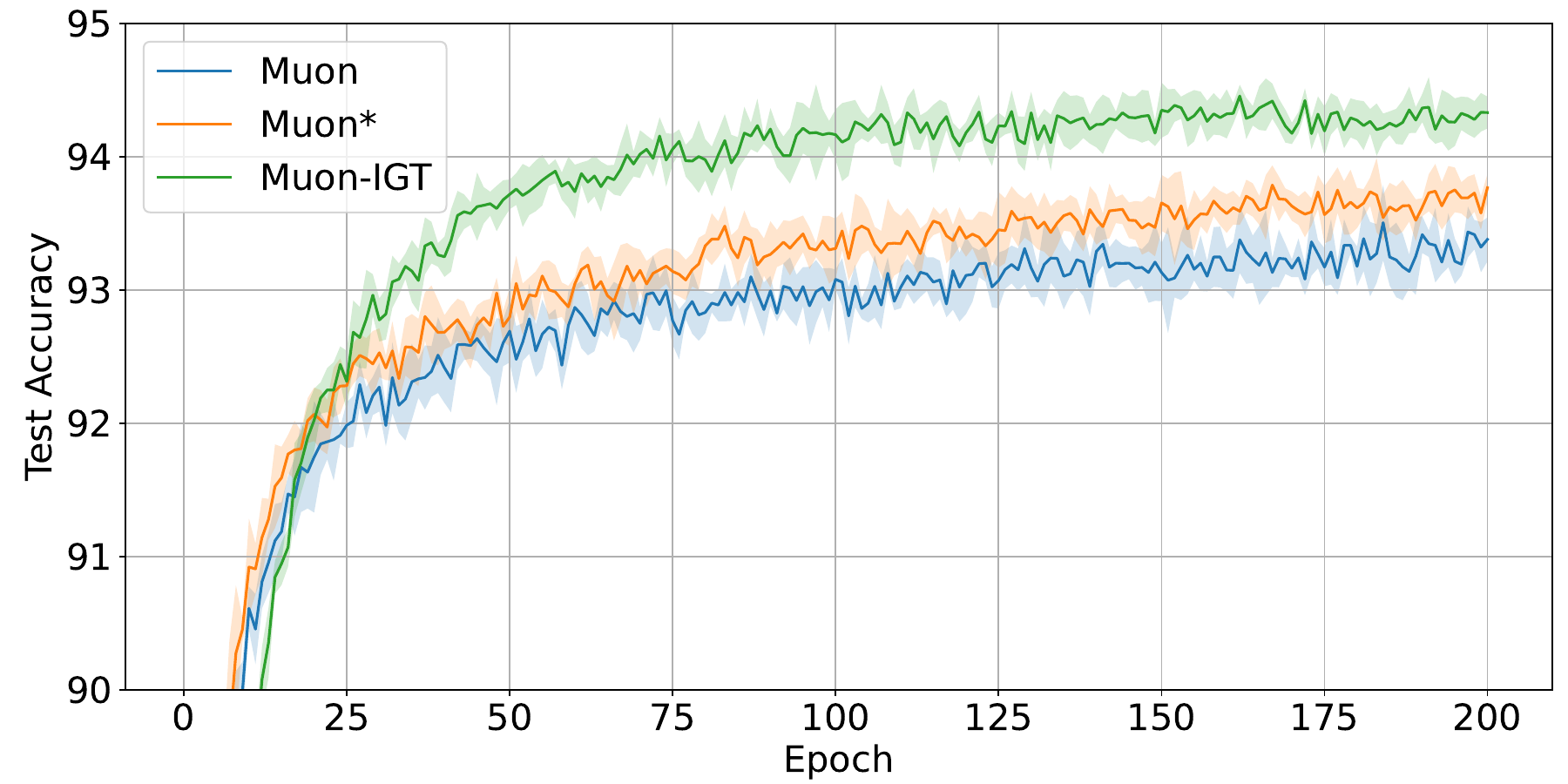}
        \caption{Test accuracy over epochs.}
        \label{fig:appendix-ablation-acc}
    \end{subfigure}
    \hfill
    \begin{subfigure}[b]{0.49\textwidth}
        \centering
        \includegraphics[width=\linewidth]{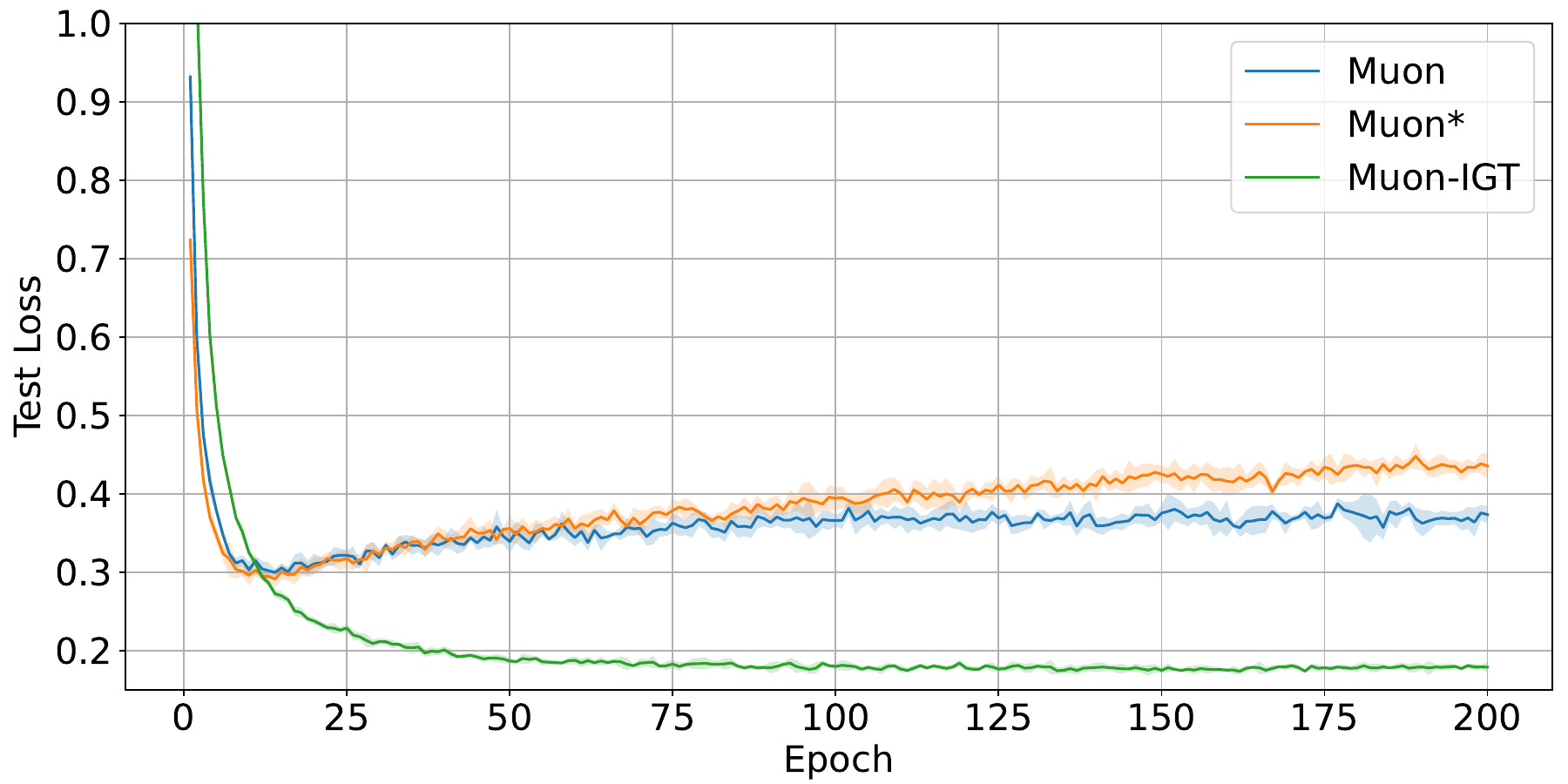}
        \caption{Test loss over epochs.}
        \label{fig:appendix-ablation-loss}
    \end{subfigure}
    \caption{Ablation study for Muon-IGT on CIFAR-10 with ResNet-18.}
    \label{fig:appendix-ablation}
\end{figure}

\subsubsection{Robustness to Learning Rate and Weight Decay}
To assess robustness to hyperparameter choice, we measure the final test accuracy of Muon-IGT over multiple learning-rate and weight-decay combinations. For weight decay, we use the values listed in Table~\ref{tab:img-cls-search}. For the learning rate, after identifying the best-performing value in the initial hyperparameter search, we construct a local grid from nearby powers of two, resulting in \(\{2^{-15},2^{-14},\ldots,2^{-7}\}\). %
We consider two representative values of \(\beta_1\), namely \(0.9\) and \(0.95\), while keeping the remaining settings fixed. Each entry in Figure~\ref{fig:appendix-robustness} reports the final test accuracy obtained by a single pair of learning rate and weight decay.

\begin{figure}[t]
    \centering
    \begin{subfigure}[b]{0.49\textwidth}
        \centering
        \includegraphics[width=\linewidth]{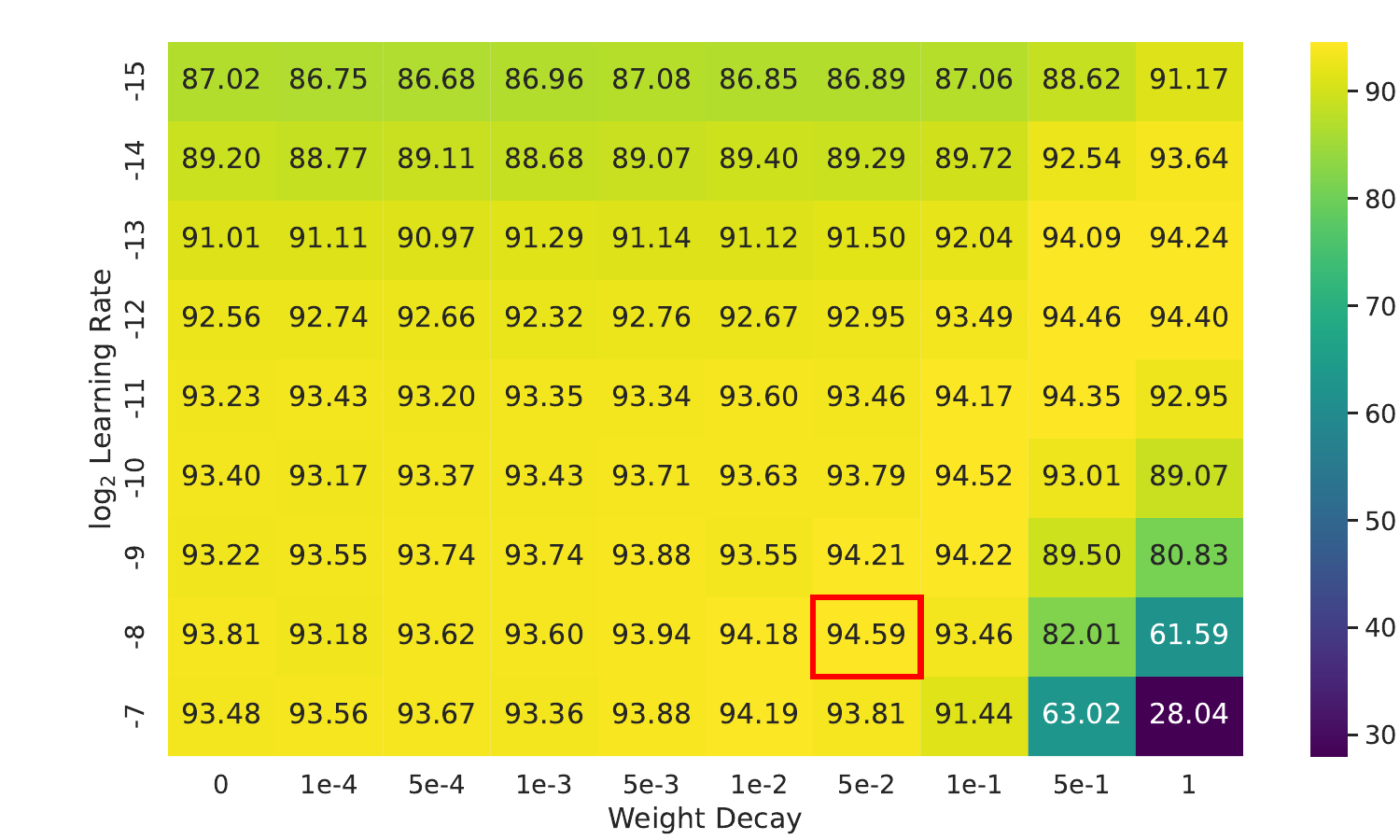}
        \caption{\(\beta_1=0.9\)}
        \label{fig:appendix-robustness-b09}
    \end{subfigure}
    \hfill
    \begin{subfigure}[b]{0.49\textwidth}
        \centering
        \includegraphics[width=\linewidth]{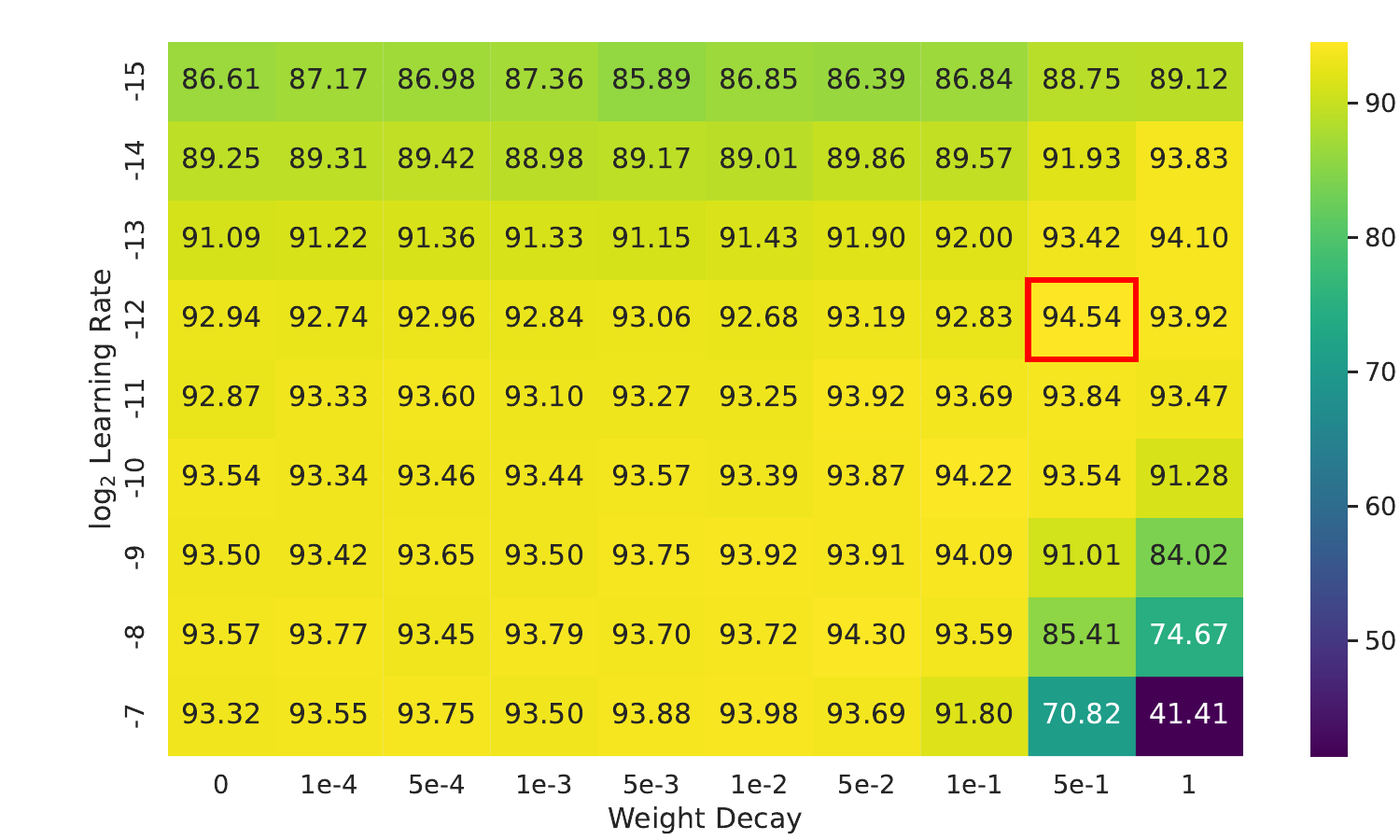}
        \caption{\(\beta_1=0.95\)}
        \label{fig:appendix-robustness-b095}
    \end{subfigure}
    \caption{Final test accuracy of Muon-IGT on CIFAR-10 with ResNet-18 as a function of learning rate and weight decay under two choices of \(\beta_1\).}
    \label{fig:appendix-robustness}
\end{figure}

Figure~\ref{fig:appendix-robustness} shows broad high-accuracy regions for both \(\beta_1=0.9\) and \(\beta_1=0.95\), indicating that Muon-IGT performs well over a wide range of learning rates and weight decays. This suggests that the empirical gain of Muon-IGT is not confined to a narrowly tuned hyperparameter configuration, but remains stable across a broad portion of the search space.

\subsection{Language Modeling}\label{app:lang_model}

{We next evaluate the methods on language modeling using the Shakespeare dataset and nanoGPT. For each optimizer, we select hyperparameters based on the final test loss. Compared with the image-classification setting in Appendix \ref{app:additional-experiments}, some baselines are omitted because their performance is substantially worse.}

As shown in Figure~\ref{fig:appendix-shakespeare-b098}, Muon-IGT outperforms the baselines in both training loss and test loss from an early stage of optimization. This indicates that the advantage of Muon-IGT persists even under a smaller momentum parameter.

\begin{figure}[!htb]
    \centering
    \begin{subfigure}[b]{0.49\textwidth}
        \centering
        \includegraphics[width=\linewidth]{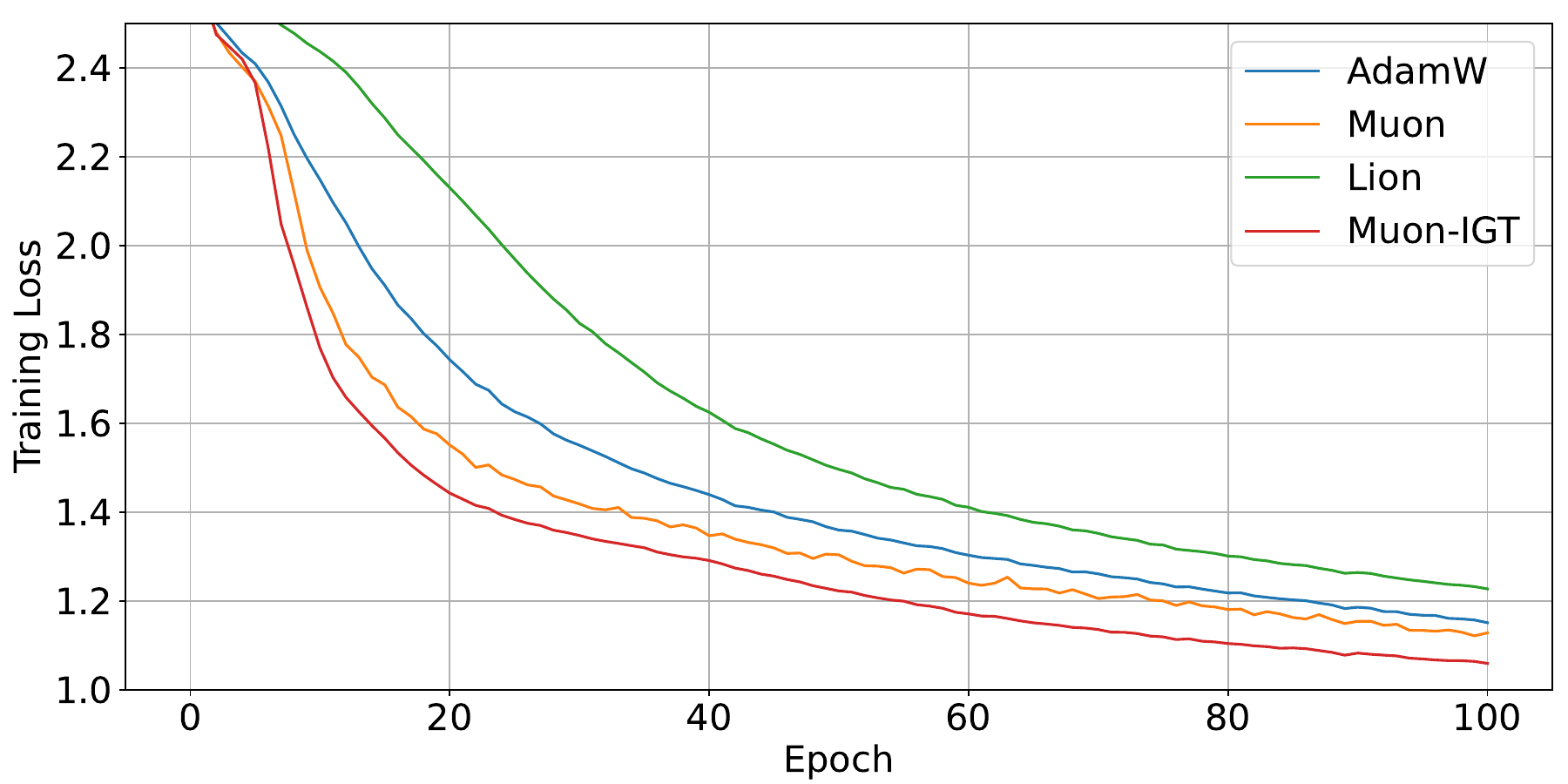}
        \caption{Training loss over steps.}
        \label{fig:appendix-shakespeare-b098-train}
    \end{subfigure}
    \hfill
    \begin{subfigure}[b]{0.49\textwidth}
        \centering
        \includegraphics[width=\linewidth]{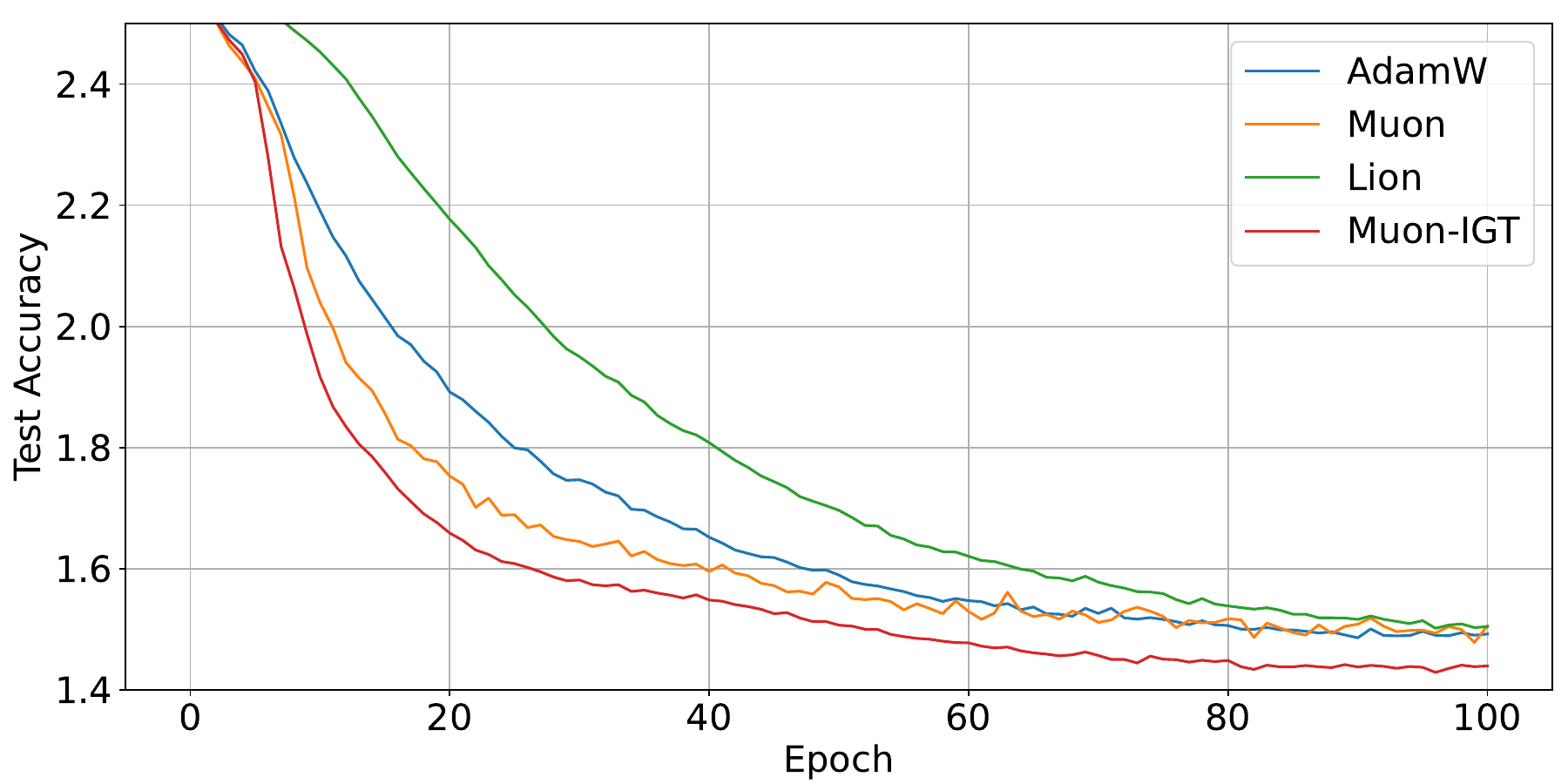}
        \caption{Test loss over steps.}
        \label{fig:appendix-shakespeare-b098-test}
    \end{subfigure}
    \caption{Language modeling results on Shakespeare with nanoGPT (10M)}
    \label{fig:appendix-shakespeare-b098}
\end{figure}

\subsection{Large-scale Language Modeling} \label{subsec:larger}

To examine whether Muon-IGT scales to larger language-modeling tasks, we further evaluate it on OpenWebText using a 124M-parameter nanoGPT model, rather than the 10M model used in Appendix~\ref{app:lang_model}.

\begin{figure}[!htb]
    \centering
    \begin{subfigure}[b]{0.49\textwidth}
        \centering
        \includegraphics[width=\linewidth]{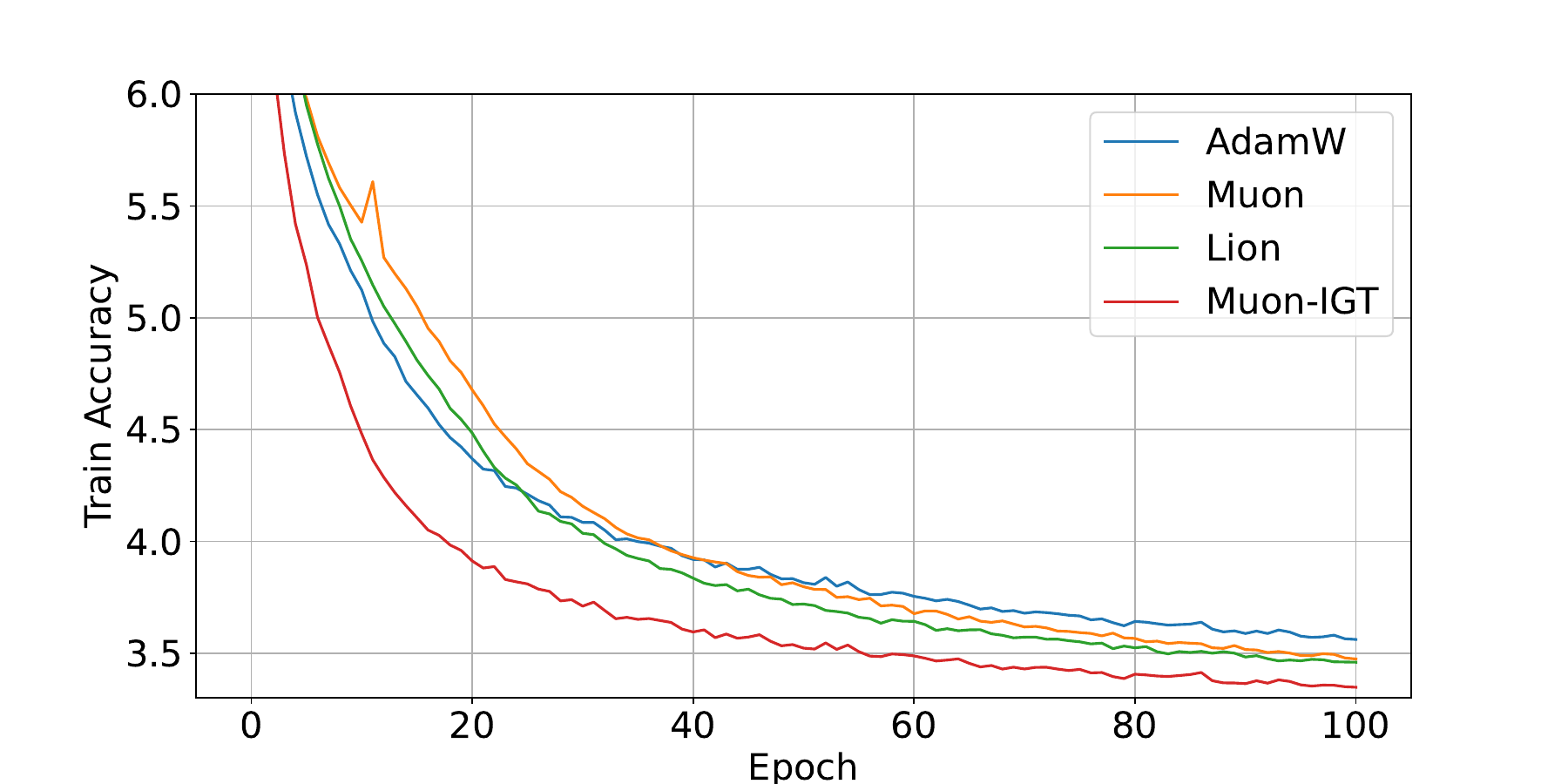}
        \caption{Training loss over steps.}
        \label{fig:appendix-owt-train}
    \end{subfigure}
    \hfill
    \begin{subfigure}[b]{0.49\textwidth}
        \centering
        \includegraphics[width=\linewidth]{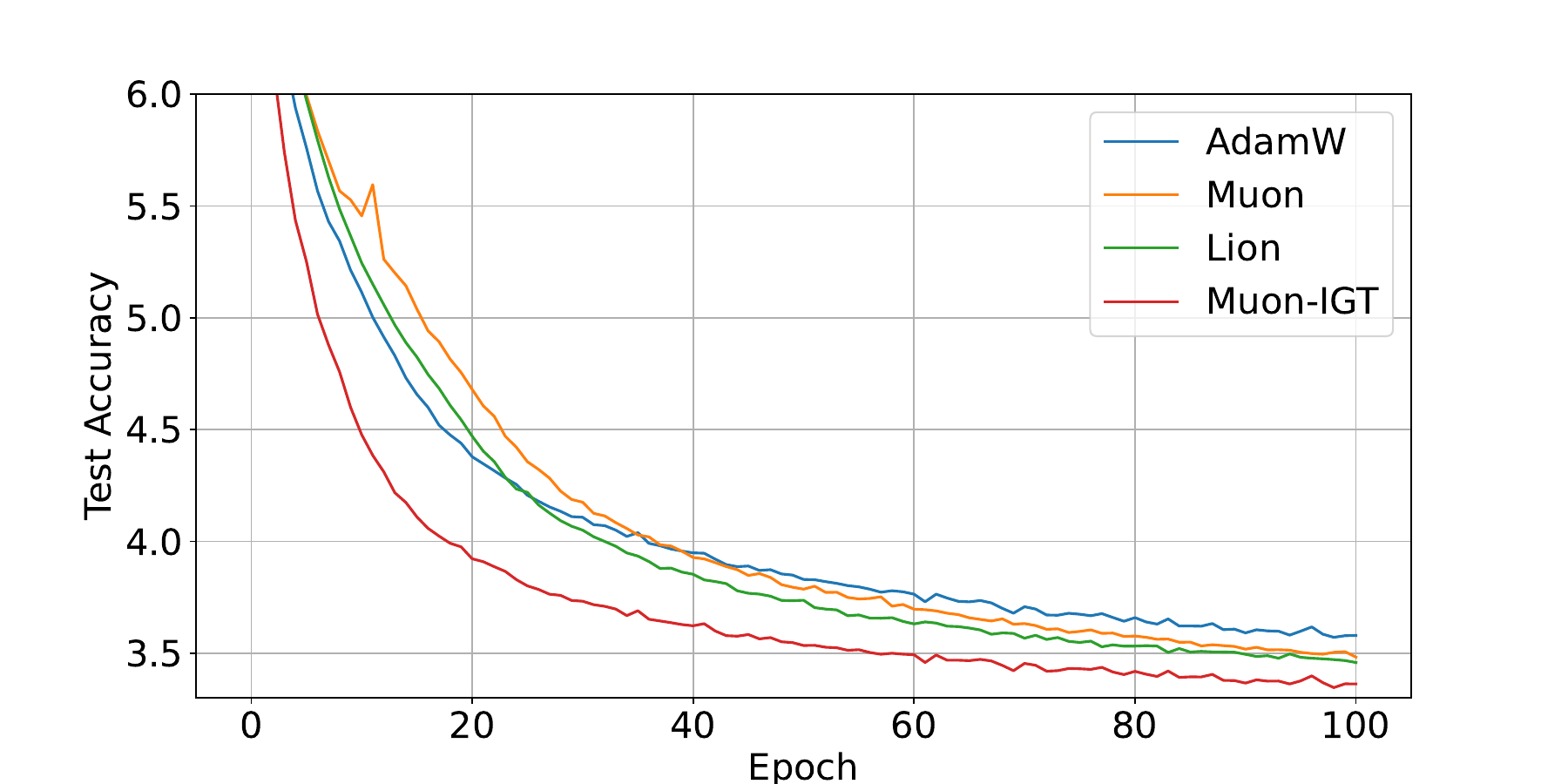}
        \caption{Test loss over steps.}
        \label{fig:appendix-owt-test}
    \end{subfigure}
    \caption{Large-scale language modeling results on OpenWebText with nanoGPT (124M).}
    \label{fig:appendix-owt}
\end{figure}

We observe the same qualitative trend as in the smaller-scale language-modeling experiment. Muon-IGT outperforms the baselines in both training loss and test loss from the early stage of training, suggesting that the method scales favorably to larger language-modeling problems.

\section{Limitations}
\label{sec:limitations}
Our results should be interpreted within the scope of the present theoretical setting. The convergence guarantees for stochastic LMO and LMO-VR rely on standard smoothness and stochastic-noise assumptions, while the improved guarantee for LMO-IGT further leverages second-order smoothness through the transported-gradient construction. Although these assumptions are standard in stochastic optimization, they remain idealized relative to modern large-scale deep learning practice. Accordingly, our theory is best viewed as a principled rate comparison within a canonical analytical regime, rather than a complete characterization of optimizer behavior in practical training.

Another limitation is that the analysis focuses on expected first-order stationarity measured by the regularized support function. We do not establish stronger guarantees such as high-probability bounds, general last-iterate convergence, or a unified treatment of approximate or biased LMOs. Extending the framework in these directions would further clarify the robustness and scope of the RSF perspective.

These limitations do not diminish the main contribution of the paper, namely that transported gradients extend beyond Euclidean normalization to general LMO-based updates under a unified stationarity framework. Rather, these limitations delineate the scope of our theoretical claims and highlight the settings in which our unified perspective is most informative.

\section{Broader Impact}
\label{sec:broader-impact}
This work studies general-purpose optimization methods for training machine learning models. Its primary impact is methodological: a better understanding of LMO-based optimizers may lead to more compute- and memory-efficient training, and to more consistent comparisons across constrained and unconstrained formulations. If such gains persist at scale, they could reduce experimental cost and broaden access to model training. However, our results do not support strong claims about net energy savings in realistic deployments.

As with most advances in optimization, improved training efficiency may lower the cost of building more capable models, potentially amplifying both beneficial applications and undesirable uses. Since our contribution is foundational, its broader societal impact depends on how downstream models are developed and deployed.

Finally, by emphasizing unified stationarity measures together with explicit consideration of computational overhead, our work encourages more transparent and consistent evaluation of optimization methods.

\end{document}